\documentclass{article}

 \usepackage[final]{nips_2017}

\usepackage[utf8]{inputenc} \usepackage[T1]{fontenc}    \usepackage{url}            \usepackage{booktabs}       \usepackage{amsfonts}       \usepackage{nicefrac}       \usepackage{microtype}      \usepackage{times}
\usepackage{mathtools}
\usepackage{graphicx} \usepackage{subcaption}

\usepackage{algorithm}
\usepackage{algorithmic}
\usepackage{multirow}
\usepackage{amsthm}
\usepackage{mathrsfs}
\usepackage{graphicx}
\usepackage{caption}\usepackage{xspace}
\usepackage{float}
\usepackage{wrapfig}
\usepackage{enumitem}
\usepackage{wrapfig}
\setitemize{noitemsep,topsep=0pt,parsep=2pt,partopsep=0pt}
\usepackage{natbib}

\usepackage{multibib} \newcites{sup}{Supplementary References}
\usepackage{hyperref}       \usepackage[font = footnotesize]{caption}

\def\R{{\mathbb{R}}}



\def\to{{\,\rightarrow\,}}

\mathchardef\mhyphen="2D



\newcommand{\norm}[1]{{ \left\lVert#1\right\rVert }}

\newcommand{\vertiii}[1]{{\left\vert\kern-0.25ex\left\vert\kern-0.25ex\left\vert #1
    \right\vert\kern-0.25ex\right\vert\kern-0.25ex\right\vert}}



\newcommand{\vect}[1]{{\boldsymbol{#1}}}





\def\vA{{\vect{A}}}

\def\ba{{\mathbf{a}}}
\def\bb{{\mathbf{b}}}

\def\bd{{\mathbf{d}}}
\def\be{{\mathbf{e}}}

\def\br{{\mathbf{r}}}
\def\bs{{\mathbf{s}}}

\def\bv{{\mathbf{v}}}

\def\bx{{\mathbf{x}}}
\def\by{{\mathbf{y}}}
\def\bz{{\mathbf{z}}}

\def\bM{{\mathbf{M}}}

\def\0{{\mathbf{0}}}





\def\bbR{{\mathbb{R}}}

\def\cA{\mathcal{A}}
\def\cB{\mathcal{B}}

\def\cH{\mathcal{H}}

\def\cK{\mathcal{K}}

\def\cQ{\mathcal{Q}}

\def\cS{\mathcal{S}}
\def\cT{\mathcal{T}}





\newtheorem*{rep@theorem}{\rep@title}
\newcommand{\newreptheorem}[2]{%
\newenvironment{rep#1}[1]{%
 \def\rep@title{#2 \ref{##1}}%
 \begin{rep@theorem}}%
 {\end{rep@theorem}}}
\newreptheorem{lemma}{Lemma'}
\newreptheorem{definition}{Definition'}
\newreptheorem{proposition}{Proposition'}
\newreptheorem{theorem}{Theorem'}

\newtheorem{theorem}{Theorem}
\newtheorem{lemma}[theorem]{Lemma}

\renewcommand{\text}[1]{{\textnormal{#1}}}



\DeclareMathOperator{\cone}{cone}
\DeclareMathOperator*{\argmin}{arg\,min}
\DeclareMathOperator*{\argmax}{arg\,max}
\DeclareMathOperator{\conv}{conv}
\DeclareMathOperator{\faces}{faces}
\DeclareMathOperator{\gfaces}{g-faces}
\DeclareMathOperator{\lin}{lin}
\DeclareMathOperator{\diam}{\mathrm{diam}}
\DeclareMathOperator{\radius}{\mathrm{radius}}

\newcommand{\lmo}{\textsc{LMO}\xspace}

\DeclareMathOperator{\mdw}{mDW(\cA)}

\DeclareMathOperator{\cw}{CWidth(\cA)}

\newcommand{\MP}{{\textsf{\tiny MP}}}

\newcommand{\domain}{\cQ}
\newcommand{\Cf}{C_{f,\cA}}

\newcommand{\CfMP}{C_{f,\cA}^\MP}
\newcommand{\CfMPr}{C_{f,\rho (\cA\cup-\cA)}^\MP}
\newcommand{\CfMPrPW}{C_{f,\rho (\cA\cup-\cA)}^\text{A}}
\newcommand{\CfMPPW}{C_{f,\cA}^\text{A}}

\newcommand{\mufr}{\mu_{f, \rho \cA}^A}

\title{Greedy Algorithms for Cone Constrained Optimization with Convergence Guarantees}

\author{
  Francesco Locatello\\
MPI for Intelligent Systems - ETH Zurich\\
  \texttt{\small{locatelf@ethz.ch}} \\
  \And
  Michael Tschannen\\
  ETH Zurich\\
  \texttt{\small{michaelt@nari.ee.ethz.ch}}\\
  \AND
  Gunnar R{\"a}tsch\\
  ETH Zurich\\
  \texttt{\small{raetsch@inf.ethz.ch}}
  \And 
  Martin Jaggi\\
  EPFL\\
  \texttt{\small{martin.jaggi@epfl.ch}}
 }

\begin{document}

\maketitle

\begin{abstract} 
Greedy optimization methods such as Matching Pursuit (MP) and Frank-Wolfe (FW) algorithms regained popularity in recent years due to their simplicity, effectiveness and theoretical guarantees. 
MP and FW address optimization over the \textit{linear span} and the \textit{convex hull} of a set of atoms, respectively. 
In this paper, we consider the intermediate case of optimization over the \textit{convex cone}, parametrized as the conic hull of a generic atom set, leading to the first principled definitions of non-negative MP algorithms for which we give explicit convergence rates and demonstrate excellent empirical performance. In particular, we derive sublinear ($\mathcal{O}(1/t)$) convergence on general smooth and convex objectives, and linear convergence ($\mathcal{O}(e^{-t})$) on strongly convex objectives, in both cases for general sets of atoms. Furthermore, we establish a clear correspondence of our algorithms to known algorithms from the MP and FW literature. Our novel algorithms and analyses target general atom sets and general objective functions, and hence are directly applicable to a large variety of learning settings.  
\end{abstract} 

\section{Introduction}

In recent years, greedy optimization algorithms have attracted significant interest in the domains of signal processing and machine learning thanks to their ability to process very large data sets. Arguably two of the most popular representatives are Frank-Wolfe (FW) \cite{Frank1956bt,Jaggi:2013wg} and Matching Pursuit (MP) algorithms \cite{Mallat:1993gu}, in particular Orthogonal MP (OMP) \cite{chen1989orthogonal, Tropp:2004gc}. While the former targets minimization of a convex function over \textit{bounded convex sets}, the latter apply to minimization over a \textit{linear subspace}. In both cases, the domain is commonly parametrized by a set of atoms or dictionary elements, and in each iteration, both algorithms rely on querying a so-called \emph{linear minimization oracle} (\lmo) to find the direction of steepest descent in the set of atoms. The iterate is then updated as a \textit{linear} or \textit{convex combination}, respectively, of previous iterates and the newly obtained atom from the \lmo. The particular choice of the atom set allows to encode structure such as sparsity and non-negativity (of the atoms) into the solution. This enables control of the trade-off between the amount of structure in the solution and approximation quality via the number of iterations, which was found useful in a large variety of use cases including structured matrix and tensor factorizations \cite{wang2014matrixcompletion,Yang:2015wy, yaogreedy,guo2017efficient}.

In this paper, we target an important ``intermediate case'' between the two domain parameterizations given by the \textit{linear span} and the \textit{convex hull} of an atom set, namely the parameterization of the optimization domain as the \emph{conic hull} of a possibly infinite atom set. In this case, the solution can be represented as a \emph{non-negative} linear combination of the atoms, which is desirable in many applications, e.g., due to the physics underlying the problem at hand, or for the sake of interpretability. Concrete examples include unmixing problems \cite{esser2013method, gillis2016fast,behr2013mitie}, model selection \cite{makalic2011logistic}, and matrix and tensor factorizations \cite{berry2007algorithms, kim2012fast}. However, existing convergence analyses do not apply to the currently used greedy algorithms. 
In particular, all existing MP variants for the conic hull case \cite{bruckstein2008uniqueness,ID52513,Yaghoobi:2015ff} are not guaranteed to converge and may get stuck far away from the optimum (this can be observed in the experiments in Section \ref{sec:experiments}). 
From a theoretical perspective, this intermediate case is of paramount interest in the context of MP and FW algorithms. Indeed, the atom set is not guaranteed to contain an atom aligned with a descent direction for all possible suboptimal iterates, as is the case when the optimization domain is the linear span or the convex hull of the atom set \cite{pena2015polytope,locatello2017unified}. 
Hence, while conic constraints have been widely studied in the context of a manifold of different applications, none of the existing greedy algorithms enjoys explicit convergence rates. 

We propose and analyze new MP algorithms tailored for the minimization of smooth convex functions over the conic hull of an atom set. Specifically, our key contributions are:
\begin{itemize}
\item We propose the first (non-orthogonal) MP algorithm for optimization over conic hulls guaranteed to converge, and prove a corresponding \textit{sublinear} convergence rate with \textit{explicit constants}. Surprisingly, convergence is achieved without increasing computational complexity compared to ordinary MP.
\item We propose new away-step, pairwise, and fully corrective MP variants, inspired by variants of FW \cite{LacosteJulien:2015wj} and generalized MP \cite{locatello2017unified}, respectively, that allow for different degrees of weight corrections for previously selected atoms. We derive corresponding sublinear and linear (for strongly convex objectives) convergence rates that solely depend on the geometry of the atom set. 
\item All our algorithms apply to general smooth convex functions. This is in contrast to all prior work on non-negative MP, which targets quadratic objectives \cite{bruckstein2008uniqueness, ID52513, Yaghoobi:2015ff}. Furthermore, if the conic hull of the atom set equals its linear span, we recover both algorithms and rates derived in \cite{locatello2017unified} for generalized MP variants. \item We make no assumptions on the atom set which is simply a subset of a Hilbert space, in particular we do not assume the atom set to be finite. \end{itemize}

Before presenting our algorithms (Section~\ref{sec:generalgreedy}) along with the corresponding convergence guarantees (Section~\ref{sec:convrates}), we briefly review generalized MP variants. A detailed discussion of related work can be found in Section~\ref{sec:relwork} followed by illustrative experiments on a least squares problem on synthetic data, and non-negative matrix factorization as well as non-negative garrote logistic regression as applications examples on real data (numerical evaluations of more applications and the dependency between constants in the rate and empirical convergence can be found in the supplementary material).

\paragraph{Notation.} 
Given a non-empty subset $\cA$ of some Hilbert space, let $\conv(\cA)$ be the convex hull of~$\cA$, and let $\lin(\cA)$ denote its linear span. Given a closed set $\cA$, we call its diameter $\diam(\cA)=\max_{\bz_1,\bz_2\in\cA}\|\bz_1-\bz_2\|$ and its radius $\radius(\cA) = \max_{\bz\in\cA}\|\bz\|$.  $\|\bx\|_\cA := \inf \{ c > 0 \colon \bx \in c \cdot \conv (\cA) \}$ is the atomic norm of $\bx$ over a set $\cA$ (also known as the gauge function of $\conv (\cA)$). We call a subset $\cA$ of a Hilbert space symmetric if it is closed under negation.
\section{Review of Matching Pursuit Variants} \label{sec:mprev}
Let $\cH$ be a Hilbert space with associated inner product $\langle \bx, \by\rangle, \,\forall \, \bx,\by \in \cH$. The inner product induces the norm $\| \bx \|^2 := \langle \bx, \bx \rangle,$ $\forall \, \bx \in \cH$. Let $\cA \subset \cH$ be a compact set (the ``set of atoms'' or dictionary) and let $f \colon \cH \to \bbR$ be convex and $L$-smooth ($L$-Lipschitz gradient in the finite dimensional case). If $\cH$ is an infinite-dimensional Hilbert space, then $f$ is assumed to be \emph{Fr{\'e}chet differentiable}. The generalized MP algorithm studied in  \cite{locatello2017unified}, presented in Algorithm~\ref{algo:generalgreedy}, solves the following optimization problem:\vspace{-2mm}
\begin{equation}\label{eq:MPproblem}
\min_{\bx\in\lin(\cA)} f(\bx).
\end{equation}
\vspace{-2mm}
\begin{wrapfigure}{L}{0.51\textwidth}
    \begin{minipage}{0.51\textwidth}
    \vspace{-0.5cm}
\begin{algorithm}[H]
\caption{Norm-Corrective Generalized Matching Pursuit}
  \label{algo:generalgreedy}
\begin{algorithmic}[1]
  \STATE \textbf{init} $\bx_{0} \in \lin(\cA)$, and $\cS:=\{\bx_{0}\}$
  \STATE \textbf{for} {$t=0\dots T$}
  \STATE \quad Find $\bz_t := (\text{Approx-}) \lmo_\cA(\nabla f(\bx_{t}))$
  \STATE \quad $\cS:=\cS\cup\{ \bz_t \}$\
  \STATE \quad $\text{ }$Let $ \bb := \bx_{t}-\frac1L \nabla f(\bx_{t})$ \\
\STATE \quad Variant 0:  \\ \qquad Update $\bx_{t+1}:= \displaystyle\argmin_{\substack{\bz := \bx_{t} + \gamma \bz_t \\ \gamma \in \R}} \!\|{\bz-\bb}\|^2$\\
\STATE \quad Variant 1: \\ \qquad Update $\bx_{t+1}:= \displaystyle\argmin_{\bz\in \lin(\cS)} \|{\bz-\bb}\|^2$
  \STATE \quad \emph{Optional:} Correction of some/all atoms $\bz_{0\ldots t}$
  \STATE \textbf{end for}
\end{algorithmic}
\end{algorithm}
\vspace{-0.2mm}
\end{minipage}
\end{wrapfigure}

In each iteration, MP queries a linear minimization oracle (\lmo) solving the following linear problem:
\begin{equation}\label{eq:FWLMO}
\lmo_\cA(\by) := \argmin_{\bz\in\cA} \,\langle \by, \bz \rangle
\end{equation}
for a given query $\by\in\cH$. The MP update step minimizes a quadratic upper bound $
 g_{\bx_{t}}(\bx) = f(\bx_{t}) + \langle\nabla f(\bx_{t}), \bx-\bx_{t}\rangle+\frac{L}{2}\|\bx-\bx_{t}\|^2
$
of $f$ at $\bx_t$, where $L$ is an upper bound on the smoothness constant of $f$ with respect to a chosen norm $\|\cdot\|$. Optimizing this norm problem instead of $f$ directly allows for substantial efficiency gains in the case of complicated $f$. 
For symmetric $\cA$ and for $f(\bx) = \frac{1}{2}\|\by -\bx\|^2$, $\by \in \cH$, Algorithm~\ref{algo:generalgreedy} recovers MP (Variant~0) \cite{Mallat:1993gu} and OMP (Variant~1) \cite{chen1989orthogonal, Tropp:2004gc}, see \cite{locatello2017unified} for details. 
\paragraph{Approximate linear oracles.} 
\label{sec:approxlmo}
Solving the \lmo defined in \eqref{eq:FWLMO} exactly is often hard in practice, in particular when applied to matrix (or tensor) factorization problems, while approximate versions can be much more efficient. Algorithm~\ref{algo:generalgreedy} allows for an \emph{approximate \lmo}. For given quality parameter $\delta\in \left( 0,1\right]$ and given direction $\bd\in\cH$, the approximate \lmo for Algorithm~\ref{algo:generalgreedy} returns a vector $\tilde{\bz}\in\cA$ such that
\begin{equation}\label{eq:inexactLMOMP}
 \langle \bd,\tilde{\bz}\rangle \leq \delta\langle \bd,\bz\rangle, \end{equation} 
relative to $\bz =\lmo_\cA(\bd)$ being an exact solution.
\paragraph{Discussion and limitations of MP.}
The analysis of the convergence of Algorithm~\ref{algo:generalgreedy} in \cite{locatello2017unified} critically relies on the assumption that the origin is in the relative interior of $\conv(\cA)$ with respect to its linear span. 
This assumption originates from the fact that the convergence of MP- and FW-type algorithms fundamentally depends on an \emph{alignment assumption} of the search direction returned by the \lmo (i.e., $\bz_t$ in Algorithm~\ref{algo:generalgreedy}) and the gradient of the objective at the current iteration (see \textit{third premise} in \cite{pena2015polytope}). Specifically, for Algorithm~\ref{algo:generalgreedy}, the $\lmo$ is assumed to select a descent direction, i.e., $\langle \nabla f(\bx_t), \bz_t \rangle < 0$, so that the resulting weight (i.e., $\gamma$ for Variant 0) is always positive. In this spirit, Algorithm~\ref{algo:generalgreedy} is a natural candidate to minimize $f$ over the conic hull of $\cA$. 
However, if the optimization domain is a cone, the alignment assumption does not hold as there may be non-stationary points $\bx$ in the conic hull of $\cA$ for which $\min_{\bz\in\cA}\langle \nabla f(\bx),\bz\rangle = 0$. Algorithm~\ref{algo:generalgreedy} is therefore not guaranteed to converge when applied to conic problems. The same issue arises for essentially all existing non-negative variants of MP, see, e.g., Alg. 2 in \cite{ID52513} and in Alg. 2 in \cite{Yaghoobi:2015ff}. We now present modifications corroborating this issue along with the resulting MP-type algorithms for conic problems and corresponding convergence guarantees.
\section{Greedy Algorithms on Conic Hulls}
\label{sec:generalgreedy}
The cone $\cone(\cA-\by)$ tangent to the convex set $\conv(\cA)$ at a point $\by$  is formed by the half-lines emanating from $\by$ and intersecting $\conv(\cA)$ in at least one point distinct from $\by$. 
Without loss of generality we consider $\0\in\cA$ and assume the set $\cone(\cA)$ (i.e., $\by=\0$) to be closed. If $\cA$ is finite the cone constraint can be written as
$
\cone(\cA):=\lbrace\bx: \bx = \sum_{i=1}^{|\cA|}\alpha_i\ba_i \;\;  \text{s.t.} \; \ba_i\in\cA, \ \alpha_i\geq 0 ~\forall i\rbrace\!.
$
We consider conic optimization problems of the form:
\begin{align} \label{eq:coneprob}
\min_{\bx\in\cone(\cA)} f(\bx). 
\end{align} 
Note that if the set $\cA$ is symmetric or if the origin is in the relative interior of $\conv(\cA)$ w.r.t. its linear span then $\cone(\cA) = \lin(\cA)$. We will show later how our results recover known MP rates when the origin is in the relative interior of $\conv(\cA)$.

As a first algorithm to solve problems of the form~\eqref{eq:coneprob}, we present the Non-Negative Generalized Matching Pursuit (NNMP) in Algorithm~\ref{algo:NNMP} which is an extension of MP to general $f$ and non-negative weights.

\paragraph{Discussion:}
Algorithm~\ref{algo:NNMP} differs from Algorithm~\ref{algo:generalgreedy} (Variant 0) in line~4, adding the iteration-dependent atom $-\frac{\bx_t}{\|\bx_t\|_\cA}$ to the set of possible search directions\footnote{This additional direction makes sense only if $\bx_t\neq \0$. Therefore, we set $-\frac{\bx_t}{\|\bx_t\|_\cA}=0$ if $\bx_t=0$, i.e., no direction is added.}. We use the atomic norm for the normalization because it yields the best constant in the convergence rate. In practice, one can replace it with the Euclidean norm, which is often much less expensive to compute.  This iteration-dependent additional search direction allows to reduce the weights of the atoms that were previously selected, thus admitting the algorithm to ``move back'' towards the origin while maintaining the cone constraint. This idea is informally explained here and formally studied in Section~\ref{sec:sublin}. 

\begin{figure}
\begin{minipage}{0.47\textwidth}
    \vspace{-0.3cm}
\begin{algorithm}[H]
\caption{Non-Negative Matching Pursuit}
\label{algo:NNMP}
\begin{algorithmic}[1]
  \STATE \textbf{init} $\bx_{0} = \0 \in \cA$
  \STATE \textbf{for} {$t=0\dots T$}
  \STATE \quad Find $\bar{\bz}_t := (\text{Approx-}) \lmo_{\cA}(\nabla f(\bx_{t}))$
   \STATE \quad $\bz_t = \argmin_{\bz\in \left\lbrace\bar{\bz}_t,\frac{-\bx_t}{\|\bx_t\|_\cA}\right\rbrace}\langle \nabla f(\bx_t),\bz\rangle$ 
  \STATE \quad $\gamma := \frac{\langle -\nabla f(\bx_t), \bz_t   \rangle}{L\|\bz_t\|^2}$
  \STATE \quad   Update $\bx_{t+1}:= \bx_t + \gamma \bz_t$
  \STATE \textbf{end for}
\end{algorithmic}
\end{algorithm}
\vspace{-0.3cm}
\end{minipage}
\hspace{2mm}
\begin{minipage}{0.5\textwidth}
\vspace{0mm}
\includegraphics[scale=0.32]{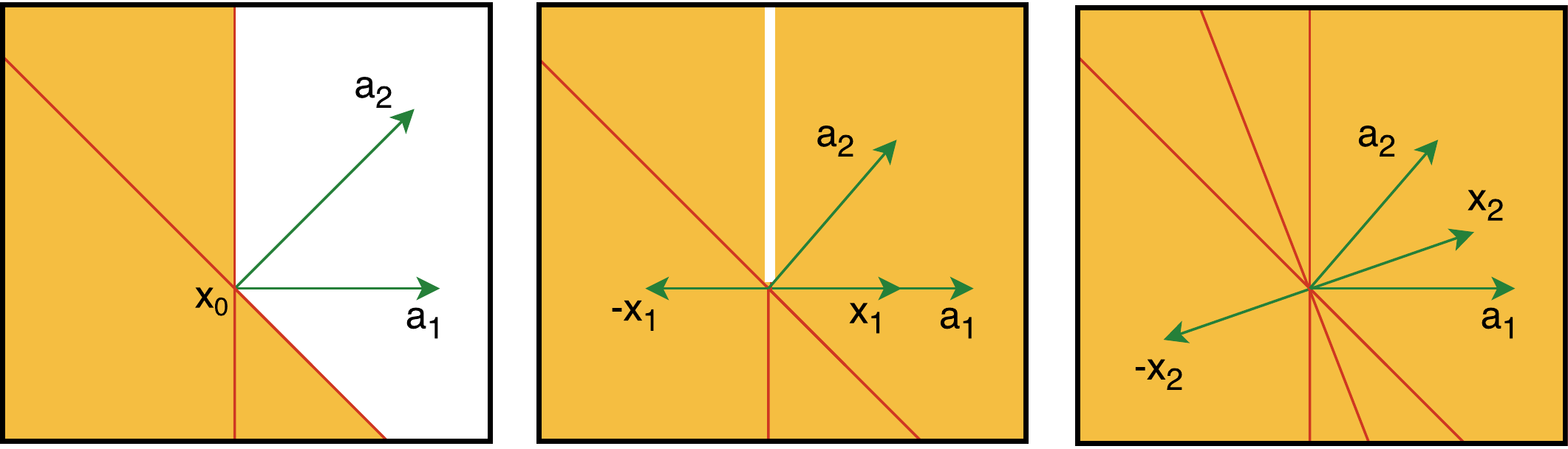}
\hspace{2mm}\captionof{figure}{Two dimensional example for $T_\cA(\bx_t)$ where $\cA = \lbrace \ba_1,\ba_2\rbrace$, for three different iterates $\bx_0,\ \bx_1$ and $\bx_2$. The shaded area corresponds to $T_\cA(\bx_t)$ and the white area to $\lin(\cA)\setminus T_\cA(\bx_t)$.}\label{pic:cone}
\vspace{-2mm}
\end{minipage}
\vspace{-5mm}
\end{figure}

Recall the alignment assumption of the search direction and the gradient of the objective at the current iterate discussed in Section~\ref{sec:mprev} (see also \cite{pena2015polytope}). 
Algorithm \ref{algo:NNMP} obeys this assumption. The intuition behind this is the following. Whenever $\bx_t$ is not a minimizer of~\eqref{eq:coneprob} and $\min_{\bz\in\cA}\langle \nabla f(\bx_t),\bz\rangle = 0$, the vector $-\frac{\bx_t}{\|\bx_t\|_\cA}$ is aligned with $\nabla f(\bx_t)$ (i.e., $\langle \nabla f(\bx_t),-\frac{\bx_t}{\|\bx_t\|_\cA} \rangle < 0$), preventing the algorithm from stopping at a suboptimal iterate.
To make this intuition more formal, let us define the set of feasible descent directions of Algorithm~\ref{algo:NNMP} at a point $\bx\in \cone(\cA)$ as:
\begin{equation}
T_\cA(\bx)  :=  \left\lbrace  \bd\in\cH\!: \exists \bz\in\cA\cup \Big\lbrace  -\frac{\bx}{\|\bx\|_\cA} \Big\rbrace \l \ \text{s.t.} \; \langle \bd,\bz\rangle<0  \right\rbrace .
\end{equation}
If at some iteration $t=0,1,\ldots$ the gradient $\nabla f(\bx_t)$ is not in $T_\cA(\bx_t)$ Algorithm~\ref{algo:NNMP} terminates as $\min_{\bz\in\cA}\langle \bd,\bz\rangle = 0$ and $\langle \bd,-\bx_t\rangle\geq 0$ (which yields $\bz_t = 0$).
Even though, in general, not every direction in $\cH$ is a feasible descent direction, $\nabla f(\bx_t) \notin T_\cA$ only occurs if $\bx_t$ is a constrained minimum of Equation~\ref{eq:coneprob}:

\begin{lemma}\label{lemma:originOpt}
If $\tilde\bx \in \cone(\cA)$ and $\nabla f(\tilde\bx)\not\in T_\cA$ then $\tilde\bx$ is a solution to $\min_{\bx\in\cone(\cA)}f(\bx)$.
\end{lemma}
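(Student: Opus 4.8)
The plan is to show that if $\nabla f(\tilde\bx) \notin T_\cA(\tilde\bx)$, then $\tilde\bx$ satisfies the first-order optimality condition for the convex problem $\min_{\bx\in\cone(\cA)} f(\bx)$, namely that $\langle \nabla f(\tilde\bx), \bx - \tilde\bx \rangle \geq 0$ for all $\bx \in \cone(\cA)$. Since $f$ is convex and $\cone(\cA)$ is a convex set, this variational inequality is both necessary and sufficient for global optimality, so establishing it finishes the proof.

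First I would unpack the hypothesis $\nabla f(\tilde\bx)\notin T_\cA(\tilde\bx)$. By definition of $T_\cA(\tilde\bx)$, this means that for \emph{every} $\bz \in \cA \cup \{-\tilde\bx/\|\tilde\bx\|_\cA\}$ we have $\langle \nabla f(\tilde\bx), \bz \rangle \geq 0$. In particular, $\langle \nabla f(\tilde\bx), \ba \rangle \geq 0$ for all $\ba \in \cA$, and also $\langle \nabla f(\tilde\bx), -\tilde\bx \rangle \geq 0$, i.e. $\langle \nabla f(\tilde\bx), \tilde\bx \rangle \leq 0$ (the case $\tilde\bx = \0$ is trivial here since then the extra direction is dropped and $\langle \nabla f(\tilde\bx),\tilde\bx\rangle = 0$ anyway).

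Next I would take an arbitrary $\bx \in \cone(\cA)$ and write it, using the conic-hull description, as a non-negative combination $\bx = \sum_i \alpha_i \ba_i$ with $\alpha_i \geq 0$ and $\ba_i \in \cA$ (in the non-finite case one argues by a limiting/closedness argument, or works directly with the fact that $\cone(\cA)$ is the closure of such combinations and $\nabla f(\tilde\bx)$ is continuous). Then by linearity, $\langle \nabla f(\tilde\bx), \bx \rangle = \sum_i \alpha_i \langle \nabla f(\tilde\bx), \ba_i \rangle \geq 0$ since each term is a product of a non-negative scalar and a non-negative inner product. Combining this with $\langle \nabla f(\tilde\bx), \tilde\bx \rangle \leq 0$ gives $\langle \nabla f(\tilde\bx), \bx - \tilde\bx \rangle = \langle \nabla f(\tilde\bx), \bx\rangle - \langle \nabla f(\tilde\bx), \tilde\bx\rangle \geq 0$, which is exactly the optimality condition. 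Invoking convexity of $f$ (so that $f(\bx) \geq f(\tilde\bx) + \langle \nabla f(\tilde\bx), \bx - \tilde\bx\rangle \geq f(\tilde\bx)$) concludes that $\tilde\bx$ is a minimizer.

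The main obstacle is the handling of the infinite / non-finitely-generated atom set: the clean representation $\bx = \sum_i \alpha_i \ba_i$ is only immediate when $\cA$ is finite, so for the general Hilbert space case I would need to use the standing assumption that $\cone(\cA)$ is closed together with the continuity of $\nabla f(\tilde\bx)$ as a linear functional, approximating a general $\bx\in\cone(\cA)$ by finite conic combinations and passing to the limit in the inequality $\langle \nabla f(\tilde\bx),\bx\rangle \geq 0$. A secondary minor point is the degenerate case $\tilde\bx = \0$, where the extra search direction is not added; but there the inequality $\langle \nabla f(\tilde\bx), \tilde\bx\rangle \le 0$ holds with equality, so the argument goes through unchanged.
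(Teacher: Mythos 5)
Your proof is correct and relies on the same essential facts as the paper's: that $\nabla f(\tilde\bx)\notin T_\cA(\tilde\bx)$ forces $\langle \nabla f(\tilde\bx),\ba\rangle\geq 0$ for all $\ba\in\cA$ and $\langle\nabla f(\tilde\bx),\tilde\bx\rangle\leq 0$, that any $\bx\in\cone(\cA)$ is a non-negative combination of atoms so $\langle\nabla f(\tilde\bx),\bx\rangle\geq 0$, and then convexity of $f$. The paper packages this as a proof by contradiction, applied only to the specific point $\bx^\star$, whereas you prove the first-order optimality condition $\langle\nabla f(\tilde\bx),\bx-\tilde\bx\rangle\geq 0$ directly for all $\bx\in\cone(\cA)$ — a slightly cleaner formulation of the same argument, and you are also more careful about the degenerate case $\tilde\bx=\0$ and the infinite atom set, which the paper glosses over.
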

\vspace{-2mm}

Initializing Algorithm~\ref{algo:NNMP} with $\bx_0 = \0$ guarantees that the iterates $\bx_t$ always remain inside $\cone(\cA)$ even though this is not enforced explicitly (by convexity of $f$, see proof of Theorem~\ref{thm:NNMPsublinear} in Appendix~\ref{sec:sublinpf} for details).
\paragraph{Limitations of Algorithm \ref{algo:NNMP}:}
Let us call \textit{active} the atoms which have nonzero weights in the representation of $\bx_t = \sum_{i=0}^{t-1} \alpha_i \bz_i$ computed by Algorithm~\ref{algo:NNMP}. Formally, the set of active atoms is defined as $\cS := \{ \bz_i \colon \alpha_i > 0, i = 0,1,\ldots,t-1 \}$.
The main drawback of Algorithm~\ref{algo:NNMP} is that when the direction $-\frac{\bx_t}{\|\bx_t\|_\cA}$ is selected, the weight of \textit{all} active atoms is reduced. 
This can lead to the algorithm alternately selecting $-\frac{\bx_t}{\|\bx_t\|_\cA}$ and an atom from $\cA$, thereby slowing down convergence in a similar manner as the \textit{zig-zagging} phenomenon well-known in the Frank-Wolfe framework \cite{LacosteJulien:2015wj}. In order to achieve faster convergence we introduce the corrective variants of Algorithm~\ref{algo:NNMP}.
\subsection{Corrective Variants}
To achieve faster (linear) convergence (see Section~\ref{sec:lin}) we introduce variants of Algorithm~\ref{algo:NNMP}, termed Away-steps MP (AMP) and Pairwise MP (PWMP), presented in Algorithm~\ref{algo:AMPPWMP}. 
Here, inspired by the away-steps and pairwise variants of FW \cite{Frank1956bt,LacosteJulien:2015wj}, instead of reducing the weights of the active atoms uniformly as in Algorithm~\ref{algo:NNMP}, the $\lmo$ is queried a second time on the active set $\cS$ to identify the direction of steepest ascent in $\cS$. This allows, at each iteration, to reduce the weight of a previously selected atom (AMP) or swap weight between atoms (PWMP). This selective ``reduction'' or ``swap of weight'' helps to avoid the zig-zagging phenomenon which prevent Algorithm~\ref{algo:NNMP} from converging linearly.

At each iteration, Algorithm~\ref{algo:AMPPWMP} updates the weights of $\bz_t$ and $\bv_t$ as $\alpha_{\bz_t}=\alpha_{\bz_t}+\gamma$ and $\alpha_{\bv_t}=\alpha_{\bv_t}-\gamma$, respectively. To ensure that $\bx_{t+1}\in\cone(\cA)$, $\gamma$ has to be clipped according to the weight which is currently on $\bv_t$, i.e., $\gamma_{\max}=\alpha_{\bv_t}$. If $\gamma = \gamma_{\max}$, we set $\alpha_{\bv_t}=0$ and remove $\bv_t$ from $\cS$ as the atom~$\bv_t$ is no longer active. If $\bd_t \in\cA$ (i.e., we take a regular MP step and not an away step), the line search is unconstrained (i.e., $\gamma_{\max}=\infty$).

For both algorithm variants, the second $\lmo$ query increases the computational complexity. Note that an exact search on $\cS$ is feasible in practice as $|\cS|$ has at most $t$ elements at iteration $t$. 

Taking an additional computational burden allows to update the weights of all active atoms in the spirit of OMP. This approach is implemented in the Fully Corrective MP (FCMP), Algorithm \ref{algo:FCNNMP}.

    \begin{minipage}[t]{0.49\textwidth}
    \vspace{-0.5cm}
\begin{algorithm}[H]
\caption{Away-steps (AMP) and Pairwise (PWMP) Non-Negative Matching Pursuit}
\label{algo:AMPPWMP}
\begin{algorithmic}[1]
  \STATE \textbf{init} $\bx_{0} = \0 \in \cA$, and $\cS:=\{\bx_{0}\}$
  \STATE \textbf{for} {$t=0\dots T$}
  \STATE \quad Find $\bz_t := (\text{Approx-}) \lmo_{\cA}(\nabla f(\bx_{t}))$
  \STATE \quad Find $\bv_t := (\text{Approx-}) \lmo_{\cS}(-\nabla f(\bx_{t}))$
  \STATE \quad $\cS = \cS \cup \bz_t$
  \STATE \quad \textit{AMP:} $\bd_t \!=\! \argmin_{\bd\in\lbrace\bz_t,-\bv_t\rbrace} \!\langle\nabla f(\bx_t),\bd \rangle\!$
  \STATE \quad \textit{PWMP: } $\bd_t = \bz_t-\bv_t$
  \STATE \quad $\gamma := \min\left\lbrace\frac{\langle -\nabla f(\bx_t),\bd_t\rangle}{L\|\bd_t\|^2},\gamma_{\max}\right\rbrace$ \\ \qquad($\gamma_{\max}$ see text)
  \STATE \quad Update $\alpha_{\bz_t}$, $\alpha_{\bv_t}$ and $\cS$ according to $\gamma$ \\ \qquad ($\gamma$ see text)
  \STATE \quad   Update $\bx_{t+1}:= \bx_t + \gamma\bd_t$
  \STATE \textbf{end for}
\end{algorithmic}
\end{algorithm}
\end{minipage}
\hspace{0.2cm}
    \begin{minipage}[t]{0.49\textwidth}
    \vspace{-0.5cm}
\begin{algorithm}[H]
\caption{Fully Corrective Non-Negative Matching Pursuit (FCMP)}
\label{algo:FCNNMP}
\begin{algorithmic}[1]
  \STATE \textbf{init} $\bx_{0} = \0 \in \cA, \cS = \{\bx_0 \}$
  \STATE \textbf{for} {$t=0\dots T$}
  \STATE \quad Find $\bz_t := (\text{Approx-}) \lmo_{\cA}(\nabla f(\bx_{t}))$
  \STATE \quad $\cS := \cS \cup \{ \bz_t\}$
\STATE \quad \textit{Variant 0:} \\
 \qquad $\bx_{t+1} \! =\! \underset{{\bx\in\cone(\cS)}}{\argmin} \!\|\bx - (\bx_t -\frac{1}{L}\nabla f(\bx_t))\|^2\!$
\STATE \quad \textit{Variant 1:} \\ \qquad $\bx_{t+1} = \argmin_{\bx\in\cone(\cS)} f(\bx)$
\STATE \quad Remove atoms with zero weights from $\cS$
  \STATE \textbf{end for}
\end{algorithmic}
\end{algorithm}
\end{minipage}
\vspace{1mm}

At each iteration, Algorithm \ref{algo:FCNNMP} maintains the set of active atoms $\cS$ by adding $\bz_t$ and removing atoms with zero weights after the update. 
In Variant 0, the algorithm minimizes the quadratic upper bound $g_{\bx_{t}}(\bx)$ on $f$ at $\bx_{t}$ (see Section \ref{sec:mprev}) imitating a gradient descent step with projection onto a ``varying'' target, i.e., $\cone(\cS)$.  
In Variant~1, the original objective $f$ is minimized over $\cone(\cS)$ at each iteration, which is in general more efficient than minimizing $f$ over $\cone(\cA)$ using a generic solver for cone constrained problems. 
For $f(\bx) = \frac{1}{2}\|\by -\bx\|^2$, $\by \in \cH$, Variant 1 recovers Algorithm~1 in \cite{Yaghoobi:2015ff} and the OMP variant in \cite{bruckstein2008uniqueness} which both only apply to this specific objective $f$.

\subsection{Computational Complexity} \label{sec:compcompl}
\begin{wraptable}{r}{8.8cm}\vspace{-0.6cm}
\center\begin{tabular}{ l | c c c }
  \emph{algorithm} & \emph{cost per iteration} & \emph{convergence} & $k(t)$ \\
  \hline
  NNMP & $C+O(d)$ & $O(1/t)$& - \\
  PWMP & $C+O(d+td)$ & $O\left(e^{-\beta k(t)}\right)$&$\frac{t}{3|\cA|!+1}$\\
  AMP & $C+O(d+td)$ & $O\left(e^{-\frac\beta2 k(t)}\right)$& $t/2$\\
  FCMP v. 0 & $C+O(d)+h_0$ & $O\left(e^{-\beta k(t)}\right)$& $\frac{t}{3|\cA|!+1}$\\
  FCMP v. 1 & $C+O(d)+h_1$ & $O\left(e^{-\beta k(t)}\right)$& $t$\\
\end{tabular}
\caption{Computational complexity versus convergence rate (see Section \ref{sec:convrates}) for strongly convex objectives}\label{table:complexity}
\vspace{-2mm}
\end{wraptable}
We briefly discuss the computational complexity of the algorithms we introduced. For $\cH = \bbR^d$, sums and inner products have cost $O(d)$. 
Let us assume that each call of the $\lmo$ has cost $C$ on the set $\cA$ and $O(td)$ on $\cS$.
The variants 0 and 1 of FCMP solve a cone problem at each iteration with cost $h_0$ and $h_1$, respectively. In general, $h_0$ can be much smaller than $h_1$.
In Table~\ref{table:complexity} we report the cost per iteration for every algorithm along with the asymptotic convergence rates derived in Section~\ref{sec:convrates}. 
\vspace{-2mm}

\section{Convergence Rates} \label{sec:convrates}
In this section, we present convergence guarantees for Algorithms~\ref{algo:NNMP}, \ref{algo:AMPPWMP}, and \ref{algo:FCNNMP}. All proofs are deferred to the Appendix in the supplementary material.
We  write $\bx^\star \in \argmin_{\bx\in \cone(\cA)} f(\bx)$ for an optimal solution.
Our rates will depend on the atomic norm of the solution and the iterates of the respective algorithm variant: 
\begin{equation}\label{eq:rho}
\rho = \max\left\lbrace \|\bx^\star\|_{\cA}, \|\bx_{0}\|_{\cA}\ldots,\|\bx_T\|_{\cA}\right\rbrace.
\end{equation}
If the optimum is not unique, we consider $\bx^\star$ to be one of largest atomic norm. 
A more intuitive and looser notion is to simply upper-bound $\rho$ by the diameter of the level set of the initial iterate $\bx_0$ measured by the atomic norm. Then, boundedness follows since the presented method is a descent method (due to Lemma \ref{lemma:originOpt} and line search on the quadratic upper bound, each iteration strictly decreases the objective and our method stops only at the optimum). This justifies the statement $f(\bx_t ) \leq f(\bx_0)$. Hence, $\rho$ must be bounded for any sequence of iterates produced by the algorithm, and the convergence rates presented in this section are valid as $T$ goes to infinity. A similar notion to measure the convergence of MP was established in \cite{locatello2017unified}.
All of our algorithms and rates can be made \textit{affine invariant}. We defer this discussion to Appendix~\ref{sec:AffInv}.
\subsection{Sublinear Convergence}\label{sec:sublin}
We now present the convergence results for the non-negative and Fully-Corrective Matching Pursuit algorithms. Sublinear convergence of Algorithm~\ref{algo:AMPPWMP} is addressed in Theorem~\ref{thm:PWMPlinear}.

\begin{theorem}\label{thm:NNMPsublinear}
Let $\cA \subset \cH$ be a bounded set with $\0\in\cA$, $\rho := \max\left\lbrace \|\bx^\star\|_{\cA}, \|\bx_{0}\|_{\cA},\ldots,\|\bx_T\|_{\cA},\right\rbrace $ and $f$ be $L$-smooth over $\rho\conv(\cA\cup-\cA)$.
Then, Algorithms~\ref{algo:NNMP} and \ref{algo:FCNNMP} converge for $t \geq 0$ as 
\[
f(\bx_t) - f(\bx^\star)\leq \frac{4\left(\frac2\delta L\rho^2\radius(\cA)^2+\varepsilon_0\right)}{\delta t+4} ,
\]
where $\delta \in (0,1]$ is the relative accuracy parameter of the employed approximate \lmo (see Equation~\eqref{eq:inexactLMOMP}).
\end{theorem}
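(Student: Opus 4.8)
The plan is to run the standard descent-recursion argument for greedy methods, but with the one twist that the step taken in Algorithm~\ref{algo:NNMP} is chosen from $\{\bar\bz_t, -\bx_t/\|\bx_t\|_\cA\}$ rather than from $\cA$ alone. First I would verify the invariant $\bx_t \in \cone(\cA)$ for all $t$: since $\bx_0 = \0$, and each update is $\bx_{t+1} = \bx_t + \gamma \bz_t$ with $\gamma \geq 0$ (because $\bz_t$ is selected to make $\langle \nabla f(\bx_t), \bz_t\rangle \le 0$, and if it were $0$ the algorithm has stopped at the optimum by Lemma~\ref{lemma:originOpt}), adding a non-negative multiple of either an atom in $\cA$ or of $-\bx_t$ keeps the point in the cone — for the $-\bx_t/\|\bx_t\|_\cA$ direction one additionally uses $\gamma \le \|\bx_t\|_\cA$, which follows from the closed-form $\gamma = \langle -\nabla f(\bx_t),\bz_t\rangle / (L\|\bz_t\|^2)$ together with $L$-smoothness (this is exactly where $f(\bx_{t+1}) \le f(\bx_t)$ is needed, i.e.\ it is a descent method). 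This also pins down $\rho$ as a legitimate finite bound.

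Next I would write down the per-step progress bound. By $L$-smoothness applied along the segment $\bx_t \to \bx_t + \gamma\bz_t$,
\[
f(\bx_{t+1}) \le f(\bx_t) + \gamma \langle \nabla f(\bx_t), \bz_t\rangle + \frac{L\gamma^2}{2}\|\bz_t\|^2,
\]
and plugging in the minimizing $\gamma = \langle -\nabla f(\bx_t),\bz_t\rangle/(L\|\bz_t\|^2)$ gives
\[
f(\bx_t) - f(\bx_{t+1}) \ge \frac{\langle -\nabla f(\bx_t), \bz_t\rangle^2}{2L\|\bz_t\|^2}.
\]
Since $\bz_t$ is the better of the two candidates, $\langle -\nabla f(\bx_t), \bz_t\rangle \ge \max\{\langle -\nabla f(\bx_t), \bar\bz_t\rangle,\ \langle \nabla f(\bx_t), \bx_t/\|\bx_t\|_\cA\rangle\}$, and with the approximate-\lmo guarantee \eqref{eq:inexactLMOMP} this is at least $\delta \max_{\bz\in\cA}\langle -\nabla f(\bx_t),\bz\rangle$ and also at least $\langle \nabla f(\bx_t), \bx_t\rangle/\|\bx_t\|_\cA$. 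The key geometric step is to lower-bound this quantity by (a multiple of) the primal gap $h_t := f(\bx_t) - f(\bx^\star)$. Using convexity, $h_t \le \langle \nabla f(\bx_t), \bx_t - \bx^\star\rangle = \langle \nabla f(\bx_t), \bx_t\rangle - \langle\nabla f(\bx_t),\bx^\star\rangle$. Writing $\bx^\star = \|\bx^\star\|_\cA \sum_j \lambda_j \ba_j$ as a convex combination of atoms scaled by $\|\bx^\star\|_\cA \le \rho$, we get $\langle \nabla f(\bx_t), \bx^\star\rangle \ge -\rho \max_{\bz\in\cA}\langle -\nabla f(\bx_t),\bz\rangle$. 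Combining, $h_t \le \langle \nabla f(\bx_t),\bx_t\rangle + \rho\max_{\bz\in\cA}\langle -\nabla f(\bx_t),\bz\rangle$, so the larger of the two terms on the right is at least $h_t/(1+\rho)$ — and in fact, after rescaling and using $\|\bx_t\|_\cA \le \rho$, one obtains that $\langle -\nabla f(\bx_t),\bz_t\rangle \ge \delta h_t/(2\rho)$ (the factor $2\rho$ absorbing both the $\rho$ from $\bx^\star$ and the $\|\bx_t\|_\cA$ normalization on the away term). Together with $\|\bz_t\|^2 \le \radius(\cA)^2$ (noting $\|{-}\bx_t/\|\bx_t\|_\cA\| \le \radius(\cA)$ as well, since $\bx_t/\|\bx_t\|_\cA \in \conv(\cA)$), the per-step bound becomes $h_t - h_{t+1} \ge \delta^2 h_t^2 / (8 L \rho^2 \radius(\cA)^2)$.

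The final step is the routine induction: from $h_{t+1} \le h_t - c\, h_t^2$ with $c = \delta^2/(8L\rho^2\radius(\cA)^2)$, one proves $h_t \le 1/(ct + 1/h_0)$ by induction, which after substituting $c$ and bounding $1/h_0 \le \ldots$ (or rather carrying $h_0 = \varepsilon_0$ through, and tracking the extra factor-$\delta$ bookkeeping and the $\frac2\delta$ in the constant) yields exactly the stated bound $h_t \le 4(\tfrac2\delta L\rho^2\radius(\cA)^2 + \varepsilon_0)/(\delta t + 4)$. For FCMP (Algorithm~\ref{algo:FCNNMP}), the same bound holds because each of its iterates satisfies $f(\bx_{t+1}^{\text{FCMP}}) \le f(\bx_{t+1}^{\text{NNMP-step}})$: minimizing over $\cone(\cS)$ (or the quadratic surrogate over $\cone(\cS)$, for Variant~0) does at least as well as the single NNMP update restricted to the same set, since $\bz_t$ and the current $\bx_t$ (hence the direction $-\bx_t/\|\bx_t\|_\cA$, as $\bx_t \in \cone(\cS)$) are available there. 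I expect the main obstacle to be the geometric lower bound in the middle paragraph: correctly handling the case $\min_{\bz\in\cA}\langle\nabla f(\bx_t),\bz\rangle = 0$ (where the whole progress must come from the away direction $-\bx_t/\|\bx_t\|_\cA$ and one must show $\langle\nabla f(\bx_t),\bx_t\rangle > 0$ strictly unless optimal), and getting the constants — the interplay of $\delta$, $\rho$, and the two competing directions — to line up with the precise statement rather than merely up to universal factors.
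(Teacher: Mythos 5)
Your proposal follows essentially the same route as the paper's proof: a smoothness-based per-step bound with the step direction $\bz_t$ chosen as the better of $\bar\bz_t$ and $-\bx_t/\|\bx_t\|_\cA$, a convexity bound $\varepsilon_t\le\langle\nabla f(\bx_t),\bx_t-\bx^\star\rangle$ combined with the facts that $\langle-\nabla f(\bx_t),\bz_t\rangle$ simultaneously dominates $\delta\max_{\bz\in\cA}\langle-\nabla f(\bx_t),\bz\rangle$ and $\langle\nabla f(\bx_t),\bx_t\rangle/\|\bx_t\|_\cA$ (yielding $\langle-\nabla f(\bx_t),\bz_t\rangle\ge\delta\varepsilon_t/(2\rho)$, as you note), and a closing induction. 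The paper organizes the middle step as an explicit case split on whether $\bz_t=\bar\bz_t$ or $\bz_t=-\bx_t/\|\bx_t\|_\cA$ and writes the key bound as $\langle\nabla f(\bx_t),-\frac{2\rho}{\delta}\tilde\bz_t\rangle\ge\varepsilon_t$, while you phrase it as a single max argument; these are equivalent. One small slip in your exposition: the intermediate claim that ``the larger of the two terms is at least $h_t/(1+\rho)$'' is incorrect (it should be $h_t/2$), but your subsequent conclusion $\langle-\nabla f(\bx_t),\bz_t\rangle\ge\delta h_t/(2\rho)$ is the correct one and matches the paper. Also note that the paper closes the induction by plugging in the schedule $\gamma=2/(\delta' t+2)$ rather than the unconditional recursion $h_{t+1}\le h_t-c\,h_t^2$ you sketch — the latter requires care when the unconstrained minimizer of the surrogate exceeds $1$, which you flag as a constant-tracking obstacle, so you are aware of the issue. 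The FCMP reduction you give (each FCMP step dominates the corresponding NNMP step) is exactly the argument in the paper.
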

\paragraph{Relation to FW rates.}
By rescaling $\cA$ by a large enough factor $\tau>0$, FW with $\tau \cA$ as atom set could in principle be used to solve \eqref{eq:coneprob}. In fact, for large enough $\tau$, only the constraints of \eqref{eq:coneprob} become active when minimizing $f$ over $\conv(\tau \cA)$. The sublinear convergence rate obtained with this approach is up to constants identical to that in Theorem~\ref{thm:NNMPsublinear} for our MP variants, see \cite{Jaggi:2013wg}. However, as the correct scaling is unknown, one has to either take the risk of choosing $\tau$ too small and hence failing to recover an optimal solution of \eqref{eq:coneprob}, or to rely on too large $\tau$ which can result in slow convergence. In contrast, knowledge of $\rho$ is not required to run our MP variants.
\paragraph{Relation to MP rates.}
If $\cA$ is symmetric, we have that $\lin(\cA)=\cone(\cA)$ and it is easy to show that the additional direction $-\frac{\bx_t}{\|\bx_t\|}$ in Algorithm~\ref{algo:NNMP} is never selected. Therefore, Algorithm~\ref{algo:NNMP} becomes equivalent to Variant 0 of Algorithm~\ref{algo:generalgreedy}, while Variant 1 of Algorithm~\ref{algo:generalgreedy} is equivalent to Variant 0 of Algorithm~\ref{algo:FCNNMP}. The rate specified in Theorem~\ref{thm:NNMPsublinear} hence generalizes the sublinear rate in \cite[Theorem 2]{locatello2017unified} for symmetric $\cA$. 

\subsection{Linear Convergence} \label{sec:lin}
We start by recalling some of the geometric complexity quantities that were introduced in the context of FW and are adapted here to the optimization problem we aim to solve (minimization over $\cone(\cA)$ instead of $\conv(\cA)$). 
\vspace{-2mm}
\paragraph{Directional Width.}
The directional width of a set $\cA$ w.r.t. a direction $\br\in\cH$ is defined as:
\begin{align}
dirW(\cA,\br):=\max_{\bs,\bv\in\cA}\big\langle\tfrac{\br}{\|\br\|},\bs-\bv\big\rangle
\end{align}
\vspace{-5mm}
\paragraph{Pyramidal Directional Width  {\normalfont\cite{LacosteJulien:2015wj}}.}
The Pyramidal Directional Width of a set $\cA$ with respect to a direction $\br$ and a reference point $\bx\in\conv(\cA)$ is defined as:
\begin{align}
PdirW(\cA,\br,\bx) := \min_{\cS\in\cS_\bx}dirW(\cS\cup
\lbrace\bs(\cA,\br)\rbrace,\br),
\end{align}
where $\cS_\bx := \lbrace \cS \ |\ \cS\subset \cA$ and $\bx$ is a proper convex combination of all the elements in $\cS\rbrace$ and $\bs(\cA,\br) := \max_{\bs\in\cA}\langle\frac{\br}{\|\br\|},\bs\rangle$.

Inspired by the notion of pyramidal width in \cite{LacosteJulien:2015wj}, which is the minimal pyramidal directional width computed over the set of feasible directions, we now define the cone width of a set $\cA$ where only the generating faces ($\gfaces$) of $\cone(\cA)$ (instead of the faces of $\conv(\cA)$) are considered. Before doing so we introduce the notions of \textit{face}, \textit{generating face}, and \textit{feasible direction}.
\paragraph{Face of a convex set.} Let us consider a set $\cK$ with a $k-$dimensional affine hull along with a point $\bx\in\cK$. Then, $\cK$ is a $k-$dimensional face of $\conv(\cA)$ if $\cK = \conv(\cA) \cap \lbrace \by \colon \langle \br,\by-\bx\rangle = 0\rbrace$ for some normal vector $\br$ and $\conv(A)$ is contained in the half-space determined by $\br$, i.e., $\langle\br,\by-\bx\rangle\leq 0$, $\forall \ \by\in\conv(\cA)$.
Intuitively, given a set $\conv(\cA)$ one can think of  $\conv(\cA)$ being a $\mathrm{dim}(\conv(\cA))-$dimensional face of itself, an edge on the border of the set a $1$-dimensional face and a vertex a $0$-dimensional face.
\vspace{-2mm}
\paragraph{Face of a cone and g-faces.}
Similarly, a $k-$dimensional face of a cone is an open and unbounded set $\cone(\cA) \cap \lbrace \by \colon \langle \br,\by-\bx\rangle = 0\rbrace$ for some normal vector $\br$ and $\cone(A)$ is contained in the half space determined by $\br$.  We can define the generating faces of a cone as:
\begin{align*}
\gfaces(\cone(\cA)) \! := \! \left\lbrace \cB \cap \conv(\cA)\colon \! \cB\in \faces(\cone(\cA))\right\rbrace.
\end{align*}
Note that $\gfaces(\cone(\cA))\subset \faces(\conv(\cA))$ and $\conv(\cA)\in\gfaces(\cone(\cA))$. Furthermore, for each $\cK\in\gfaces(\cone(\cA))$, $\cone(\cK)$ is a $k-$dimensional face of $\cone(\cA)$.

We now introduce the notion of \textbf{feasible directions}. A direction $\bd$ is feasible from $\bx \in \cone(\cA)$ if it points inwards $\cone(\cA)$, i.e., if $\exists \varepsilon > 0$ s.t. $\bx + \varepsilon \bd \in \cone(\cA)$.
Since a face of the cone is itself a cone, if a direction is feasible from $\bx\in\cone(\cK)\setminus \0$, it is feasible from every positive rescaling of $\bx$. We therefore can consider only the feasible directions on the generating faces (which are closed and bounded sets). Finally, we define the cone width of $\cA$.
\paragraph{Cone Width.}\vspace{-1mm}
\begin{align} \label{def:conewidth}
\cw:= \min_{\substack{\cK\in \gfaces(\cone(\cA))\\ \bx\in \cK \\ \br\in\cone(\cK-\bx)\setminus \lbrace \0\rbrace}} PdirW(\cK\cap\cA,\br,\bx)
\end{align}

We are now ready to show the linear convergence of Algorithms~\ref{algo:AMPPWMP} and \ref{algo:FCNNMP}.
\begin{theorem}\label{thm:PWMPlinear}
Let $\cA \subset \cH$ be a bounded set with $\0\in\cA$ and let the objective function $f \colon \cH \to \R$ be both $L$-smooth and $\mu$-strongly convex over $\rho \conv(\cA\cup-\cA)$. 
Then, the suboptimality of the iterates of Algorithms~\ref{algo:AMPPWMP} and \ref{algo:FCNNMP} decreases geometrically at each step in which $\gamma < \alpha_{\bv_t}$ (henceforth referred to as ``good steps'') as:
\begin{equation} \label{eq:linrate}
\varepsilon_{t+1}
\leq \left(1- \beta \right)\varepsilon_{t},
\end{equation}
where $\beta := \delta^2\frac{\mu \cw^2}{L\diam(\cA)^2}\in (0,1]$, $\varepsilon_t := f(\bx_t) - f(\bx^\star)$ is the suboptimality at step $t$ and $\delta \in (0,1]$ is the relative accuracy parameter of the employed approximate \lmo \eqref{eq:inexactLMOMP}. For AMP (Algorithm~\ref{algo:AMPPWMP}), $\beta^{\text{AMP}} = \beta/2$. If $\mu = 0$ Algorithm~\ref{algo:AMPPWMP} converges with rate $O(1/k(t))$ where $k(t)$ is the number of ``good steps'' up to iteration t.
\end{theorem}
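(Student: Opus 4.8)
I would prove Theorem~\ref{thm:PWMPlinear} by mimicking the pairwise/away-step Frank--Wolfe analysis of \cite{LacosteJulien:2015wj}, transported to the conic setting via the geometric quantities just introduced ($\gfaces$, feasible directions, $\cw$). The backbone is the standard descent inequality: from $L$-smoothness of $f$ over $\rho\conv(\cA\cup-\cA)$ and the exact/clipped line search on the quadratic upper bound, a step of size $\gamma$ in direction $\bd_t$ gives $\varepsilon_{t+1} \leq \varepsilon_t - \gamma\langle-\nabla f(\bx_t),\bd_t\rangle + \frac{L}{2}\gamma^2\|\bd_t\|^2$, and on a ``good step'' ($\gamma<\alpha_{\bv_t}$, so the line-search minimizer is interior) the right-hand side optimizes to $\varepsilon_{t+1}\leq \varepsilon_t - \frac{\langle-\nabla f(\bx_t),\bd_t\rangle^2}{2L\|\bd_t\|^2}$. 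With $\|\bd_t\|^2\leq\diam(\cA)^2$ (for PWMP $\bd_t=\bz_t-\bv_t$ is a difference of points in $\conv(\cA\cup-\cA)$; for AMP it is an atom or a negated active atom, handled with the extra factor of $2$), the whole game reduces to lower-bounding the ``pairwise gap'' $\langle-\nabla f(\bx_t),\bd_t\rangle$.

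**The core geometric step.** The pairwise gap must be bounded below by a constant times both $\sqrt{\mu\,\varepsilon_t}$ and $\cw$. The strong-convexity half is routine: $\mu$-strong convexity gives $\langle\nabla f(\bx_t),\bx_t-\bx^\star\rangle \geq \varepsilon_t + \frac{\mu}{2}\|\bx_t-\bx^\star\|^2 \geq \sqrt{2\mu\,\varepsilon_t}\,\|\bx_t-\bx^\star\|$, so it suffices to show the pairwise direction $\bd_t$ captures a fixed fraction of the ``ideal'' direction $\bx^\star-\bx_t$ relative to its length --- precisely, $\langle-\nabla f(\bx_t),\bd_t\rangle \geq \delta\,\cw\,\frac{\langle-\nabla f(\bx_t),\bx^\star-\bx_t\rangle}{\|\bx^\star-\bx_t\|}$. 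This is where the cone width enters. One decomposes $\bx^\star-\bx_t = (\bx^\star - \text{something in }\cone(\cS)) + (\text{away part})$; the LMO over $\cA$ picks $\bz_t$ at least as aligned as any atom, the LMO over the active set $\cS$ picks $\bv_t$ as the steepest-ascent active atom, and $\bd_t=\bz_t-\bv_t$ then dominates, up to the factor $\delta$ for the approximate oracle. The point $\bx_t$ lies in $\cone(\cK)$ for some generating face $\cK$ (the minimal face containing it), $\bx^\star-\bx_t$ restricted appropriately is a feasible direction from $\bx_t$ on that face, and the definition of $\cw$ as a $\min$ of pyramidal directional widths over all $(\cK,\bx,\br)$ gives exactly the needed uniform lower bound $PdirW(\cK\cap\cA,\br,\bx)\geq\cw$. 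Chaining these: $\varepsilon_{t+1}\leq \varepsilon_t - \frac{\delta^2\cw^2}{2L\diam(\cA)^2}\cdot 2\mu\,\varepsilon_t = (1-\beta)\varepsilon_t$ on good steps, with $\beta^{\text{AMP}}=\beta/2$ from the weaker direction bound.

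**The bad steps and the FCMP variants.** A ``bad step'' ($\gamma=\alpha_{\bv_t}$) drops an active atom and does not shrink the active set's affine hull faster than it can grow, so --- exactly as in \cite{LacosteJulien:2015wj} --- the number of bad steps up to iteration $t$ is controlled, yielding the stated $k(t)$ (and the $O(1/k(t))$ statement when $\mu=0$, from the sublinear recursion $\varepsilon_{t+1}\leq\varepsilon_t - c\,\varepsilon_t^2$ that the good-step inequality degenerates to). For FCMP (Algorithm~\ref{algo:FCNNMP}), each iteration is at least as good as one pairwise/FC step of the corresponding non-fully-corrective method on $\cone(\cS)$, so the same per-(good-)step contraction applies with $k(t)=t$ for Variant~1, and the Variant-0 projection step is handled by the usual ``gradient-descent-with-projection'' sandwich against the quadratic model.

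**Main obstacle.** The delicate part is the cone-geometry argument: verifying that for every iterate $\bx_t$ the relevant residual direction is genuinely a \emph{feasible} direction from $\bx_t$ on the \emph{generating face} it sits in, so that it falls inside the $\min$ defining $\cw$ --- and simultaneously that $\bx_t$ itself remains in $\cone(\cK)\cap\conv(\cA)$ up to scaling, so that the pyramidal-directional-width bound is applicable with the active set $\cS$ as an admissible $\cS\in\cS_{\bx}$. Reconciling the cone (unbounded) objects with the bounded generating faces on which $PdirW$ is defined --- this bookkeeping, rather than any single inequality, is the crux.
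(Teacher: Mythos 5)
Your proposal follows the same route as the paper's proof: the smoothness descent step yielding $\varepsilon_{t+1}\leq\varepsilon_t - \langle\nabla f(\bx_t),\bd_t\rangle^2/(2L\|\bd_t\|^2)$ on good steps, the strong-convexity bound $\langle-\nabla f(\bx_t),\bx^\star-\bx_t\rangle^2/\|\bx^\star-\bx_t\|^2 \geq 2\mu\,\varepsilon_t$ (you derive it via AM--GM rather than by minimizing the quadratic over $\gamma$, but it is the identical inequality), and the cone-width lower bound on the pairwise-gap ratio, which is exactly the paper's Theorem~\ref{thm:widthBound} established by iterated orthogonal projection of $-\nabla f(\bx_t)$ onto lower-dimensional faces via Lemma~\ref{lemma:faces}. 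The AMP factor of two from the direction comparison, the FCMP step dominating a PWMP step, the Lacoste--Julien bad-step count, and the $\mu=0$ sublinear recursion all match the paper, and you correctly flag the feasible-direction and generating-face bookkeeping as the crux that Theorem~\ref{thm:widthBound} is built to resolve.
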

\vspace{-2mm}
\paragraph{Discussion.} To obtain a linear convergence rate, one needs to upper-bound the number of ``bad steps'' $t-k(t)$ (i.e., steps with $\gamma \geq \alpha_{\bv_t}$). We have that $k(t)=t$ for Variant 1 of FCMP (Algorithm~\ref{algo:FCNNMP}), $k(t)\geq t/2$ for AMP (Algorithm~\ref{algo:AMPPWMP}) and $k(t) \geq t/(3|\cA|!+1)$ for PWMP (Algorithm~\ref{algo:AMPPWMP}) and Variant 0 of FCMP (Algorithm~\ref{algo:FCNNMP}). This yields a global linear convergence rate of $\varepsilon_t\leq \varepsilon_0 \exp\left(-\beta k(t)\right)$. The bound for PWMP is very loose and only meaningful for finite sets $\cA$. However, it can be observed in the experiments in the supplementary material (Appendix~\ref{app:experments}) that only a very small fraction of iterations result in bad PWMP steps in practice. Further note that Variant 1 of FCMP (Algorithm \ref{algo:FCNNMP}) does not produce bad steps. Also note that the bounds on the number of good steps given above are the same as for the corresponding FW variants and are obtained using the same (purely combinatorial) arguments as in \cite{LacosteJulien:2015wj}. \vspace{-2mm}
\paragraph{Relation to previous MP rates.}
The linear convergence of the generalized (not non-negative) MP variants studied in  \cite{locatello2017unified} crucially depends on the geometry of the set which is characterized by the Minimal Directional Width $\mdw$:
\begin{align}
\mdw := \min_{ \substack {\bd\in\lin(\cA)\\\bd \neq \0}}\max_{\bz\in\cA}\langle \frac{\bd}{\|\bd\|},\bz\rangle \ . \end{align}
The following Lemma relates the Cone Width with the minimal directional width.
\begin{lemma} \label{lem:mdw}
If the origin is in the relative interior of $\conv(\cA)$ with respect to its linear span, then $\cone(\cA)=\lin(\cA)$ and $\cw= \mdw$.
\end{lemma}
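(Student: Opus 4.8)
The plan is to prove the two claims of Lemma~\ref{lem:mdw} in sequence: first the set identity $\cone(\cA) = \lin(\cA)$, then the equality of the geometric constants $\cw = \mdw$. For the first claim, suppose the origin lies in the relative interior of $\conv(\cA)$ with respect to $\lin(\cA)$. Then for every $\bx \in \lin(\cA)$, a sufficiently small positive multiple $\eps \bx$ lies in $\conv(\cA)$, which means $\bx \in \cone(\cA)$; the reverse inclusion $\cone(\cA) \subseteq \lin(\cA)$ is immediate. (This observation already appeared in the text right after \eqref{eq:coneprob}, so I would only restate it briefly.)

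The substantive part is $\cw = \mdw$. First I would unwind the definition \eqref{def:conewidth}: since $\0$ is in the relative interior of $\conv(\cA)$, the only face of $\cone(\cA)$ of full dimension is $\cone(\cA)$ itself, and I claim there are no lower-dimensional generating faces other than (possibly) trivial ones that don't constrain the minimum — more precisely, any supporting hyperplane $\{\by : \langle \br, \by \rangle = 0\}$ with $\conv(\cA)$ on one side must, by the relative-interior assumption, contain all of $\lin(\cA)$, forcing $\br \perp \lin(\cA)$ and hence the ``face'' to be all of $\cone(\cA)$. Therefore $\gfaces(\cone(\cA)) = \{\conv(\cA)\}$, and the minimization in \eqref{def:conewidth} collapses to
\[
\cw = \min_{\substack{\bx \in \conv(\cA) \\ \br \in \cone(\conv(\cA) - \bx)\setminus\{\0\}}} PdirW(\cA, \br, \bx).
\]
Again using that $\0$ is in the relative interior, the tangent cone $\cone(\conv(\cA) - \bx)$ at any $\bx$ spans all of $\lin(\cA)$, so the feasible directions $\br$ range over all of $\lin(\cA)\setminus\{\0\}$ (one should check that the minimum over this larger set is attained and equals the minimum over the union of tangent cones, which holds because $\br = \0$ is excluded and $PdirW$ is scale-invariant in $\br$).

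It then remains to show that for $\br$ ranging over $\lin(\cA)\setminus\{\0\}$, taking the minimum of $PdirW(\cA,\br,\bx)$ over $\bx \in \conv(\cA)$ recovers $\mdw = \min_{\bd}\max_{\bz\in\cA}\langle \bd/\|\bd\|, \bz\rangle$. Here I would use the relative-interior hypothesis a third time: when $\0 \in \operatorname{relint}\conv(\cA)$, one can choose $\bx = \0$ as the reference point, and $\0$ is a proper convex combination of a set $\cS \subseteq \cA$ that spans $\lin(\cA)$; then $dirW(\cS \cup \{\bs(\cA,\br)\}, \br) = \max_{\bz \in \cA}\langle \br/\|\br\|, \bz\rangle - \min_{\bz\in \cS\cup\cdots}\langle \br/\|\br\|,\bz\rangle$, but because $\0$ is in the relative interior and $\cS$ spans, the minimum term is controlled, and the pyramidal directional width reduces (up to the choice of $\cS$ realizing the min) to $\max_{\bz\in\cA}\langle \br/\|\br\|,\bz\rangle$. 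I expect this last identification — carefully matching the ``$\min_{\cS \in \cS_\bx}$'' in the definition of $PdirW$ against the single max in $\mdw$, and verifying that the reference-point minimization picks out exactly $\bx=\0$ — to be the main obstacle; it is essentially a convex-geometry bookkeeping argument, but one must be careful that choosing a non-origin $\bx$ cannot make the pyramidal directional width strictly smaller, which follows because translating the reference point toward the origin only enlarges the set of available vertex directions. The remaining steps (scale invariance, attainment of minima over compact sets) are routine.
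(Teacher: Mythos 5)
The proposal takes the same overall route as the paper: establish $\cone(\cA)=\lin(\cA)$ from the relative-interior hypothesis, observe that $\gfaces(\cone(\cA))$ collapses to $\{\conv(\cA)\}$, argue the feasible directions $\br$ range over $\lin(\cA)\setminus\{\0\}$, and then reduce to the reference point $\bx=\0$. Your treatment of the first two reductions is careful and matches the intent of the paper's appendix proof, which handles them only implicitly.

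There is, however, a genuine gap in the final step, which you yourself flag as the main obstacle. Your proposed justification --- ``translating the reference point toward the origin only enlarges the set of available vertex directions'' --- does not work: as $\bx$ moves, the family $\cS_\bx$ of admissible proper active sets changes in a non-monotone way (sets get added \emph{and} removed), so there is no inclusion argument of this kind. The mechanism the paper actually uses is a sign argument specific to the origin. Because $\0\in\cA$, whenever $\0\in\cS$ one has $\min_{\bv\in\cS}\langle\br,\bv\rangle\leq\langle\br,\0\rangle=0$, hence $dirW\bigl(\cS\cup\{\bs(\cA,\br)\},\br\bigr)\geq\max_{\bz\in\cA}\langle\br/\|\br\|,\bz\rangle$; more generally, for $\bx=\0$ \emph{every} $\cS\in\cS_\0$ satisfies $\min_{\bv\in\cS}\langle\br,\bv\rangle\leq 0$ since $\0$ is a proper convex combination of the elements of $\cS$. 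This forces $PdirW(\cA,\br,\0)=\max_{\bz\in\cA}\langle\br/\|\br\|,\bz\rangle$ exactly (attained by $\cS=\{\0\}\in\cS_\0$), and the paper's contradiction step uses the same sign inequality to rule out a strictly smaller value from any $\bx\ne\0$, giving $\cw=\min_{\br}\max_{\bz\in\cA}\langle\br/\|\br\|,\bz\rangle=\mdw$. Your alternative observation --- that ``$\0$ is in the relative interior and $\cS$ spans, so the minimum term is controlled'' --- also misses this: the spanning property of $\cS$ by itself does not bound $\min_{\bv\in\cS}\langle\br,\bv\rangle$. Without the sign argument coming from $\0$ being a proper convex combination of $\cS$, the proof does not close.
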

Now, if the set $\cA$ is symmetric or, more generally, if $\cone(\cA)$ spans the linear space $\lin(\cA)$ (which implies that the origin is in the relative interior of $\conv(\cA)$), there are no bad steps. Hence, by Lemma \ref{lem:mdw}, the linear rate obtained in Theorem~\ref{thm:PWMPlinear} for non-negative MP variants generalizes the one presented in \cite[Theorem~7]{locatello2017unified} for generalized MP variants.

\vspace{-2mm}
\paragraph{Relation to FW rates.}
Optimization over conic hulls with non-negative MP is more similar to FW than to MP itself in the following sense. For MP, every direction in $\lin(\cA)$ allows for unconstrained steps, from any iterate $\bx_t$. In contrast, for our non-negative MPs, while some directions allow for unconstrained steps from some iterate $\bx_t$, others are constrained, thereby leading to the dependence of the linear convergence rate on the cone width, a geometric constant which is very similar in spirit to 
the Pyramidal Width appearing in the linear convergence bound in \cite{LacosteJulien:2015wj} for FW. Furthermore, as for Algorithm~\ref{algo:AMPPWMP}, the linear rate of Away-steps and Pairwise FW holds only for good steps. We finally relate the cone width with the Pyramidal Width \cite{LacosteJulien:2015wj}. The Pyramidal Width is defined as 
\begin{align*}
\mathrm{PWidth}(\cA):= \min_{\substack{\cK\in \faces(\conv(\cA))\\ \bx\in \cK \\ \br\in\cone(\cK-\bx)\setminus \lbrace \0\rbrace}} PdirW(\cK\cap\cA,\br,\bx).
\end{align*}
We have $\cw \geq \mathrm{PWidth}(\cA)$ as the minimization in the definition \eqref{def:conewidth} of $\cw$ is only over the subset $\gfaces(\cone(\cA))$ of $\faces(\conv(\cA))$. As a consequence, the decrease per iteration characterized in Theorem~\ref{thm:PWMPlinear} is larger than what one could obtain with FW on the rescaled convex set $\tau\cA$ (see Section \ref{sec:sublin} for details about the rescaling). Furthermore, the decrease characterized in \cite{LacosteJulien:2015wj} scales as $1/\tau^2$ due to the dependence on $1/\diam(\conv(\cA))^2$. 
\section{Related Work} \label{sec:relwork}

The line of recent works by \cite{ShalevShwartz:2010wq, Temlyakov:2013wf, Temlyakov:2014eb, Temlyakov:2012vg, nguyen2014greedy, locatello2017unified} targets the generalization of MP from the least-squares objective to general smooth objectives and derives corresponding convergence rates (see \cite{locatello2017unified} for a more in-depth discussion). 
However, only little prior work targets MP variants with non-negativity constraint \cite{bruckstein2008uniqueness,ID52513,Yaghoobi:2015ff}. In particular, the least-squares objective was addressed and no rigorous convergence analysis was carried out. \cite{bruckstein2008uniqueness,Yaghoobi:2015ff} proposed an algorithm equivalent to our Algorithm~\ref{algo:FCNNMP} for the least-squares case. More specifically, \cite{Yaghoobi:2015ff} then developed an acceleration heuristic, whereas \cite{bruckstein2008uniqueness} derived a coherence-based recovery guarantee for sparse linear combinations of atoms. Apart from MP-type algorithms, there is a large variety of non-negative least-squares algorithms, e.g.,~\cite{lawson1995solving}, in particular also for matrix and tensor spaces. 
The gold standard in factorization problems is projected gradient descent with alternating minimization, see \cite{Sha:2002um,berry2007algorithms,shashua2005non,kim2014algorithms}. Other related works are~\cite{pena2016solving}, which is concerned with the feasibility problem on symmetric cones, and \cite{harchaoui2015conditional}, which introduces a norm-regularized variant of problem \eqref{eq:coneprob} and solves it using FW on a rescaled convex set. To the best of our knowledge, in the context of MP-type algorithms, we are the first to combine general convex objectives with conic constraints and to derive corresponding convergence guarantees. 
\paragraph{Boosting:}
In an earlier line of work, a flavor of the generalized MP became popular in the context of boosting, see \cite{meir2003introduction}. The literature on boosting is vast, we refer to \cite{ratsch2001convergence,meir2003introduction,buhlmann2010boosting} for a general overview.
Taking the optimization perspective given in \cite{ratsch2001convergence}, boosting is an iterative greedy algorithm minimizing a (strongly) convex objective over the linear span of a possibly infinite set called hypothesis class. The convergence analysis crucially relies on the assumption of the origin being in the relative interior of the hypothesis class, 
see Theorem 1 in \cite{grubb2011generalized}. Indeed, Algorithm 5.2 of \cite{meir2003introduction} might not converge 
if the~\cite{pena2015polytope} alignment assumption is violated.
Here, we managed to relax this assumption while preserving essentially the same asymptotic rates in \cite{meir2003introduction,grubb2011generalized}. Our work is therefore also relevant in the context of (non-negative) boosting.
\section{Illustrative Experiments} \label{sec:experiments}
\vspace{-2mm}
We illustrate the performance of the presented algorithms on three different exemplary tasks, showing that our algorithms are competitive with established baselines across a wide range of objective functions, domains, and data sets while not being specifically tailored to any of these tasks 
(see Section~\ref{sec:compcompl} for a discussion of the computational complexity of the algorithms).
Additional experiments targeting KL divergence NMF, non-negative tensor factorization, and hyperspectral image unmixing can be found in the appendix.

\paragraph{Synthetic data.} 
We consider minimizing the least squares objective on the conic hull of~100 unit-norm vectors sampled at random in the first orthant of $\bbR^{50}$. We compare the convergence of Algorithms~\ref{algo:NNMP}, \ref{algo:AMPPWMP}, and \ref{algo:FCNNMP} with the Fast Non-Negative MP (FNNOMP) of~\cite{Yaghoobi:2015ff}, and Variant 3 (line-search) of the FW algorithm in \cite{locatello2017unified} on the atom set rescaled by $\tau = 10 \|\by\|$ (see Section~\ref{sec:sublin}), observing linear convergence for our corrective variants. 
\begin{wrapfigure}{l}{0.4\textwidth}
\vspace{-3mm}
\includegraphics[scale=0.3]{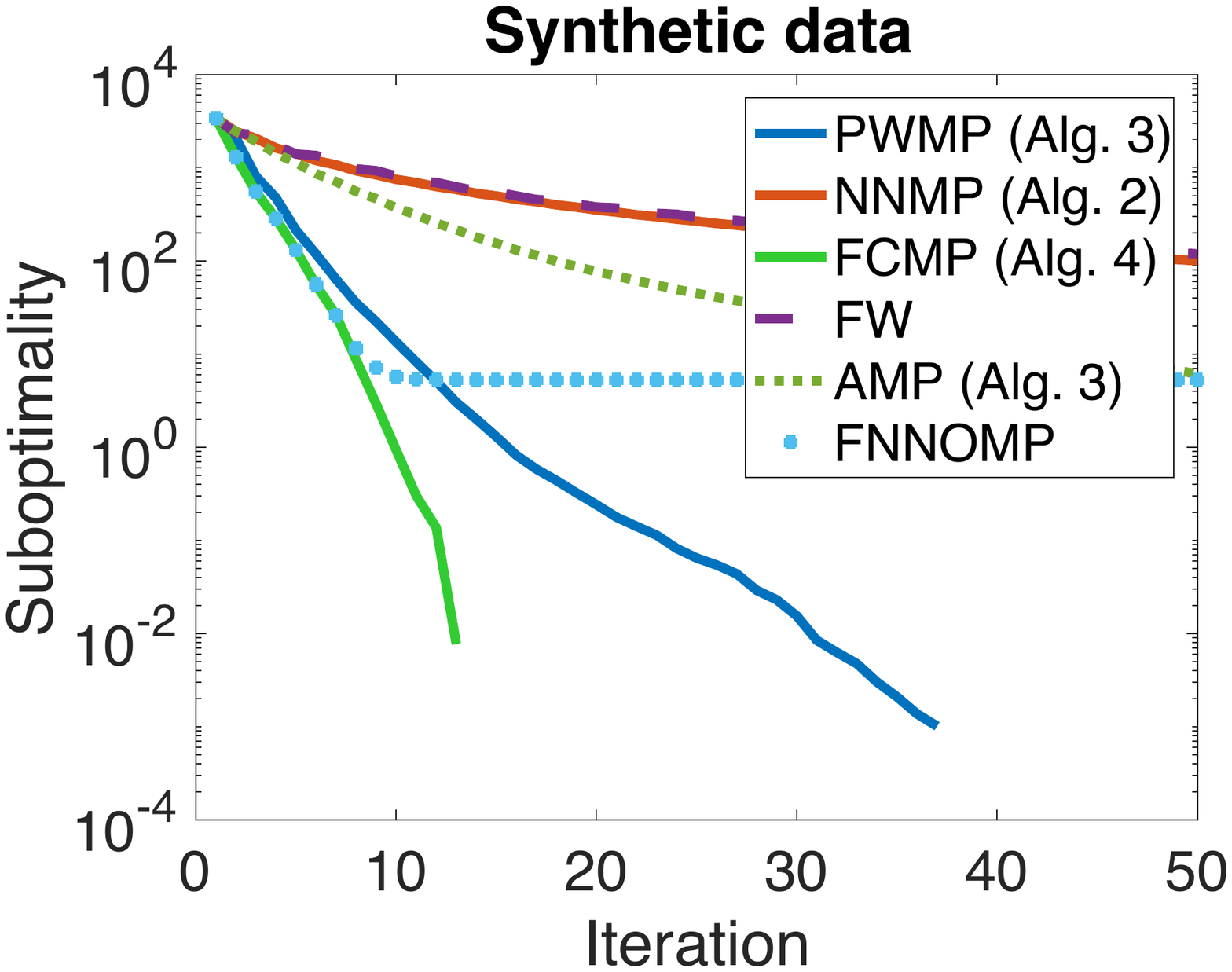}
\vspace{-5mm}
\caption{Synthetic data experiment.}\label{exp:synth}
\vspace{-5mm}
\end{wrapfigure}
Figure~\ref{exp:synth} shows the suboptimality $\varepsilon_t$, averaged over 20 realizations of $\cA$ and $\by$, as a function of the iteration $t$. As expected, FCMP achieves fastest convergence followed by PWMP, AMP and NNMP. 
The FNNOMP gets stuck instead. Indeed, \citesup{Yaghoobi:2015ff} only show that the algorithm terminates and not its convergence.
\paragraph{Non-negative matrix factorization.} The second task consists of decomposing a given matrix into the product of two non-negative matrices as in Equation~(1) of \cite{hsieh2011fast}.
We consider the intersection of the positive semidefinite cone and the positive orthant.
We parametrize the set $\cA$ as the set of matrices obtained as an outer product of vectors from $\cA_1 = \lbrace \bz\in\bbR^{k}: \bz_i \geq 0 \ \forall \ i \rbrace$ and $\cA_2 = \lbrace \bz\in\bbR^{d}: \bz_i \geq 0 \ \forall \ i \rbrace$. The \lmo is approximated using a truncated power method \cite{yuan2013}, and we perform atom correction with greedy coordinate descent see, e.g., \cite{Laue:2012wn, guo2017efficient}, to obtain a better objective value while maintaining the same (small) number of atoms.
We consider three different datasets: The Reuters Corpus\footnote{\url{http://www.nltk.org/book/ch02.html}}, the CBCL face dataset\footnote{\url{http://cbcl.mit.edu/software-datasets/FaceData2.html}} and the KNIX dataset\footnote{\url{http://www.osirix-viewer.com/resources/dicom-image-library/}}.
The subsample of the Reuters corpus we used is a term frequency matrix of 7,769 documents and 26,001 words.
The CBCL face dataset is composed of 2,492 images of 361 pixels each, arranged into a matrix. 
The KNIX dataset contains 24 MRI slices of a knee, arranged in a matrix of size $262,144\times 24$. Pixels are divided by their overall mean intensity. For interpretability reasons, there is interest to decompose MRI data into non-negative factorizations~\citesup{kopriva2010nonlinear}.  We compare PWMP and FCMP against the multiplicative (mult) and the alternating (als) algorithm of \cite{berry2007algorithms}, and the greedy coordinate descent (GCD) of \cite{hsieh2011fast}. 
Since the Reuters corpus is much larger than the CBCL and the KNIX dataset we only used the GCD for which a fast implementation in C is available. 
We report the objective value for fixed values of the rank in Table~\ref{table:nnmfText}, showing that FCMP outperform all the baselines across all the datasets. PWMP achieves smallest error on the Reuters corpus.
\paragraph{Non-negative garrote.} We consider the non-negative garrote which is a common approach to model order selection \cite{buhlmann2005boosting}. We evaluate NNMP, PWMP, and FCMP in the experiment described in~\cite{makalic2011logistic}, where the non-negative garrote is used to perform model order selection for logistic regression (i.e., for a non-quadratic objective function). We evaluated training and test accuracy on 100 random splits of the sonar dataset from the UCI machine learning repository. In Table~\ref{table:garroteText} we compare the median classification accuracy of our algorithms with that of the cyclic coordinate descent algorithm (NNG) from \cite{makalic2011logistic}. 

\vspace{2mm}
\begin{minipage}[h!]{\textwidth}
\begin{minipage}[f]{\textwidth}
\begin{minipage}{0.52\textwidth}
\centering
\footnotesize
\setlength\tabcolsep{1.2mm}
\begin{tabular}{ l | c c c c }
  algorithm &  \begin{tabular}[x]{@{}c@{}}Reuters \\$K=10$\end{tabular} &\begin{tabular}[x]{@{}c@{}}CBCL \\$K=10$\end{tabular} & \begin{tabular}[x]{@{}c@{}}CBCL \\$K=50$\end{tabular} & \begin{tabular}[x]{@{}c@{}}KNIX \\$K=10$\end{tabular}\\
  \hline
  mult & - & 2.4241e3 & 1.1405e3&2.4471e03\\  
    als & - & 2.73e3 & 3.84e3&2.7292e03\\  
  GCD & 5.9799e5 & 2.2372e3 & 806 &2.2372e03\\
  \textbf{PWMP} & \textbf{5.9591e5} & 2.2494e3 & 789.901 & 2.2494e03\\
  \textbf{FCMP} & {5.9762e5} & \textbf{2.2364e3} & \textbf{786.15} & \textbf{2.2364e03} \\
\end{tabular}
\vspace{-1.7mm}
\captionof{table}{Objective value for least-squares non-negative matrix factorization with rank $K$.}\label{table:nnmfText}
\end{minipage}\hspace{3mm}
\begin{minipage}{0.44\textwidth}
\footnotesize
\setlength\tabcolsep{1.2mm}
\vspace{3.8mm}
\center\begin{tabular}{ l | c | c }
   & training accuracy & test accuracy  \\
  \hline
  \textbf{NNMP} & 0.8345 $\pm$ 0.0242 & \textbf{0.7419} $\pm$ 0.0389 \\
  \textbf{PWMP} &\textbf{ 0.8379} $\pm$ 0.0240 & \textbf{0.7419} $\pm$ 0.0392  \\
  \textbf{FCMP} & 0.8345 $\pm$ 0.0238 & \textbf{0.7419} $\pm$ 0.0403 \\
  NNG & 0.8069 $\pm$ 0.0518 & 0.7258 $\pm$ 0.0602
\end{tabular}
\vspace{2mm}
\captionof{table}{Logistic Regression with non-negative Garrote, median $\pm$ std. dev.}\label{table:garroteText}
\end{minipage}
\end{minipage}
\end{minipage}
\section{Conclusion}
In this paper, we considered greedy algorithms for optimization over the convex cone, parametrized as the \textit{conic hull} of a generic atom set. We presented a novel formulation of NNMP along with a comprehensive convergence analysis. Furthermore, we introduced corrective variants with linear convergence guarantees, and verified this convergence rate in numerical applications. We believe that the generality of our novel analysis will be useful to design new, fast algorithms with convergence guarantees, and to study convergence of existing heuristics, in particular in the context of non-negative matrix and tensor factorization.

\clearpage
\newpage
{
\setlength{\bibsep}{1.5pt plus .5ex}
\bibliographystyle{plain}
\bibliography{bibliography}
}

\appendix
\clearpage

\section{Additional experiments} \label{app:experments}

\subsection{An illustrative experiment: Tightness of Theorem~\ref{thm:PWMPlinear}}
We now consider the setting depicted in Figure~\ref{pic:madw}.
\begin{figure}
\center\includegraphics[scale=0.6]{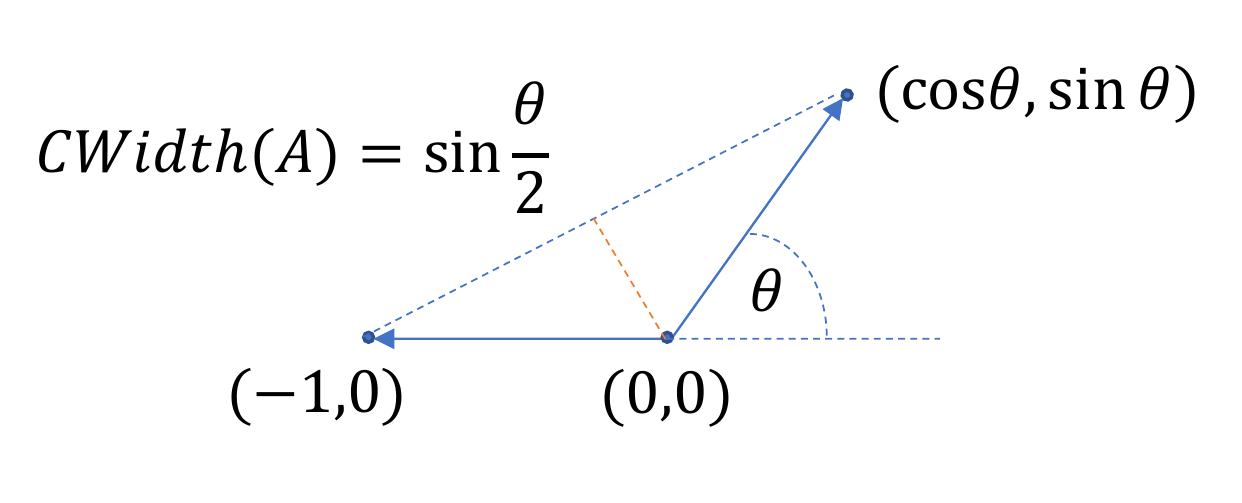}
\caption{$\cw$ for the set $
\cA := \{\cA_\theta \cup -\cA_\theta \}
\ \text{ where } \ 
\cA_\theta := \Big\{ {0\choose 0}, {-1\choose 0}, {\cos\theta \choose \sin\theta} \Big\}\vspace{-2mm}
$
with $\theta\in(0,\pi/2]$}\label{pic:madw}
\end{figure} 
We consider the set $\cA := \{\cA_\theta \cup -\cA_\theta \}
\ \text{ where } \ 
\cA_\theta := \Big\{ {0\choose 0}, {-1\choose 0}, {\cos\theta \choose \sin\theta} \Big\}
$
with $\theta\in(0,\pi/2)$. For this set $\cw$ can be computed in closed form as $\cw = \sin(\theta/2)$. We then perform 20 runs of Algorithm~\ref{algo:AMPPWMP} and report the ratio between the theoretical rate and the empirical one. The result is depicted in Figure~\ref{pic:tight}. There, we considered an iteration starting from the origin minimizing the distance function to 20 random points ${-\alpha_1 \choose \alpha_2}$ with $\alpha_i > 0$. The vertical bars shows minimal and maximal values.

\begin{figure}
\center\includegraphics[scale=0.4]{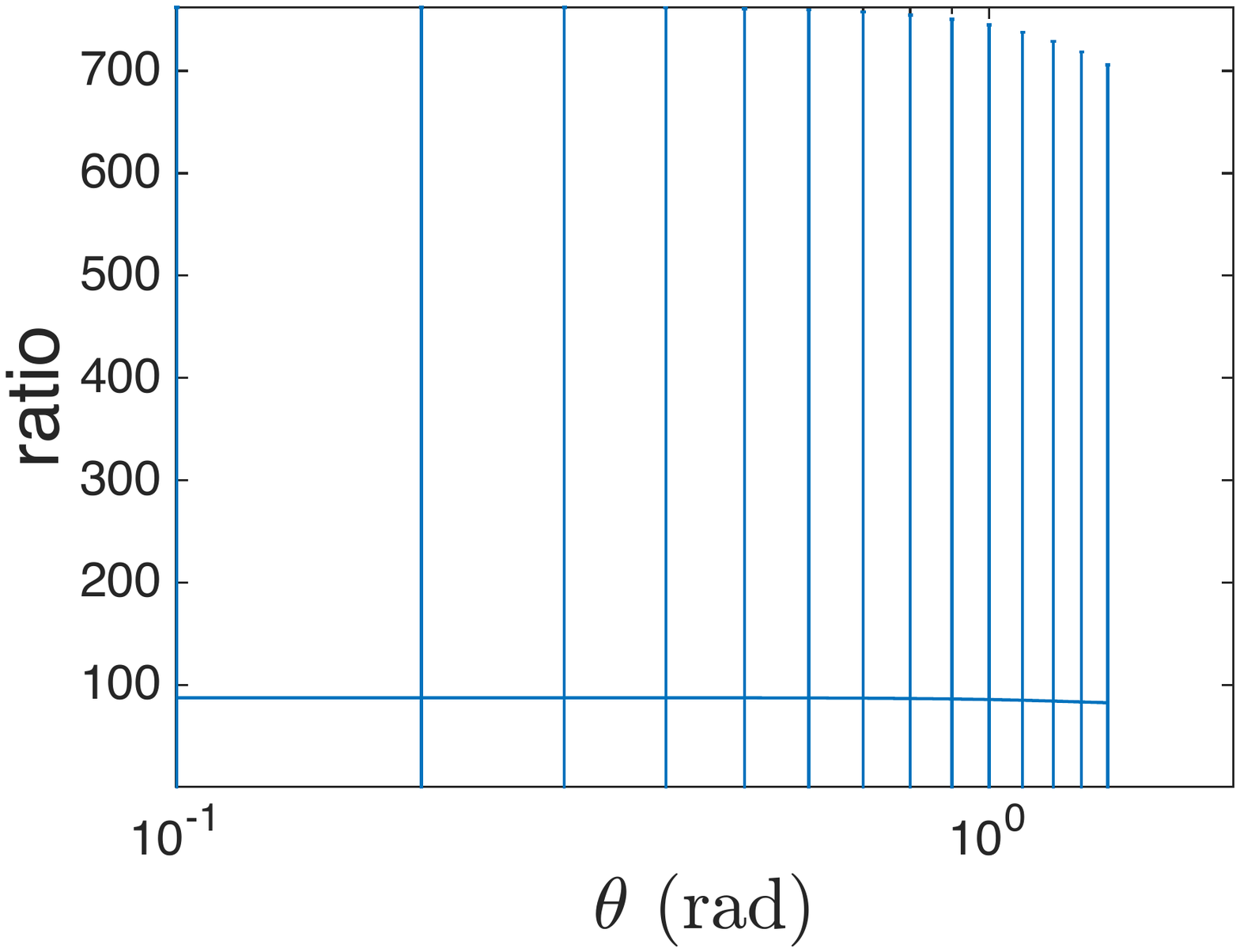}
\caption{Ratio of theoretical and empirical rate for $
\cA := \{\cA_\theta \cup -\cA_\theta \}
\ \text{ where } \ 
\cA_\theta := \Big\{ {0\choose 0}, {-1\choose 0}, {\cos\theta \choose \sin\theta} \Big\}
$
with $\theta\in(0,\pi/2)$ and 20 random target points ${-\alpha_1 \choose \alpha_2}$ with $\alpha_i > 0$. }\label{pic:tight}
\end{figure} 
\begin{figure}
\center\includegraphics[scale=0.2]{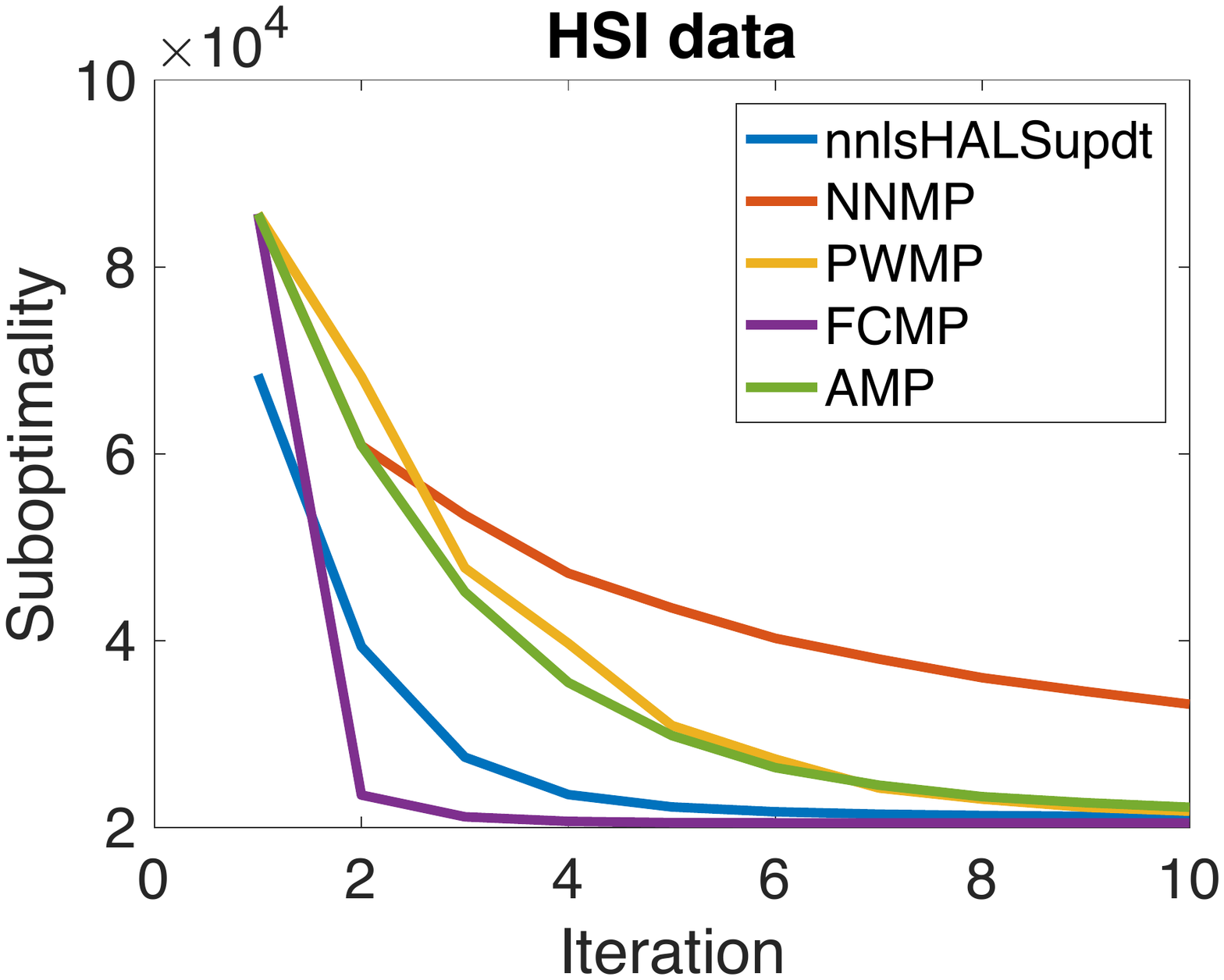}
\caption{Hyperspectral Imaging. We report suboptimality in a non-negative least squares task on real data, respectively.}\label{pic:comparisonhyp2} 
\end{figure} 

\vspace{-2mm}
\subsection{Real Data}
\vspace{-1mm}
\paragraph{Hyperspectral image unmixing.}
One of the classical applications of non-negative least squares are unmixing problems \citesup{esser2013method} such as hyperspectral image unmixing. Scalable unmixing approaches such as SPA \citesup{araujo2001successive} first extract a self-dictionary from a target image. Each pixel is then projected on the conic hull of the dictionary to estimate the abundance of each material. A standard technique is the hierarchical alternating least squares of \citesup{gillis2012accelerated} (nnlsHALSupdt). In Figure~\ref{pic:comparisonhyp2} (right), we compare the suboptimality of different methods as a function of the iteration. The dictionary is extracted from the undersampled Urban HSI Dataset\footnote{download at \url{http://bit.ly/fgnsr}} using SPA. This dataset contains 5,929 pixels, each associated with 162 hyperspectral features. The number of dictionary elements is 6, motivated by the fact that 6 different physical materials are depicted in this HSI data \citesup{gillis2016fast}. 
Therefore, FCMP converges after 6 iterations. For PWMP only $1.5\%$ of the iterations were bad steps on average for all dictionaries. Therefore, our corrective methods are proven to be competitive also on real data and the effect of the bad steps is negligible. We test other dictionaries for the Hyperspectral Imaging task. The result is depicted in Figure~\ref{pic:comparisonhyp}.
\begin{figure}
\center\includegraphics[scale=0.4]{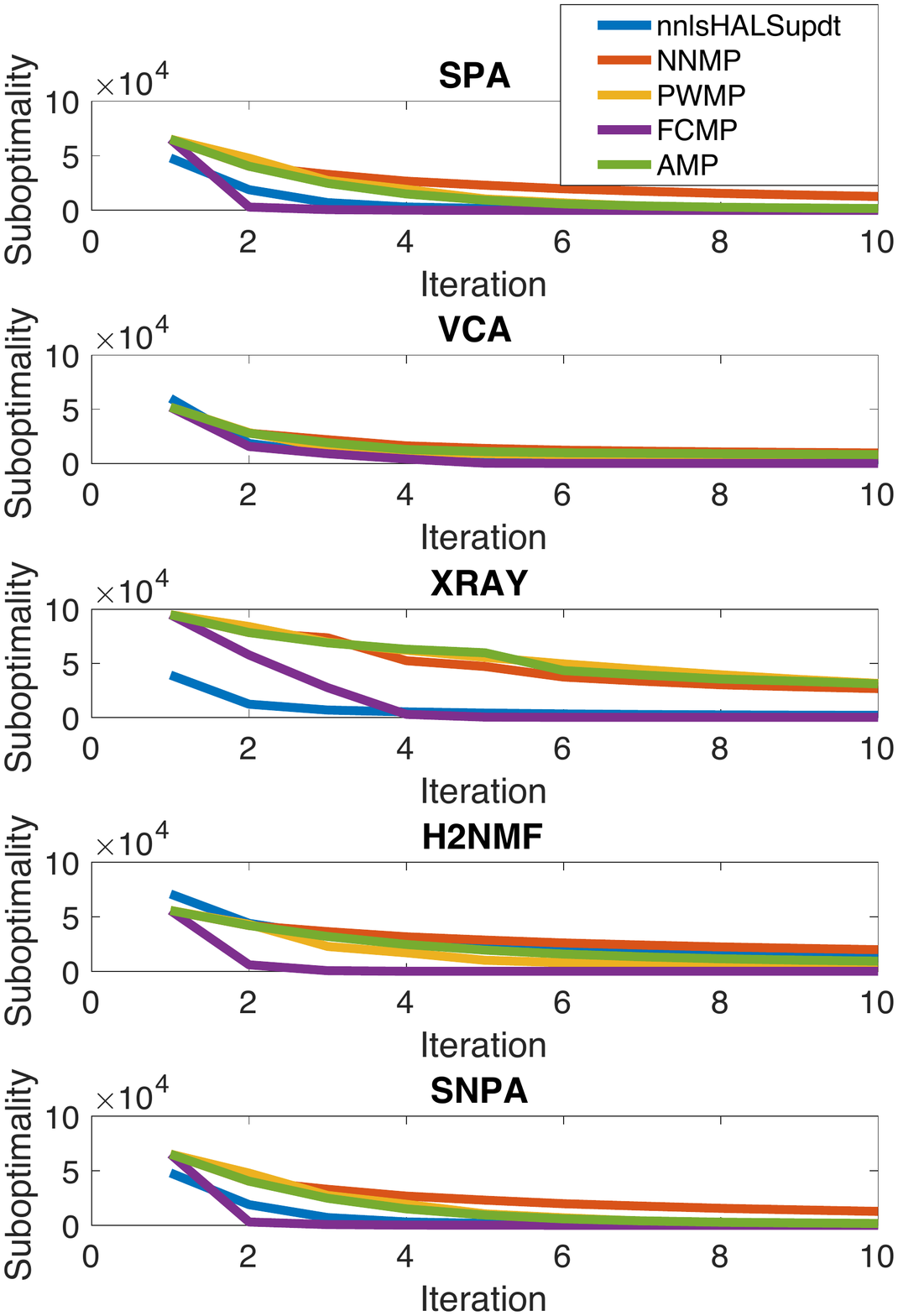}
\caption{SPA \protect\citesup{araujo2001successive}, VCA \protect\citesup{nascimento2005vertex}, XRAY \protect\citesup{kumar2013fast}, H2NMF \protect\citesup{gillis2015hierarchical}, SNPA \protect\citesup{gillis2014successive}} \label{pic:comparisonhyp} 
\end{figure} 

\vspace{-2mm}
\subparagraph{KL-divergence non-negative low-rank matrix factorization.}
The third task targets non-negative matrix factorization by minimization of the (non-least squares) KL-divergence-based objective function in Equation (3) in \citesup{hsieh2011fast}. 
\begin{figure}
\center\includegraphics[scale=0.24]{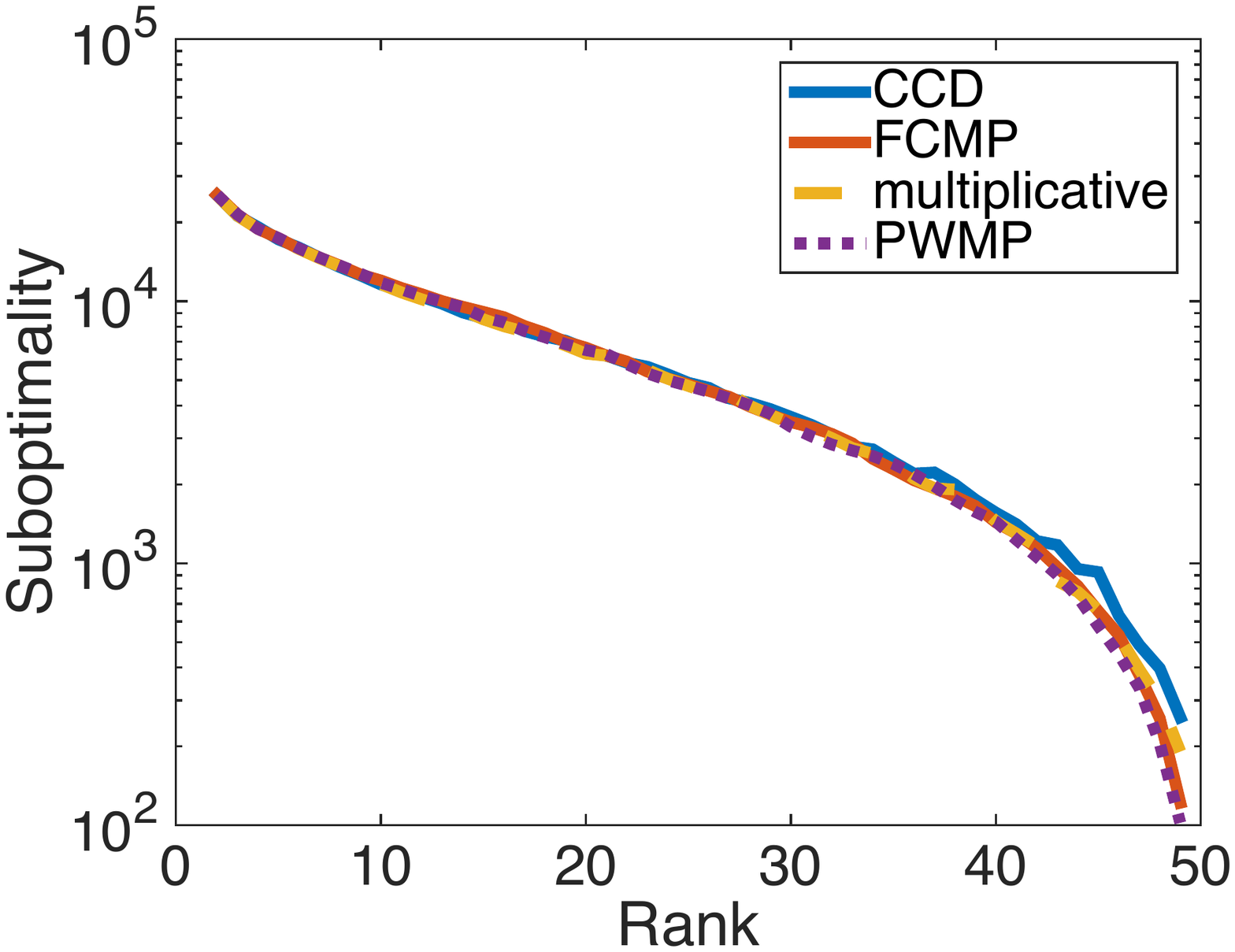}
\vspace{-3mm}
\caption{CBCL KL-Divergence} \label{pic:DKL} 
\vspace{-2mm}
\end{figure} 
We again use the CBCL face dataset and we compare FCMP (Variant 1) and PWMP against the multiplicative algorithm from \citesup{lee2001algorithms} (multiplicative) and the cyclic coordinate descent (CCD) from \citesup{hsieh2011fast}. We use the same approximate \lmo and parametrization of $\cA$ as for the least-squares non-negative matrix factorization, and we set the $L$ to 0.1. The atom correction was implemented using the CCD algorithm. In this experiment the use of Variant 0 of FCMP is crucial as it allows for a much easier update step. The objective value as a function of the rank is depicted in Figure~\ref{pic:DKL}. We note that all the algorithms yield comparable objective value up to rank 35. For higher rank, FCMP and PWMP achieve a slightly smaller objective value. 

\paragraph{Least Squares Non-negative Tensor Factorization.} \label{sec:tensorfactorization}
For this task we again use the KNIX dataset but now we arrange the scans to form a tensor of dimensionality $512\times 512 \times 24$. We compare against the alternating nonnegativity-constrained least squares with block principal pivoting \citesup{kim2012fast} (anlss-bpp) (which is also used in the FCMP and PWMP for the corrections), the active set method in \citesup{kim2007non} (anls-as), the hierarchical alternating least squares of \citesup{cichocki2009fast} (hals) and the multiplicative updating algorithm (multiplicative) of \citesup{welling2001positive}. The LMO for the tensor factorization is implemented with the tensor power method \citesup{anandkumar2014tensor}. The result is depicted in Table~\ref{table:nnmtf}.

\begin{table}
\center\begin{tabular}{ l | c  }
  algorithm & relative error \\
  \hline
    multiplicative & 0.2991 \\  
  hals & 0.2927 \\  
   anls-as & 0.2912 \\  
  anls-bpp & 0.2914 \\
  PWMP & 0.2913\\
  FCMP & \textbf{0.2909} \\
\end{tabular}
\caption{Non-negative tensor factorization on the KNIX dataset with rank 20}\label{table:nnmtf}
\end{table}

In this section we showed that the merit of our algorithms is not limited to their theoretical properties. Indeed, our algorithms are competitive with several approaches and can be successfully used in a manifold of different tasks and datasets while not being tailored to any specific cost function. 
\section{Affine Invariant Algorithms and Rates}
\label{sec:AffInv}
In this section, present affine invariant versions of all presented algorithms, along with sub-linear and linear convergence guarantees. An optimization method is called \emph{affine invariant} if it is invariant
under linear or affine transformations of the input problem: If one chooses any
re-parameterization of the domain~$\domain$ by a \emph{surjective} linear or affine map $\bM:\hat\domain\rightarrow\domain$, then the ``old'' and ``new''
optimization problems $\min_{\bx\in\domain}f(\bx)$ and
$\min_{\hat\bx\in\hat\domain}\hat f(\hat\bx)$ for $\hat f(\hat\bx):=f(\bM\hat\bx)$
look the same to the algorithm. We still require the set $\domain$ to contain the origin. In the following, we assume that after the transformation the origin is still on the border of $\conv(\domain)$. If the origin is contained in the relative interior of $\conv(\domain)$ we recover the existing affine invariant rates of \citesup{locatello2017unified}.

\subsection{Affine invariant non-negative MP}
To define an affine invariant upper bound on the objective function $f$, we use a variation of the affine invariant definition of the \emph{curvature constant} from \citesup{Jaggi:2013wg}, adapted for MP in \citesup{locatello2017unified}:
\begin{equation}\label{eq:CfMP}
\CfMP := \sup_{\substack{\bs\in\cA,\, \bx \in \conv(\cA) \\ \gamma \in [0,1]\\ \by = \bx + \gamma \bs}} \frac{2}{\gamma^2} D(\by,\bx),
\end{equation}
where for cleaner exposition, we have used the shorthand notation $D(\by,\bx)$ to denote the difference of $f(\by)$ and its linear approximation at $\bx$
\begin{equation*}
D(\by,\bx) :=f(\by) - f(\bx)- \langle \by -\bx, \nabla f(\bx)\rangle.
\end{equation*}
Bounded curvature $\Cf$ closely corresponds to smoothness of the objective~$f$. More precisely, if~$\nabla f$ is $L$-Lipschitz continuous on $\conv(\cA)$ with
respect to some arbitrary chosen norm $\norm{.}$, i.e.
$\norm{\nabla f(\bx) - \nabla f(\by)}_* \leq L \norm{\bx-\by}$, where $\|.\|_*$ is the dual norm of $\|.\|$, then
\begin{equation} \label{eq:CfBound}
\Cf \le L \radius_{\norm{.}}(\cA)^2  \ ,
\end{equation}
where $\radius_{\norm{.}}(.)$ denotes the $\norm{.}$-radius, see Lemma
15 in \citesup{locatello2017unified}. The curvature constant $\Cf$ is affine invariant as it does not depend on any
norm. It combines the complexity of the domain $\conv(\cA)$ and the curvature of the objective function~$f$ into a single quantity. 
Throughout this section, we assume availability of a finite constant $\rho>0$ upper-bounding the atomic norms~$\|.\|_\cA$ of the optimum $\bx^\star$, as well as the iterate sequence $(\bx_t)_{t=0}^T$ until the current iteration, as defined in~\eqref{eq:rho}.
We now present the affine invariant version of the non-negative MP algorithm (Algorithm \ref{algo:NNMP}) in Algorithm \ref{algo:NNMPaffine}. 
The algorithm uses the curvature constant $\CfMPr$ over the re-scaled set $\rho \conv(\cA\cup -\cA)$, rather than $\conv(\cA\cup-\cA)$.
\vspace{-2mm}

\begin{algorithm}[h]
\caption{Affine Invariant Non-Negative Matching Pursuit}
\label{algo:NNMPaffine}
\begin{algorithmic}
  \STATE Same as Algorithm~\ref{algo:NNMP} except:
  \STATE 5: \qquad $\gamma := \frac{\langle -\nabla f(\bx_t), \rho^2\bz_t   \rangle}{\CfMPr}$
\end{algorithmic}
\end{algorithm}

A sub-linear convergence guarantee for Algorithm \ref{algo:NNMPaffine} is presented in the following theorem.

\begin{theorem}\label{thm:sublinearNNMPAffineInvariant}
Let $\cA \subset \cH$ be a bounded set with $\0\in\cA$, $\rho := \max\left\lbrace \|\bx^\star\|_{\cA}, \|\bx_{0}\|_{\cA},\ldots,\|\bx_T\|_{\cA}\right\rbrace < \infty$. Assume $f$ has smoothness constant $\CfMPr$.
Then, Algorithm~\ref{algo:NNMPaffine} converges for $t \geq 0$ as 
\[
f(\bx_t) - f(\bx^\star)\leq \frac{4\left(\frac2\delta \CfMPr+\varepsilon_0\right)}{\delta t+4} ,
\]
where $\delta \in (0,1]$ is the relative accuracy parameter of the employed approximate \lmo~\eqref{eq:inexactLMOMP}.
\end{theorem}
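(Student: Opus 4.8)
The plan is to mirror the standard sublinear analysis of Frank–Wolfe/MP type methods, but with the curvature constant $\CfMPr$ playing the role of $L\,\radius(\cA)^2$ and with the extra iteration-dependent atom $-\bx_t/\|\bx_t\|_\cA$ entering the oracle choice. First I would establish the key descent inequality for one step of Algorithm~\ref{algo:NNMPaffine}. Writing $\by = \bx_t + \gamma\bz_t$ with $\gamma$ as in the modified line~5, the definition of $\CfMPr$ over $\rho\conv(\cA\cup-\cA)$ gives
\begin{equation*}
f(\bx_{t+1}) \le f(\bx_t) + \gamma\langle\nabla f(\bx_t),\bz_t\rangle + \frac{\gamma^2}{2\rho^2}\CfMPr,
\end{equation*}
provided $\bz_t$ and $\bx_t$ lie in the rescaled set so that the quadratic model is valid; here I would need the normalization $\bz_t\in\cA\cup\{-\bx_t/\|\bx_t\|_\cA\}$ and the fact that $\|\bx_t\|_\cA\le\rho$, so $\rho^2\bz_t$ is an admissible displacement. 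Substituting the optimal $\gamma = \langle-\nabla f(\bx_t),\rho^2\bz_t\rangle/\CfMPr$ yields a per-step decrease of $\rho^2\langle-\nabla f(\bx_t),\bz_t\rangle^2/(2\CfMPr)$, i.e. $f(\bx_t)-f(\bx_{t+1}) \ge \tfrac{\rho^2}{2\CfMPr}\langle\nabla f(\bx_t),\bz_t\rangle^2$.

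Next I would lower-bound the ``alignment'' term $\langle-\nabla f(\bx_t),\bz_t\rangle$ in terms of the current suboptimality $\varepsilon_t := f(\bx_t)-f(\bx^\star)$. By convexity, $\varepsilon_t \le \langle\nabla f(\bx_t),\bx_t-\bx^\star\rangle$. The point $\bx_t-\bx^\star$ can be written (up to scaling by at most $2\rho$, using $\|\bx_t\|_\cA,\|\bx^\star\|_\cA\le\rho$) as a convex combination of elements of $\cA$, of $-\bx_t/\|\bx_t\|_\cA$, and possibly the zero atom; since Algorithm~\ref{algo:NNMP}/\ref{algo:NNMPaffine} picks $\bz_t$ to minimize $\langle\nabla f(\bx_t),\cdot\rangle$ over exactly $\cA\cup\{-\bx_t/\|\bx_t\|_\cA\}$ (up to the approximation factor $\delta$), one gets $\langle\nabla f(\bx_t),\bz_t\rangle \le -\tfrac{\delta}{2\rho}\,\varepsilon_t$, hence $\langle-\nabla f(\bx_t),\bz_t\rangle \ge \tfrac{\delta}{2\rho}\varepsilon_t \ge 0$. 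This is precisely the step where the novelty of the conic construction is used — the extra direction guarantees the combination stays feasible and the alignment bound survives even when $\min_{\bz\in\cA}\langle\nabla f(\bx_t),\bz\rangle = 0$; I would lean on Lemma~\ref{lemma:originOpt} to handle the boundary case $\nabla f(\bx_t)\notin T_\cA$ (then $\bx_t$ is already optimal and the bound is trivial). Combining with the descent inequality gives the recursion $\varepsilon_{t+1} \le \varepsilon_t - \tfrac{\delta^2}{8\CfMPr}\varepsilon_t^2$.

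Finally I would turn this quadratic recursion into the stated $O(1/t)$ rate by the usual induction argument: if $\varepsilon_t \le \tfrac{c}{\delta t + 4}$ for an appropriate constant $c = 4(\tfrac{2}{\delta}\CfMPr + \varepsilon_0)$, then feeding this into the recursion and using monotonicity of $x\mapsto x - a x^2$ on the relevant range shows $\varepsilon_{t+1}\le\tfrac{c}{\delta(t+1)+4}$; the base case $t=0$ holds since $c/4 \ge \varepsilon_0$. One has to be slightly careful because the line-search actually used in the algorithm is on the \emph{true} curvature-model surrogate rather than the analytic $\gamma$, but since line~5 already fixes the optimal $\gamma$ for the upper model, the model decrease is exactly what we computed, and the true $f$ decreases at least as much. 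I expect the main obstacle to be the bookkeeping in the alignment step — precisely expressing $\bx_t-\bx^\star$ as a (sub)convex combination over $\cA\cup\{-\bx_t/\|\bx_t\|_\cA\}$ with the correct scale factor $2\rho$, and propagating the approximate-\lmo factor $\delta$ through both the $\cA$-part and the away-atom part — since the normalization by the atomic norm (rather than the Euclidean norm) is exactly what makes the constants come out clean. Everything else is the standard FW-style telescoping.
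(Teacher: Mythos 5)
Your proposal follows the same skeleton as the paper's proof: bound one step via the affine‐invariant curvature constant $\CfMPr$, lower‐bound the inner product $\langle -\nabla f(\bx_t),\bz_t\rangle$ by $\tfrac{\delta}{2\rho}\varepsilon_t$ using that both $\bx^\star$ and $-\bx_t$ lie in $\rho\conv(\cA\cup\{-\bx_t/\|\bx_t\|_\cA\})$, and close by induction. You compress the paper's explicit case A ($\tilde\bz_t\in\cA$, where the $\delta$ accuracy applies) and case B ($\tilde\bz_t=-\bx_t/\|\bx_t\|_\cA$, selected exactly) into a single statement; this works but is where the bookkeeping you flag actually lives, since the $\delta$ factor is threaded through the two branches differently.

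The one place you genuinely deviate from the paper is the final induction. The paper never instantiates the closed-form optimal step: it keeps the upper bound as $\min_{\gamma\in[0,1]}\{-\tfrac{\delta}{2}\gamma\varepsilon_t+\tfrac{\gamma^2}{2}\CfMPr\}$ and substitutes the \emph{suboptimal} $\gamma=\tfrac{2}{\delta' t+2}$ (with $\delta'=\delta/2$), so the resulting recursion is linear in $\varepsilon_t$ and the induction closes without any side condition. You instead substitute the optimal $\gamma$ and work with the quadratic recursion $\varepsilon_{t+1}\le\varepsilon_t-\tfrac{\delta^2}{8\CfMPr}\varepsilon_t^2$, then invoke "monotonicity of $x\mapsto x-ax^2$ on the relevant range." That map is increasing only for $x\le\tfrac{1}{2a}=\tfrac{4\CfMPr}{\delta^2}$, and nothing in the inductive hypothesis at $t=0$ forces $\varepsilon_0$ into that range. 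The gap is patchable—if $\varepsilon_t>\tfrac{1}{2a}$ you can fall back on $h(\varepsilon_t)\le h(\tfrac{1}{2a})=\tfrac{2\CfMPr}{\delta^2}$, which is still below $\tfrac{c}{\delta(t+1)+4}$ for the chosen $c$—but the step as stated is incomplete, and the paper's choice of a suboptimal $\gamma\in[0,1]$ is precisely what makes this case split unnecessary. I would also tighten the wording of your one-step inequality: the curvature definition controls displacements $\gamma\rho\bz_t$ for $\gamma\in[0,1]$, so "$\rho^2\bz_t$ is an admissible displacement" is off by a factor, though the rescaled inequality $f(\bx_{t+1})\le f(\bx_t)+\gamma\langle\nabla f(\bx_t),\bz_t\rangle+\tfrac{\gamma^2}{2\rho^2}\CfMPr$ you write is correct after the change of variables.
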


Exact knowledge of $\CfMPr$ is not required; the same theorem also holds if any upper bound on $\CfMPr$ is used in the algorithm and resulting rate instead. 

\subsection{Affine invariant corrective MP}
An affine invariant version of AMP and PWMP, Algorithm~\ref{algo:AMPPWMP}, is presented in Algorithm~\ref{algo:AMPPWaffine}. Note that Variant 1 of the fully corrective non-negative MP in Algorithm~\ref{algo:FCNNMP} is already affine invariant as it does not rely on any norm. Note that sublinear convergence is guaranteed with the rate indicated by Theorem~\ref{thm:sublinearNNMPAffineInvariant} since each step of the affine invariant FCMP yields at least as much improvement as the affine invariant NNMP, Algorithm~\ref{algo:NNMPaffine}. 

Since the update step in Algorithm~\ref{algo:NNMPaffine} and the resulting upper bound on the progress in objective, based on the curvature constant \eqref{eq:CfBound}, we used in the proof of Theorem~\ref{thm:sublinearNNMPAffineInvariant} are not enough to ensure linear convergence, we use a different notion of curvature based on \citesup{LacosteJulien:2013uea}.
\begin{align*}
\CfMPPW = \sup_{\substack{\bs\in\cA,\bx\in\conv(\cA)\\\bv\in\cS\\\gamma\in [0,1]\\\by = \bx+\gamma(\bs-\bv)}} \frac{2}{\gamma^2}D(\by,\bx).
\end{align*}
\begin{algorithm}[h]
\caption{Affine invariant AMP and PWMP}
\label{algo:AMPPWaffine}
\begin{algorithmic}
  \STATE same as Algorithm~\ref{algo:AMPPWMP} except for:
  \STATE 5: \qquad $\gamma := \frac{\langle -\nabla f(\bx_t),\rho^2\bd_t\rangle}{\CfMPrPW}$
\end{algorithmic}
\end{algorithm}
The following positive step size quantity relates the dual certificate value of the descent direction $\bx^\star - \bx$ with the MP selected atom, 
\begin{equation}
\label{def:stepsizeGamma}
\gamma(\bx,\bx^\star) := \frac{ \langle-\nabla f(\bx), \bx^\star - \bx \rangle}{\langle-\nabla f(\bx),  \bs(\bx)-\bv(\bx) \rangle} \ ,
\end{equation}
for $\bs(\bx) := \argmin_{\bs\in\cA}\ \langle \nabla f(\bx), \bs\rangle$ and $\bv(\bx) := \min_{\cS\in\cS_\bx}\argmax_{\bs\in\cS}\ \langle \nabla f(\bx), \bs\rangle$
where $\cS\in\cS_{\bx}$ is the active set.
We now define the affine invariant surrogate of strong convexity.
\begin{align*}
\mufr := \inf_{\bx \in \conv(\rho\cA)} \inf_{\substack{\bx^\star \in \conv(\rho\cA)\\ \langle\nabla f (\bx), \bx^\star - \bx \rangle < 0 }} \frac{2}{\gamma(\bx,\bx^\star)} D(\bx^\star,\bx).\vspace{-1mm}
\end{align*}

\begin{theorem}
\label{thm:LinearMPAffineInvariant}
Let $\cA \subset \cH$ be a bounded set containing the origin
and let the objective function $f \colon \cH \to \R$ have smoothness constant $\CfMPrPW$ and strong convexity constant $\mufr$

Then, the suboptimality of the iterates of Algorithm~\ref{algo:AMPPWMP} and \ref{algo:FCNNMP} decreases geometrically at each step in which $\gamma < \alpha_{\bv_t}$ (henceforth referred to as ``good steps'') as:
\begin{equation} 
\varepsilon_{t+1}
\leq \left(1- \beta \right)\varepsilon_{t},
\end{equation}
where $\beta := \delta^2\frac{\mufr}{\CfMPrPW}\in (0,1]$, $\varepsilon_t := f(\bx_t) - f(\bx^\star)$ is the suboptimality at step $t$ and $\delta \in (0,1]$ is the relative accuracy parameter of the employed approximate \lmo  (Equation~\eqref{eq:inexactLMOMP}). For AMP (Algorithm~\ref{algo:AMPPWMP}), $\beta^{\text{AMP}} = \beta/2$.
 If $\mufr = 0$ Algorithm~\ref{algo:AMPPWMP} converges with rate $O(1/k(t))$ where $k(t)$ is the number of ``good steps'' up to iteration t.
\end{theorem}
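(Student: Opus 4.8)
The plan is to mirror the proof of Theorem~\ref{thm:PWMPlinear}, replacing the Euclidean/smoothness quantities $L\diam(\cA)^2$ and $\mu\cw^2$ by their affine invariant surrogates $\CfMPrPW$ and $\mufr$, and checking that every step where a norm was used can be reformulated in terms of the curvature constant $D(\cdot,\cdot)$. First I would establish a per-step decrease lemma: for a good step the update is $\bx_{t+1} = \bx_t + \gamma \bd_t$ with $\gamma = \langle -\nabla f(\bx_t), \rho^2 \bd_t\rangle / \CfMPrPW$ unclipped, and by the very definition of $\CfMPrPW$ applied to the (rescaled) pair $\bs \in \cA$, $\bv \in \cS$ underlying $\bd_t = \bs - \bv$ (or $\bd_t = \bs$ in the AMP case, where $\bv$ plays no role), we get $f(\bx_{t+1}) \le f(\bx_t) - \frac{\langle -\nabla f(\bx_t), \rho^2\bd_t\rangle^2}{2\rho^2 \CfMPrPW}$ after optimizing the resulting quadratic in $\gamma$; with the approximate-\lmo factor this picks up a $\delta^2$ (the standard two-sided argument, one $\delta$ from each \lmo call in Algorithm~\ref{algo:AMPPWMP}, exactly as in \cite{LacosteJulien:2015wj,locatello2017unified}).

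Second, I would lower-bound the dual quantity $\langle -\nabla f(\bx_t), \bd_t\rangle$. Here the MP-side \lmo guarantees $\langle \nabla f(\bx_t), \bs(\bx_t)\rangle \le \min_{\bz \in \cA}\langle \nabla f(\bx_t), \bz\rangle$ and the active-set \lmo (away side) guarantees $\langle -\nabla f(\bx_t), -\bv(\bx_t)\rangle$ is at least the steepest-ascent value over $\cS$; combining the two and comparing against the reference direction $\bx^\star - \bx_t$ gives $\langle -\nabla f(\bx_t), \bd_t\rangle \ge \langle -\nabla f(\bx_t), \bx^\star - \bx_t\rangle / \gamma(\bx_t, \bx^\star)$, which is precisely why the step size quantity $\gamma(\bx,\bx^\star)$ in \eqref{def:stepsizeGamma} was introduced. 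This is the affine invariant analogue of the cone-width geometric argument; the role of $\cw$ is now absorbed into $\gamma(\bx,\bx^\star)$, and the fact that we only range over generating faces of $\cone(\cA)$ (so that a $\cS \in \cS_{\bx_t}$ realizing $\bv(\bx_t)$ exists with $\bx_t$ in its relative interior) is exactly the content of a "good step."

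Third, I would invoke $\mufr$: by its definition, $\frac{2}{\gamma(\bx_t,\bx^\star)} D(\bx^\star, \bx_t) \ge \mufr$, i.e. $\langle -\nabla f(\bx_t), \bx^\star - \bx_t\rangle / \gamma(\bx_t,\bx^\star) \ge \mufr \cdot \text{(something)}$; more carefully, strong-convexity-surrogate gives, after the usual minimization over the scalar, $\langle -\nabla f(\bx_t), \bx^\star - \bx_t\rangle^2 / \gamma(\bx_t,\bx^\star) \ge 2\mufr \varepsilon_t$ using $\langle \nabla f(\bx_t), \bx^\star - \bx_t\rangle \le f(\bx^\star) - f(\bx_t) = -\varepsilon_t$. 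Chaining the three displays yields $\varepsilon_t - \varepsilon_{t+1} = f(\bx_t) - f(\bx_{t+1}) \ge \delta^2 \frac{\mufr}{\CfMPrPW}\varepsilon_t$, which is \eqref{eq:linrate} with $\beta = \delta^2 \mufr/\CfMPrPW$. The AMP factor $\beta/2$ comes from the extra case split (the away atom $-\bv_t$ may not align as well as the pair direction, costing a factor $2$ in the width estimate), identical to the Euclidean proof. The $\mu = 0$ case: with $\mufr = 0$ the geometric bound degenerates, and one instead runs the standard sublinear recursion $\varepsilon_{t+1} \le \varepsilon_t - c\,\varepsilon_t^2$ over good steps (bounding $\langle -\nabla f(\bx_t),\bd_t\rangle \ge \varepsilon_t/(\rho\diam\text{-type constant})$ affine-invariantly), giving $O(1/k(t))$.

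The main obstacle I anticipate is the bookkeeping around the clipping and the "good step" definition in the affine invariant normalization: one must verify that the step size $\gamma = \langle -\nabla f(\bx_t), \rho^2\bd_t\rangle / \CfMPrPW$ is indeed the unconstrained minimizer of the curvature-based upper bound restricted to $\gamma \in [0,1]$ after the implicit rescaling by $\rho$, and that on a good step ($\gamma < \alpha_{\bv_t}$) no clipping occurs so the full quadratic decrease is realized — and crucially that all iterates and $\bx^\star$ stay within the rescaled domain $\rho\conv(\cA\cup-\cA)$ over which $\CfMPrPW$ and $\mufr$ are defined, which is where the hypothesis and the definition \eqref{eq:rho} of $\rho$ enter. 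The purely combinatorial bound on the number of bad steps ($k(t) = t$ for FCMP v.1, $k(t)\ge t/2$ for AMP, $k(t) \ge t/(3|\cA|!+1)$ for PWMP) is unchanged since it is geometry-free, so it carries over verbatim from the discussion after Theorem~\ref{thm:PWMPlinear}.
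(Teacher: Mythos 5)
Your proposal is correct and follows essentially the same route as the paper's proof: (i) apply the definition of $\CfMPrPW$ to get the per-good-step quadratic decrease $\varepsilon_t-\varepsilon_{t+1}\ge\frac{\rho^2}{2\CfMPrPW}\langle\nabla f(\bx_t),\tilde\bz_t-\tilde\bv_t\rangle^2$, (ii) use the definition of $\mufr$ together with the equality $\langle-\nabla f(\bx_t),\bs(\bx_t)-\bv(\bx_t)\rangle=\langle-\nabla f(\bx_t),\bx^\star-\bx_t\rangle/\gamma(\bx_t,\bx^\star)$ and the scalar maximization to obtain $\varepsilon_t\le\langle-\nabla f(\bx_t),\bs(\bx_t)-\bv(\bx_t)\rangle^2/(2\mufr)$, (iii) absorb the \lmo inexactness into a $\delta^2$ factor and combine. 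One small slip: in your third step the inequality should read $\langle-\nabla f(\bx_t),\bx^\star-\bx_t\rangle^2/\gamma(\bx_t,\bx^\star)^2\ge 2\mufr\varepsilon_t$ (the factor is $\gamma^2$, not $\gamma$, since the left side must reduce to $\langle-\nabla f(\bx_t),\bs(\bx_t)-\bv(\bx_t)\rangle^2$); with that correction your chain of inequalities reproduces the paper's argument.
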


\section{Proof of Lemma~\ref{lemma:originOpt}} 
If $\tilde\bx \in \cone(\cA)$ and $\nabla f(\tilde\bx)\not\in T_\cA$ then $\tilde\bx$ is a solution to $\min_{\bx\in\cone(\cA)}f(\bx)$.
\begin{proof}
We will prove this lemma by contradiction assuming that $\bx^\star\neq \tilde\bx$ and $\nabla f(\tilde\bx)\not\in T_\cA$. Now, by convexity of $f$ we have that:
\begin{align*}
f(\bx^\star)\geq f(\tilde\bx) + \langle \nabla f(\tilde\bx),\bx^\star-\tilde\bx\rangle
\end{align*}
Since $\bx^\star\neq \tilde\bx$ we have also that $f(\bx^\star)<f(\tilde\bx)$. Therefore:
\begin{align*}
0< f(\tilde\bx) - f(\bx^\star)\leq  \langle - \nabla f(\tilde\bx),\bx^\star-\tilde\bx\rangle
\end{align*}
which we rewrite as $\langle  \nabla f(\tilde\bx),\bx^\star\rangle+\langle  \nabla f(\tilde\bx),-\tilde\bx\rangle<0$. Now we note that
by the assumption that $\nabla f(\tilde\bx)\not\in T_\cA$ we have that both these inner products are non negative which is absurd.
To draw this conclusion note that $\bx^\star\in \cone(\cA)$ we have that $\bx^\star = \sum_i \alpha_i \bz_i$ where $\bz_i\in\cA$ and $\alpha_i\geq0 \ \forall \ i $. 
\end{proof}
\section{Sublinear Rates} \label{sec:sublinpf}
\begin{reptheorem}{thm:NNMPsublinear}
Let $\cA \subset \cH$ be a bounded set with $\0\in\cA$, $\rho := \max\left\lbrace \|\bx^\star\|_{\cA}, \|\bx_{0}\|_{\cA},\ldots,\|\bx_T\|_{\cA},\right\rbrace $ and $f$ be $L$-smooth over $\rho\conv(\cA\cup-\cA)$.
Then, Algorithms~\ref{algo:NNMP} and \ref{algo:FCNNMP} with $\bx_0 = \0$ converges for $t \geq 0$ as 
\[
f(\bx_t) - f(\bx^\star)\leq \frac{4\left(\frac2\delta L\rho^2\radius(\cA)^2+\varepsilon_0\right)}{\delta t+4} ,
\]
where $\delta \in (0,1]$ is the relative accuracy parameter of the employed approximate \lmo~\eqref{eq:inexactLMOMP}.
\begin{proof}
We separately prove the convergence for the two algorithms.

\paragraph{non-negative MP:}
Recall that $\tilde{\bz}_t$ is the atom returned by the inexact \lmo after the comparison with $-\frac{\bx_t}{\|\bx_t\|_\cA}$ at the current iteration $t$. 
We distinguish the two cases in which $\tilde{\bz}_t\neq -\frac{\bx_t}{\|\bx_t\|_\cA}$ (\textbf{case A}) and $\tilde{\bz}_t= -\frac{\bx_t}{\|\bx_t\|_\cA}$ (\textbf{case B}).
Let us call $\bar{\cA}:= \cA \cup \left\lbrace-\frac{\bx_t}{\|\bx_t\|_\cA}\right\rbrace$. Note that $\radius(\bar\cA)=\radius(\cA)$.

Recall that in the Algorithm the step size $\gamma$ is computed at each iteration via line search minimizing the quadratic upper bound on $f$ and no further clipping is made. The reason being that $f$ is convex, therefore, for $t>0$ we have $f(\bx_t)\leq f(\0)$. Hence the minimum of $f$ over the line between $\bx_t$ and the origin must lie between these two points making clipping unnecessary. 

We start by upper bounding $f$ on $\rho \conv(\bar\cA)$ using smoothness as follows:
\begin{eqnarray}\label{eq:quadBuondSub}
f(\bx_{t+1}) &\leq &  \min_{\gamma\in\mathbb{R}_{\geq 0}}g_{\bx_t}(\bx_t+\gamma\tilde\bz_t) \nonumber \\
&= &  \min_{\gamma\in[0,1]}g_{\bx_t}(\bx_t+\gamma\rho\tilde\bz_t) \nonumber \\
&\leq &  \min_{\gamma\in[0,1]}f(\bx_t)+\gamma\langle\nabla f(\bx_t),\rho\tilde\bz_t \rangle \nonumber\\ &+& \frac{L}{2}\gamma^2\rho^2\|\tilde\bz_t\|^2\nonumber \\
&\leq &  \min_{\gamma\in[0,1]}f(\bx_t)+\gamma\langle\nabla f(\bx_t),\rho\tilde\bz_t \rangle \nonumber\\ &+& \frac{L}{2}\gamma^2\rho^2\radius(\cA)^2\nonumber \\
\end{eqnarray}
We now treat separately the linear term for \textbf{case~A} and \textbf{case~B}.
\paragraph{case A:}
We start from the definition of inexact \lmo (Equation~\eqref{eq:inexactLMOMP}). 
We then have:
\begin{align*}
\langle \nabla f(\bx_t), \tilde{\bz}_t\rangle \leq \delta\langle \nabla f(\bx_t), \bz_t\rangle 
\end{align*}
where $\bz_t$ is the true minimizer of the linear problem (\lmo). In other words, it holds that $\langle \nabla f(\bx_t), \bz_t\rangle\leq \langle \nabla f(\bx_t), \bz\rangle \ \forall \ \bz \in \conv(\bar{\cA})$ due to the $\argmin$ in line 4 of Algorithm~\ref{algo:NNMP}.
Therefore, since $\bx^\star\in\rho\conv(\bar\cA)$ it holds that:
\begin{align*}
\langle\nabla f(\bx_t),-\rho\bz_t\rangle\geq \langle\nabla f(\bx_t),-\bx^\star\rangle.
\end{align*}
Using the same argument, since $-\frac{\bx_t}{\|\bx_t\|_\cA}\in \bar\cA$ and $\rho\geq \|\bx_t\|_\cA$, we have that $-\bx_t\in\rho\conv(\bar\cA)$. Therefore:
\begin{align*}
\langle\nabla f(\bx_t),-\rho\bz_t\rangle\geq \langle\nabla f(\bx_t),\bx_t\rangle.
\end{align*}
We can now bound the linear term of in \eqref{eq:quadBuondSub} as:
\begin{align*}
 \langle \nabla f(\bx_{t}), - 2\frac{\rho}{\delta} \tilde{\bz}_t\rangle &= \langle \nabla f(\bx_{t}), - \frac{\rho}{\delta} \tilde{\bz}_t\rangle+\langle\nabla f(\bx_{t}), - \frac{\rho}{\delta} \tilde{\bz}_t\rangle\nonumber\\
 &\geq \langle \nabla f(\bx_{t}), - \rho\bz_t\rangle+\langle\nabla f(\bx_{t}), -\rho\bz_t\rangle\\
 &\geq \langle\nabla f(\bx_{t}),\bx_{t} - \bx^\star\rangle\\
 &\geq f(\bx_{t}) -f(\bx^\star)=: \varepsilon_{t}
 \end{align*}
 where in the inequalities we used the the inexact oracle definition (see Section \ref{sec:approxlmo}), 
the fact that both $-\bx_t$ and $\bx^\star\in\rho\conv(\bar\cA)$ 
and convexity respectively.

\paragraph{case B:}
Using line 4 of Algorithm~\ref{algo:NNMP} along with the inexact oracle definition we obtain:
\begin{align*}
\langle \nabla f(\bx_t), -\frac{\bx_t}{\|\bx_t\|_\cA}\rangle\leq\delta\min_{\bz\in\cA}\langle \nabla f(\bx_t),\bz\rangle.
\end{align*}
 Therefore, since $\bx^\star\in\rho\conv(\cA)$ we can write: 
\begin{align*}
\langle \nabla f(\bx_t), -\frac{\rho}{\delta}\frac{\bx_t}{\|\bx_t\|_\cA}\rangle&\leq\min_{\bz\in\cA}\langle \nabla f(\bx_t),\rho\bz\rangle\\
&\leq\langle \nabla f(\bx_t),\bx^\star\rangle
\end{align*}
We also have $\langle \nabla f(\bx_t), -\bx_t\rangle\leq0$ and $\frac{\rho}{\delta \|\bx_t\|_\cA}>1$, which yields:
\begin{align*}
\langle \nabla f(\bx_t), -\bx_t\rangle\geq\langle \nabla f(\bx_t), -\frac{\rho}{\delta }\frac{\bx_t}{\|\bx_t\|_\cA}\rangle
\end{align*}
Putting these inequalites together we obtain:
\begin{align*}
\langle\nabla f(\bx_t),\frac{2}{\delta}\rho\frac{\bx_t}{\|\bx_t\|_\cA}\rangle &\geq \langle\nabla f(\bx_t),\bx_t\rangle+\max_{\bz\in\cA}\langle\nabla f(\bx_t),-\rho\bz\rangle\\
&\geq \langle\nabla f(\bx_t),\bx_t\rangle - \langle\nabla f(\bx_t),\bx^\star\rangle\\
&\geq \varepsilon_t
\end{align*}

\paragraph{combining A and B}
By combining case A and case B we obtain:
\begin{align*}
 f(\bx_{t+1})\leq f(\bx_t) +  \min_{\gamma\in[0,1]} \left\lbrace - \frac{\delta}{2} \gamma \varepsilon_t + \frac{\gamma^2}{2} L\rho^2\radius(\cA)^2 \right\rbrace
\end{align*}

Now, subtracting $f(\bx^\star)$ from both sides and setting $C:= L\rho^2\radius(\cA)^2 $, we get
\begin{eqnarray*}
 \varepsilon_{t+1} &\leq \varepsilon_{t} + \min_{\gamma\in[0,1]}\left\lbrace - \frac{\delta}{2} \gamma \varepsilon_{t} + \frac{\gamma^2}{2}C\right\rbrace\\
 & \leq \varepsilon_{t} - \frac{2}{\delta' t+2}\delta' \varepsilon_{t} + \frac{1}{2}\left(\frac{2}{\delta' t+2}\right)^2 C,
\end{eqnarray*}
where we set $\delta' := \delta/2$ and used $\gamma = \frac{2}{\delta' t+2} \in [0,1]$ to obtain the second inequality. Finally, we show by induction
 \begin{equation*}
 \varepsilon_t \leq \frac{4\left(\frac{2}{\delta} C + \varepsilon_0\right)}{t+4} = 2\frac{\left(\frac{1}{\delta'} C + \varepsilon_0\right)}{\delta' t+2}
 \end{equation*}
for $t \geq 0$.

When $t=0$ we get $\varepsilon_0\leq \left(\frac{1}{\delta'}C+\varepsilon_0\right)$. Therefore, the base case holds. We now prove the induction step assuming $\varepsilon_t \leq \tfrac{2\left(\frac{1}{\delta'}C+\varepsilon_0\right)}{\delta' t+2}$ as :
\begin{align*}
\varepsilon_{t+1} &\leq \left(1-\tfrac{2\delta'}{\delta' t + 2}\right)\varepsilon_{t} + \tfrac12 C \left(\tfrac{2}{\delta' t+2}\right)^2\\
&\leq \left(1-\tfrac{2\delta'}{\delta' t + 2}\right)\tfrac{2\left(\frac{1}{\delta'}C+\varepsilon_0\right)}{\delta' t+2} \\
&\quad+ \tfrac{1}{2}\left(\tfrac{2}{\delta' t+2}\right)^2C + \tfrac{2}{(\delta' t+2)^2}\delta'\varepsilon_0\\
&= \tfrac{2\left(\frac{1}{\delta'}C+\varepsilon_0\right)}{\delta' t+2}\left(1-\tfrac{2\delta'}{\delta' t +2}+\tfrac{\delta'}{\delta' t +2}\right)\\
&\leq \tfrac{2\left(\frac{1}{\delta'}C+\varepsilon_0\right)}{\delta'(t+1)+2}.
\end{align*}
Remembering that we set $C = L\rho^2\radius(\cA)^2$ concludes the proof.

\paragraph{Fully Corrective non-negative MP:}
The proof is trivial considering that:
\begin{align}
f(\bx_{t+1}) &=\min_{\bx\in\cone(\cS\cup \bs(\cA,\br))}f(\bx)\\
&\leq \min_{\bx\in\cone(\cS\cup \bs(\cA,\br))}g_{\bx_t}(\bx)\label{eq:subFCproof}\\
&\leq \min_{\gamma\in \bbR_{\geq0}}g_{\bx_t}(\bx_t+\gamma\tilde\bz_t)\label{eq:FCtoMPproof}
\end{align}
where $\tilde\bz_t\in\vA\cup\left\lbrace\frac{-\bx_t}{\|\bx_t\|_{\cA}}\right\rbrace$ as the search space in Equation~\eqref{eq:subFCproof} strictly contain the one in Equation~\eqref{eq:FCtoMPproof}. Equation~\eqref{eq:FCtoMPproof} is also the beginning of the proof of the sublinear rate for NNMP which then concludes the proof.
\end{proof}
\end{reptheorem}

\section{Linear Rate}
\begin{reptheorem}{thm:PWMPlinear}
Let $\cA \subset \cH$ be a bounded set containing the origin
and let the objective function $f \colon \cH \to \R$ be both $L$-smooth and $\mu$-strongly convex over $\rho \conv(\cA\cup-\cA)$.

Then, the suboptimality of the iterates of Algorithm~\ref{algo:AMPPWMP} decreases geometrically at each step in which $\gamma < \alpha_{\bv_t}$ (henceforth referred to as ``good steps'') as:
\begin{equation}
\varepsilon_{t+1}
\leq \left(1- \beta \right)\varepsilon_{t},
\end{equation}
where $\beta := \delta^2\frac{\mu \cw^2}{L\diam(\cA)^2}\in (0,1]$, $\varepsilon_t := f(\bx_t) - f(\bx^\star)$ is the suboptimality at step $t$ and $\delta \in (0,1]$ is the relative accuracy parameter of the employed approximate \lmo  (Equation~\eqref{eq:inexactLMOMP}). For AMP (Algorithm~\ref{algo:AMPPWMP}), $\beta^{\text{AMP}} = \beta/2$. If $\mu = 0$ Algorithm~\ref{algo:AMPPWMP} converges with rate $O(1/k(t))$ where $k(t)$ is the number of ``good steps'' up to iteration t.
\begin{proof}
Let us consider the case of PWMP.

Consider the atoms $\tilde{\bz}_t\in\cA$ and $\tilde{\bv}_t\in\cS$ selected by the \lmo at iteration $t$.
Due to the smoothness property of $f$ it holds that: 
\begin{align*}
f(\bx_{t+1})&\leq \min_{\gamma\in \mathbb{R}} f(\bx_{t}) + \gamma\langle\nabla f(\bx_{t}), \tilde{\bz}_t - \tilde{\bv}_t\rangle \\&+ \frac{L}{2}\gamma^2\|\tilde{\bz}_t - \tilde{\bv}_t\|^2.
\end{align*}
for a good step (i.e. $\gamma < \alpha_{\bv_t}$). Note that this also holds for variant 0 of Algorithm~\ref{algo:FCNNMP}.

We minimize the upper bound with respect to $\gamma$ setting $\gamma = -\frac{1}{L}\langle\nabla f(\bx_{t}),\frac{\tilde{\bz}_t - \tilde{\bv}_t}{\|\tilde{\bz}_t - \tilde{\bv}_t\|^2}\rangle$ . Subtracting $f(\bx^\star)$ from both sides and replacing the optimal $\gamma$ yields:
\begin{equation}\label{step:linearRateMPsmoothfast}
\varepsilon_{t+1}\leq\varepsilon_{t}-\frac{1}{2L}\left\langle\nabla f(\bx_{t}),\frac{\tilde{\bz}_t - \tilde{\bv}_t}{\|\tilde{\bz}_t-\tilde{\bv}_t\|}\right\rangle^2
\end{equation}
Now writing the definition of strong convexity, we have the following inequality holding for all  $\gamma\in\R$:
\begin{align*}
f(\bx_{t}+\gamma(\bx^\star-\bx_{t}))\geq f(\bx_{t})+\gamma\langle\nabla f(\bx_{t}),\bx^\star-\bx_{t}\rangle+\\\quad\gamma^2\frac{\mu}{2}\|\bx^\star-\bx_{t}\|^2
\end{align*}
We now fix $\gamma=1$ in the LHS and minimize with respect to $\gamma$ in the RHS:
\begin{align*}
\varepsilon_{t}\leq \frac{1}{2\mu} \left\langle \nabla f(\bx_{t}), \frac{\bx^\star-\bx_{t}}{\|\bx^\star-\bx_{t}\|} \right\rangle^2
\end{align*}
Combining this with \eqref{step:linearRateMPsmoothfast} yields: 
\begin{align}\label{step:linearRateCombinedfast}
\varepsilon_{t}-\varepsilon_{t+1}\geq \frac{\mu}{L}\frac{\big\langle \nabla f(\bx_{t}),\frac{\tilde{\bz}_t-\tilde{\bv}_t}{\|\tilde{\bz}_t-\tilde{\bv}_t\|}\big\rangle^2}{\big\langle \nabla f(\bx_{t}), \frac{\bx^\star-\bx_{t}}{\|\bx^\star-\bx_{t}\|}\big\rangle^2}\varepsilon_{t}
\end{align}We now use Theorem~\ref{thm:widthBound} to conclude the proof.
For Away-steps MP the proof is trivially extended since $2\min_{\bz\in\cA\cup-\cS}\langle \nabla f(\bx_t),\bz\rangle\leq \min_{\bz\in\cA, \bv\in\cS}\langle \nabla f(\bx_t),\bz-\bv\rangle$. Therefore, we obtain the same smoothness upper bound of the PWMP. The rest of the proof proceed as for PWMP with the additional $\frac12$ factor.

\paragraph{Sublinear Convergence for $\mu = 0$}
If $\mu = 0$ we have for PWMP:
\begin{align}\label{eq:PWsubBound}
f(\bx_{t+1})&\leq \min_{\gamma\leq \alpha_{\bv_t}} f(\bx_{t}) + \gamma\langle\nabla f(\bx_{t}), \tilde{\bz}_t - \tilde{\bv}_t\rangle \\&+ \frac{L}{2}\gamma^2\|\tilde{\bz}_t - \tilde{\bv}_t\|^2.
\end{align}
which can be rewritten for a good step (i.e. no clipping is necessary) as: 
\begin{eqnarray*}
 \varepsilon_{t+1} &\leq \varepsilon_{t} + \min_{\gamma\in[0,1]}\left\lbrace - \frac{\delta}{2} \gamma \varepsilon_{t} + \frac{\gamma^2}{2}L\rho^2\diam(\cA)^2\right\rbrace\\
\end{eqnarray*}
using the same arguments of the proof of Theorem~\ref{thm:NNMPsublinear}.
Unfortunately, $\alpha_{\bv_t}$ limits the improvement. On the other hand, we can repeat the induction of Theorem~\ref{thm:NNMPsublinear} for only the good steps.
Therefore:
\begin{eqnarray*}
 \varepsilon_{t+1}
 & \leq \varepsilon_{t} - \frac{2}{\delta' t+2}\delta' \varepsilon_{t} + \frac{1}{2}\left(\frac{2}{\delta' t+2}\right)^2 C,
\end{eqnarray*}
where we set $\delta' := \delta/2$, $C=L\rho^2\diam(\cA)^2$ and used $\gamma = \frac{2}{\delta' t+2} \in [0,1]$ (since it is a good step this produce a valid upper bound). Finally, we show by induction
 \begin{equation*}
 \varepsilon_t \leq \frac{4\left(\frac{2}{\delta} C + \varepsilon_0\right)}{t+4} = 2\frac{\left(\frac{1}{\delta'} C + \varepsilon_0\right)}{\delta' k(t)+2}
 \end{equation*}
where $k(t) \geq 0$ is the number of good steps at iteration $t$. 

When $k(t)=0$ we get $\varepsilon_0\leq \left(\frac{1}{\delta'}C+\varepsilon_0\right)$. Therefore, the base case holds. We now prove the induction step assuming $\varepsilon_t \leq \tfrac{2\left(\frac{1}{\delta'}C+\varepsilon_0\right)}{\delta' k(t)+2}$ as :
\begin{align*}
\varepsilon_{t+1} &\leq \left(1-\tfrac{2\delta'}{\delta' k(t) + 2}\right)\varepsilon_{t} + \tfrac12 C \left(\tfrac{2}{\delta' k(t)+2}\right)^2\\
&\leq \left(1-\tfrac{2\delta'}{\delta' k(t) + 2}\right)\tfrac{2\left(\frac{1}{\delta'}C+\varepsilon_0\right)}{\delta' k(t)+2} \\
&\quad+ \tfrac{1}{2}\left(\tfrac{2}{\delta' k(t)+2}\right)^2C + \tfrac{2}{(\delta' k(t)+2)^2}\delta'\varepsilon_0\\
&= \tfrac{2\left(\frac{1}{\delta'}C+\varepsilon_0\right)}{\delta' k(t)+2}\left(1-\tfrac{2\delta'}{\delta' k(t) +2}+\tfrac{\delta'}{\delta' k(t) +2}\right)\\
&\leq \tfrac{2\left(\frac{1}{\delta'}C+\varepsilon_0\right)}{\delta'(k(t)+1)+2}.
\end{align*}
For AFW the procedure is the same but the linear term of Equation~\ref{eq:PWsubBound} is divided by two. We proceed as before with the only difference that we call $\delta'=\delta/4$.
\end{proof}
\end{reptheorem}

\subsection{Proof sketch for linear rate convergence of FCMP}
\begin{reptheorem}{thm:PWMPlinear}
Let $\cA \subset \cH$ be a bounded set containing the origin
and let the objective function $f \colon \cH \to \R$ be both $L$-smooth and $\mu$-strongly convex over $\rho \conv(\cA\cup-\cA)$.

Then, the suboptimality of the iterates of Algorithm~\ref{algo:FCNNMP} decreases geometrically at each step in which $\gamma < \alpha_{\bv_t}$ (henceforth referred to as ``good steps'') as:
\begin{equation} 
\varepsilon_{t+1}
\leq \left(1- \beta \right)\varepsilon_{t},
\end{equation}
where $\beta := \delta^2\frac{\mu \cw^2}{L\diam(\cA)^2}\in (0,1]$, $\varepsilon_t := f(\bx_t) - f(\bx^\star)$ is the suboptimality at step $t$ and $\delta \in (0,1]$ is the relative accuracy parameter of the employed approximate \lmo  (Equation~\eqref{eq:inexactLMOMP}).
\begin{proof}
The proof is trivial noticing that:
\begin{align*}
f(\bx_{t+1}) &=\min_{\bx\in\cone(\cS\cup \bs(\cA,\br))}f(\bx)\\
&\leq \min_{\bx\in\cone(\cS\cup \bs(\cA,\br))}g_{\bx_t}(\bx)\\
&\leq \min_{\gamma\leq\alpha_{\bv_t}}g_{\bx_t}(\bx_t+\gamma(\bz_t-\bv_t))
\end{align*}
which is the beginning of the proof of Theorem~\ref{thm:PWMPlinear}. Note that there are no bad steps for variant 1. Since we minimize $f$ at each iteration, $\bv_t$ is always zero and each step is unconstrained (i.e., no bad steps).
\end{proof}
\end{reptheorem}

\section{Pyramidal Width}
Let us first recall some definitions from \citesup{LacosteJulien:2015wj}.
\paragraph{Directional Width}
\begin{align}
dirW(\cA,\br):=\max_{\bs,\bv\in\cA}\Big\langle\frac{\br}{\|\br\|},\bs-\bv\Big\rangle
\end{align}
\paragraph{Pyramidal Directional Width}
\begin{align}
PdirW(\cA,\br,\bx) := \min_{\cS\in\cS_\bx}dirW(\cS\cup
\lbrace\bs(\cA,\br)\rbrace,\br)
\end{align}
Where $\cS_\bx := \lbrace \cS  \ |\  \cS\subset \cA$ such that $\bx$ is a proper convex combination of all the elements in $\cS\rbrace$ and $\bs(\cA,\br) := \max_{\bs\in\cA}\langle\frac{\br}{\|\br\|},\bs\rangle$.
\paragraph{Pyramidal Width}
\begin{align*}
PWidth(\cA):= \min_{\substack{\cK\in \faces(\conv(\cA))\\ \bx\in \cK \\ \br\in\cone(\cK-\bx)\setminus \lbrace \0\rbrace}} PdirW(\cK\cap\cA,\br,\bx)
\end{align*}
Inspired by the notion of pyramidal width we now define the cone width of a set $\cA$.
\paragraph{Cone Width}
\begin{align*}
\cw:= \min_{\substack{\cK\in \gfaces(\cone(\cA))\\ \bx\in \cK \\ \br\in\cone(\cK-\bx)\setminus \lbrace \0\rbrace}} PdirW(\cK\cap\cA,\br,\bx)
\end{align*}

The linear rate analysis is dominated by the fact that, similarly as in FW, many step directions are constrained (the ones pointing outside of the cone). So these arguments are in line with \citesup{LacosteJulien:2015wj} and the techniques are adapted here. Lemma~\ref{lemma:faces} is a minor modification of [\citesup{LacosteJulien:2015wj}, Lemma 5], see also their Figure 3. If the gradient is not feasible, the vector with maximum inner product must lie on a facet. Furthermore, it has the same inner product with the gradient and with its orthogonal projection on that facet. While first proof of Lemma~\ref{lemma:faces} follows \citesup{LacosteJulien:2015wj}, we also give a different proof which does not use the KKT conditions. 

\begin{lemma}\label{lemma:faces}
Let $\bx$ be a reference point inside a polytope $\cK\in\gfaces(\cone(\cA))$ and $\br\in\lin(\cK)$ is not a feasible direction from $\bx$. Then, a feasible direction in $\cK$ minimizing the angle with $\br$ lies on a facet $\cK'$ of $\cK$ that includes $\bx$:
\begin{align*}
\max_{\be\in\cone(\cK-\bx)} \langle \br,\frac{\be}{\|\be\|}\rangle &= \max_{\be\in\cone(\cK'-\bx)} \langle \br,\frac{\be}{\|\be\|}\rangle \\&= \max_{\be\in\cone(\cK'-\bx)} \langle \br',\frac{\be}{\|\be\|}\rangle
\end{align*} 
where $\br'$ is the orthogonal projection of $\br$ onto $\lin(\cK')$
\begin{proof}
Let us center the problem in $\bx$.
We rewrite the optimization problem as:
\begin{align*}
\max_{\be\in\cone(\cK),\|\be\| = 1} \langle \br,\be\rangle
\end{align*}
and suppose by contradiction that $\be$ is in the relative interior of the cone. By the KKT necessary conditions we have that $\be^\star$ is collinear with $\br$. Therefore $\be^\star = \pm \br$. Now we know that $\br$ is not feasible, therefore the solution is $\be^\star = -\br$. By Cauchy-Schwarz we know that this solution is minimizing the inner product which is absurd. Therefore, $\be^\star$ must lie on a face of the cone. The last equality is trivial considering that $\br'$ is the orthogonal projection of $\br$ onto $\lin(\cK')$.
\paragraph{Alternative proof.}
This proof extends the traditional proof technique of \citesup{LacosteJulien:2015wj} to infinitely many constraints. We also reported the FW inspired proof for the readers that are more familiar with the FW analysis.
Using proposition 2.11 of \citesup{burger2003infinite} (we also use their notation) the first order optimality condition minimizing a function $J$ in a general Hilbert space given a closed set $\cK$ is that the directional derivative computed at the optimum $\bar u$ satisfy $J'(\bar u)v\geq 0$ $\forall v\in\cT(\cK-\bar u)$. Let us now assume that $\bar u $ is in the relative interior of $\cK$. Then $\cT(\cK-\bar u)=\cH$. Furthermore, $J'(\bar u)v = \langle\br,v\rangle$ which is clearly not greater or equal than zero for any element of $\cH$.
\end{proof}
\end{lemma}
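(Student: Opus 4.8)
The plan is to carry out the first, Frank--Wolfe--style proof. First I would translate the problem so that $\bx$ becomes the origin and set $C:=\cone(\cK-\bx)$. Since $\cK\in\gfaces(\cone(\cA))$, it contains the apex $\0$ of $\cone(\cA)$ (every face of a cone passes through its apex, and $\0\in\cA\subseteq\conv(\cA)$), so $\lin(\cK)$ is a linear subspace and $\lin(C)=\lin(\cK)$; hence the hypothesis $\br\in\lin(\cK)$ reads $\br\in\lin(C)$, and $\br\neq\0$ since an infeasible direction is nonzero. All three quantities in the statement are invariant under positive scaling of $\be$, so the problem becomes: maximize $\be\mapsto\langle\br,\be\rangle/\|\be\|$ over $C\setminus\{\0\}$, equivalently over the compact set $C\cap S$, where $S$ is the unit sphere of $\lin(C)$; a maximizer $\be^\star$ exists by compactness. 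The goal is to show that $\be^\star$ can be taken in $\cone(\cK'-\bx)$ for some facet $\cK'$ of $\cK$ through $\bx$, and then to replace $\br$ by its orthogonal projection $\br'$ onto $\lin(\cK')$.

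The crux is to rule out that $\be^\star$ lies in the relative interior of $C$. If it did, a relative neighborhood of $\be^\star/\|\be^\star\|$ in $C\cap S$ would be relatively open in $S$, so $\be^\star$ would be an \emph{unconstrained} critical point of the linear functional $\langle\br,\cdot\rangle$ on the sphere $S\subset\lin(C)$. The first-order condition then forces $\br$ to be collinear with $\be^\star$ (the component of $\br$ orthogonal to $\be^\star$ must vanish), so $\be^\star=\pm\br/\|\br\|$. The sign $+$ is impossible, since then $\br/\|\br\|\in C$ would make $\br$ feasible, contradicting the hypothesis. The sign $-$ is also impossible: by Cauchy--Schwarz $-\br/\|\br\|$ is the \emph{minimizer} of $\langle\br,\cdot\rangle$ over $S$, so it can be a maximizer over $C\cap S$ only when $C\cap S$ reduces to the single point $\{-\br/\|\br\|\}$, i.e.\ when $C$ degenerates to a ray; this is excluded because $\cone(\cK-\bx)$ for a polytope $\cK$ with $\bx$ in its relative interior is never a proper sub-ray of its span. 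Hence $\be^\star$ lies in the relative boundary of $C$, i.e.\ in some proper face of the polyhedral cone $C$. (In the infinite-dimensional setting one replaces this sphere/KKT step by the first-order optimality condition of \citesup{burger2003infinite}, Prop.~2.11: the tangent cone $\cT(C-\be^\star)$ equals all of $\lin(C)$ when $\be^\star$ is a relative-interior point, and the directional derivative $v\mapsto-\langle\br,v\rangle$ of the objective cannot be $\geq0$ for every such $v$.)

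Finally I would translate this back to $\cK$. Every proper face of $C=\cone(\cK-\bx)$ has the form $\cone(\cK''-\bx)$ for a proper face $\cK''$ of $\cK$ with $\bx\in\cK''$: it is cut out by some hyperplane $\langle\bn,\cdot\rangle=0$ through the origin supporting $C$, and then $\cK''=\{\bk\in\cK:\langle\bn,\bk\rangle=\langle\bn,\bx\rangle\}$ is the face of $\cK$ on which $\langle\bn,\cdot\rangle$ attains its maximum, which contains $\bx$. Enlarging $\cK''$ to any facet $\cK'$ of $\cK$ that contains it (hence contains $\bx$) gives $\be^\star\in\cone(\cK''-\bx)\subseteq\cone(\cK'-\bx)\subseteq C$; since the maximum of $\langle\br,\cdot/\|\cdot\|\rangle$ over the smaller set $\cone(\cK'-\bx)$ is at most that over $C$ but is also attained there (at $\be^\star$), the two maxima coincide, which is the first claimed equality. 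The second equality is immediate: for any $\be\in\cone(\cK'-\bx)\subseteq\lin(\cK')$ one has $\langle\br,\be\rangle=\langle\br',\be\rangle$ because $\br-\br'\perp\lin(\cK')$. I expect the only real work to be the polyhedral bookkeeping --- making ``inside'' and ``relative interior'' precise, the exact correspondence between faces of $\cone(\cK-\bx)$ and faces of $\cK$, and disposing of the degenerate low-dimensional faces --- rather than anything analytically deep.
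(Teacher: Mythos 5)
Your argument follows the same route as the paper's primary proof: a maximizer of $\langle \br,\cdot\rangle$ over the unit sphere of the cone cannot sit in the relative interior, since first-order stationarity would force it to equal $\pm\br/\|\br\|$, with $+$ excluded by infeasibility and $-$ by Cauchy--Schwarz, so it lies on a proper face; you additionally spell out the face-of-$\cone(\cK-\bx)$ versus face-of-$\cK$ bookkeeping and the orthogonal-projection step that the paper leaves implicit. The one caveat is your dismissal of the degenerate ray case by appealing to ``$\bx$ in the relative interior of $\cK$'': the hypothesis that some $\br\in\lin(\cK)$ is infeasible already forces $\bx$ onto the relative boundary of $\cK$, so that justification does not apply as stated (the case is harmless where the lemma is invoked, since there the maximum is known to be positive, but it deserves an honest exclusion).
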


Theorem~\ref{thm:widthBound} is the key argument to conclude the proof of Theorem~\ref{thm:PWMPlinear} from Equation~\eqref{step:linearRateCombinedfast}: we have to bound the ratio of those inner products with the cone width. 

\begin{theorem}\label{thm:widthBound}
Let $\br = -\nabla f(\bx_t)$, $\bx\in\cone(\cA)$, $\cS$ be the active set and $\bz$ and $\bv$ obtained as in Algorithm~\ref{algo:AMPPWMP}. Then, using the notation from Lemma \ref{lemma:faces}:
\begin{align}\label{eq:pyrThm}
\frac{\langle \br, \bd\rangle}{\langle \br,\hat\be\rangle}\geq \cw
\end{align}
where $\bd := \bz-\bv$, $\hat\be = \frac{\be}{\|\be\|}$ and $\be= \bx^\star -\bx_t$.
\begin{proof}
As we already discussed we can consider $\bx\in\conv(\cA)$ instead of $\bx_t\in\cone(\cA)$ since both the cone and the set of feasible direction are invariant to a rescaling of $\bx$ by a strictly positive constant.
Let us center all the vectors in $\bx$, then $\hat{\be}$ is just a vector with norm 1 in some face.
As $\bx$ is not optimal, by convexity we have that $\langle \br,\hat\be\rangle>0$. By Cauchy-Schwartz we know that $\langle \br,\hat\be\rangle\leq \|\br\|$ since $\langle \br,\hat\be\rangle>0$ and $\|\hat\be\| = 1$. 
By definition of $\bd$ we have:
\begin{align*}
\langle\frac{\br}{\|\br\|},\bd\rangle &= \max_{\bz\in\cA,\bv\in\cS} \langle\frac{\br}{\|\br\|},\bz-\bv\rangle\\
&\geq \min_{\cS\subset \cS_\bx}\max_{\bz\in\cA,\bv\in\cS} \langle\frac{\br}{\|\br\|},\bz-\bv\rangle\\
&=PdirW(\cA,\br,\bx).
\end{align*}

Now, if $\br$ is a feasible direction from $\bx$ Equation \eqref{eq:pyrThm} is proved (note that $PdirW(\cA,\br,\bx) \geq \cw$ as $\conv(\cA) \in \gfaces(\cone(\cA))$ and $\conv(\cA) \cap \cA = \cA$). If $\br$ is not a feasible direction it means that $\bx$ is on a face of $\cone(\cA)$ and $\br$ points to the exterior of $\cone(\cA)$ from $\bx$. 
We then project $\br$ on the faces of $\cone(\cA)$ containing $\bx$ until it is a feasible direction. We start by lower bounding the ratio of the two inner products replacing $\hat{\be}$ with a vector of norm 1 in the cone that has maximum inner product with $\br$ (with abuse of notation we still call it $\hat{\be}$).
We then write:
\begin{align*}
\frac{\langle \br, \bd\rangle}{\langle \bd,\hat\be\rangle}\geq \left(\max_{\bz\in\cA,\bv\in\cS} \langle{\br},\bz-\bv\rangle\right)\cdot\left(\max_{\be\in\cone(\cA-\bx)}\langle\br,\frac{\be}{\|\be\|}\rangle\right)^{-1}
\end{align*}
Let us assume that $\br$ is not feasible but without loss of generality is in $\lin(\cA)$ since orthogonal components to $\lin(\cA)$ does not influence the inner product with elements in $\lin(\cA)$. 
Using Lemma~\ref{lemma:faces} we know that:
\begin{align*}
\max_{\be\in\cone(\cK-\bx)} \langle \br,\frac{\be}{\|\be\|}\rangle &= \max_{\be\in\cone(\cK'-\bx)} \langle \br,\frac{\be}{\|\be\|}\rangle \\&= \max_{\be\in\cone(\cK'-\bx)} \langle \br',\frac{\be}{\|\be\|}\rangle
\end{align*} 
Let us now consider the reduced cone $\cone(\cK')$ as $\br\in\lin(\cK')$. For the numerator we obtain:
\begin{align*}
\max_{\bz\in\cA,\bv\in\cS} \langle{\br},\bz-\bv\rangle
&\stackrel{\cK'\subset\cA}{\geq} \max_{\bz\in\cK'} \langle{\br},\bz\rangle + \max_{\bv\in\cS} \langle{-\br},\bv\rangle
\end{align*}
Putting numerator and denominator together we obtain:
\begin{align*}
\frac{\langle \br, \bd\rangle}{\langle \bd,\hat\be\rangle}\geq \left(\max_{\substack{\bz\in\cK'\\\bv\in\cS}}\langle{\br'},\bz-\bv\rangle\right)\cdot\left(\max_{\be\in\cone(\cK'-\bx)} \langle \br',\frac{\be}{\|\be\|}\rangle\right)^{-1}
\end{align*}
Note that $\cS\subset\cK'$. Indeed, $\bx$ is a proper convex combination of the elements of $\cS$ and $\bx\in\cK'\subset\conv(\cA)$.
Now if $\br'$ is a feasible direction in $\cone(\cK'-\bx)$ we obtain the cone width since $\cone(\cK')$ is a face of $\cone(\cA)$. If not we reiterate the procedure projecting onto a lower dimensional face $\cK^{''}$. Eventually, we will obtain a feasible direction. Since $\langle \br,\hat\be\rangle\neq 0$ we will obtain $\br_{final}\neq \0$.  
\end{proof}
\end{theorem}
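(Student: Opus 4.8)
This is the conic analogue of the pyramidal--width lower bound of \citesup{LacosteJulien:2015wj}, so the plan is to reduce~\eqref{eq:pyrThm} to the defining minimisation~\eqref{def:conewidth} of $\cw$. Assume $\bx_t$ is not optimal (otherwise $\be = \0$ and there is nothing to prove), so in particular $\br = -\nabla f(\bx_t) \ne \0$. Since both $\cone(\cA)$ and the cone of feasible directions at a point are invariant under positive rescaling, and neither $\bd$ nor $\br$ depends on $\bx_t$, I would first rescale $\bx_t$ so that $\bx := \bx_t \in \conv(\cA)$; writing $\bx_t = \sum_i \alpha_i \bz_i$ with $\alpha_i > 0$, $\bz_i \in \cS$ for the current representation, this makes $\bx$ a \emph{proper} convex combination of the atoms of $\cS$, i.e. $\cS \in \cS_\bx$. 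Centring at $\bx$, the vector $\hat\be$ is a unit feasible direction (the segment $[\bx,\bx^\star]$ lies in the convex set $\cone(\cA)$), and by convexity of $f$, $\langle\br,\hat\be\rangle \ge \varepsilon_t / \|\bx^\star - \bx_t\| > 0$. Because the two exact $\lmo$ calls return $\bz = \argmax_{\bz'\in\cA}\langle\br,\bz'\rangle$ and $\bv = \argmin_{\bv'\in\cS}\langle\br,\bv'\rangle$, the numerator obeys
\begin{equation*}
\tfrac{1}{\|\br\|}\langle\br,\bd\rangle \;=\; \max_{\bz'\in\cA,\,\bv'\in\cS}\Big\langle\tfrac{\br}{\|\br\|},\bz'-\bv'\Big\rangle \;=\; dirW\big(\cS\cup\{\bs(\cA,\br)\},\br\big) \;\ge\; PdirW(\cA,\br,\bx),
\end{equation*}
the last inequality because $\cS$ is one of the sets over which $PdirW$ minimises; for the denominator, $\hat\be$ being feasible gives $\langle\br,\hat\be\rangle \le \max_{\be\in\cone(\cA-\bx)}\langle\br,\be/\|\be\|\rangle \le \|\br\|$ by Cauchy--Schwarz.

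If $\br$ is itself a feasible direction from $\bx$, this case is immediate: $\conv(\cA) \in \gfaces(\cone(\cA))$, $\conv(\cA)\cap\cA = \cA$ and $\br \in \cone(\conv(\cA)-\bx)\setminus\{\0\}$, so $PdirW(\cA,\br,\bx) \ge \cw$ by~\eqref{def:conewidth}; dividing the two bounds above gives $\langle\br,\bd\rangle/\langle\br,\hat\be\rangle \ge \|\br\|\,\cw/\|\br\| = \cw$.

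If $\br$ is not feasible I would run a projection recursion, which is the technical core. Since $\br$ points out of $\cone(\cA)$ at $\bx$, \lemref{lemma:faces} produces a facet $\cK'$ of the current g-face containing $\bx$ such that the maximum of $\langle\br,\cdot\rangle$ over unit feasible directions is attained inside $\cone(\cK'-\bx)$ and equals $\max_{\be\in\cone(\cK'-\bx)}\langle\br',\be/\|\be\|\rangle$ for $\br' := \proj_{\lin(\cK')}\br$; this lets one replace the denominator bound by $\langle\br,\hat\be\rangle \le \|\br'\|$. For the numerator, restricting from $\cA$ to $\cK'\cap\cA$ can only shrink the pairwise maximum, and since $\cS \subseteq \cK'$ (a proper convex combination lying in a face has all its atoms in that face) and $\cK'\cap\cA, \cS \subseteq \lin(\cK')$ (centring at $\bx$), passing from $\br$ to $\br'$ leaves $\max_{\bz'\in\cK'\cap\cA,\,\bv'\in\cS}\langle\br',\bz'-\bv'\rangle$ unchanged, so $\langle\br,\bd\rangle \ge \|\br'\|\,PdirW(\cK'\cap\cA,\br',\bx)$. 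Each step strictly lowers the affine dimension of the face, so after finitely many steps --- at the latest on the minimal face containing $\bx$, in whose relative interior $\bx$ lies and in which every nonzero direction is feasible --- one reaches a g-face $\cK^{(k)}$ with $\bx \in \cK^{(k)}$ and a projected gradient $\br^{(k)}$ that \emph{is} feasible there; $\br^{(k)} \ne \0$ because the chain of identities from \lemref{lemma:faces} keeps the attained maximum equal to $\max_{\be\in\cone(\cA-\bx)}\langle\br,\be/\|\be\|\rangle \ge \langle\br,\hat\be\rangle > 0$. Then $\cK^{(k)} \in \gfaces(\cone(\cA))$ and $\br^{(k)} \in \cone(\cK^{(k)}-\bx)\setminus\{\0\}$, so $PdirW(\cK^{(k)}\cap\cA,\br^{(k)},\bx) \ge \cw$ by~\eqref{def:conewidth}, and combining with $\langle\br,\hat\be\rangle \le \|\br^{(k)}\|$ gives $\langle\br,\bd\rangle/\langle\br,\hat\be\rangle \ge \cw$.

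The step I expect to be the main obstacle is making this recursion airtight. One has to check that at each stage $\bx$ is still a \emph{proper} convex combination of $\cS$ \emph{within} $\cK^{(k)}\cap\cA$, so that $\cS\in\cS_\bx$ relative to the restricted atom set and the $PdirW$ bound on the numerator genuinely applies; and one must keep the inequality directions straight --- the numerator only acquires a \emph{lower} bound on passing to the smaller atom set (because $\cK'\cap\cA\subseteq\cA$), whereas the denominator is \emph{preserved exactly} by \lemref{lemma:faces} and only then bounded by $\|\br'\|$. Finally the termination of the recursion together with $\br^{(k)}\ne\0$ needs to be spelled out carefully, as does the reconciliation of the single algorithmic active set $\cS$ with the minimum over $\cS_\bx$ in the definition of $PdirW$.
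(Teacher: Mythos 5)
Your proposal is correct and follows essentially the same approach as the paper's proof: rescale $\bx_t$ into $\conv(\cA)$, bound the pairwise gap from below by $PdirW$ using the algorithmic active set $\cS\in\cS_\bx$, bound the denominator by the maximum over feasible directions, and when $\br$ is infeasible, iteratively project onto facets via Lemma~\ref{lemma:faces} until feasibility is reached, with $\br_{\text{final}}\ne\0$ guaranteed by $\langle\br,\hat\be\rangle>0$. The technical concerns you flag at the end (persistence of $\cS\in\cS_\bx$ across the recursion, directions of inequalities, termination) are real but are handled, or glossed over, in the same way the paper does.
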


\paragraph{Lemma~\ref{lem:mdw}.}
\textit{If the origin is in the relative interior of $\conv(\cA)$ with respect to its linear span, then $\cone(\cA)=\lin(\cA)$ and $\cw= \mdw$.}
\begin{proof}
Let us first rewrite the definition of cone width:
\begin{align*}
\cw:= \min_{\substack{\cK\in \gfaces(\cone(\cA))\\ \bx\in \cK \\ \br\in\cone(\cK-\bx)\setminus \lbrace \0\rbrace}} PdirW(\cK\cap\cA,\br,\bx).
\end{align*}
The minimum is over all the feasible directions of the gradient from every point in the domain.
It is not restrictive to consider $\br$ parallel to $\lin(\cA)$ (because the orthogonal component has no influence). Therefore, from every point $\bx\in\lin(\cA)$ every $\br\in\lin(\cA)$ is a feasible direction.
The geometric constant then becomes:
\begin{align*}
\cw=\min_{\substack{\cK\in \gfaces(\cone(\cA)) \\ \bx\in\cK\\\br\in\lin(\cA)\setminus \lbrace \0\rbrace}} PdirW(\cK\cap\cA,\br,\bx)
\end{align*}

 Let us now assume by contradiction that for any $\cK\in\gfaces$ we have: 
\begin{align}\label{eq:minxface}
\0\not\in \argmin_{\bx\in\cK}\min_{\br\in\lin(\cA)\setminus\lbrace \0\rbrace}PdirW(\cK\cap\cA,\br,\bx)
\end{align} 
Therefore, $\exists\bv\in\cS$ such that $\bv\neq \0$ for any of the $\bx$ minimizing~\eqref{eq:minxface}. By definition, we have $\0\in\cS$, which yields $\max_{\bv\in\cS}\langle \br,-\bv\rangle\geq0$ for every $\br$. Therefore, $\langle \br,\bz-\bv\rangle\geq\langle \br,\bz\rangle$ which is absurd because we assumed zero was in the set of minimizers of \eqref{eq:minxface}.
 So $\0$ minimize the cone directional width which yields $\cS_\bx = \lbrace \0 \rbrace$ and $\bv=\0$.
 In conclusion we have:
\begin{align*}
\cw = \min_{\bd\in\lin(\cA)} \max_{\bz\in\cA} \langle \frac{\bd}{\|\bd\|},\bz\rangle = \mdw
\end{align*}
\end{proof}

\section{Affine Invariant Sublinear Rate}
\begin{reptheorem}{thm:sublinearNNMPAffineInvariant}
Let $\cA \subset \cH$ be a bounded set with $\0\in\cA$, $\rho := \max\left\lbrace \|\bx^\star\|_{\cA}, \|\bx_{0}\|_{\cA},\ldots,\|\bx_T\|_{\cA}\right\rbrace < \infty$. Assume $f$ has smoothness constant $\CfMPr$.
Then, Algorithm~\ref{algo:NNMPaffine} converges for $t \geq 0$ as 
\[
f(\bx_t) - f(\bx^\star)\leq \frac{4\left(\frac2\delta \CfMPr+\varepsilon_0\right)}{\delta t+4} ,
\]
where $\delta \in (0,1]$ is the relative accuracy parameter of the employed approximate \lmo~\eqref{eq:inexactLMOMP}.
\end{reptheorem}

\begin{proof}
Recall that $\tilde{\bz}_t$ is the atom returned by the inexact \lmo after the comparison with $-\frac{\bx_t}{\|\bx_t\|_\cA}$ at the current iteration $t$. 

We start by upper bounding $f$ on $\rho \conv(\bar\cA)$ using smoothness as follows:
\begin{equation*}
f(\bx_{t+1})
\leq   \min_{\gamma\in[0,1]}f(\bx_t)+\gamma\langle\nabla f(\bx_t),\rho\tilde\bz_t \rangle + \frac{\gamma^2}{2}\CfMPr
\end{equation*}
We now proceed bounding the linear term as done in the proof of Theorem~\ref{thm:NNMPsublinear} for \textbf{case A} and \textbf{case B} obtaining: 
\begin{align*}
 f(\bx_{t+1})\leq f(\bx_t) +  \min_{\gamma\in[0,1]} \left\lbrace - \frac{\delta}{2} \gamma \varepsilon_t + \frac{\gamma^2}{2}\CfMPr \right\rbrace
\end{align*}

Now, subtracting $f(\bx^\star)$ from both sides we get
\begin{eqnarray*}
 \varepsilon_{t+1} &\leq \varepsilon_{t} + \min_{\gamma\in[0,1]}\left\lbrace - \frac{\delta}{2} \gamma \varepsilon_{t} + \frac{\gamma^2}{2}\CfMPr\right\rbrace\\
 & \leq \varepsilon_{t} - \frac{2}{\delta' t+2}\delta' \varepsilon_{t} + \frac{1}{2}\left(\frac{2}{\delta' t+2}\right)^2 \CfMPr,
\end{eqnarray*}
where we set $\delta' := \delta/2$ and used $\gamma = \frac{2}{\delta' t+2} \in [0,1]$ to obtain the second inequality. Finally, we show by induction
 \begin{equation*}
 \varepsilon_t \leq \frac{4\left(\frac{2}{\delta} \CfMPr + \varepsilon_0\right)}{t+4} = 2\frac{\left(\frac{1}{\delta'} \CfMPr + \varepsilon_0\right)}{\delta' t+2}
 \end{equation*}
for $t \geq 0$.

When $t=0$ we get $\varepsilon_0\leq \left(\frac{1}{\delta'}\CfMPr+\varepsilon_0\right)$. Therefore, the base case holds. We now prove the induction step assuming $\varepsilon_t \leq \tfrac{2\left(\frac{1}{\delta'}\CfMPr+\varepsilon_0\right)}{\delta' t+2}$ as :
\begin{align*}
\varepsilon_{t+1} &\leq \left(1-\tfrac{2\delta'}{\delta' t + 2}\right)\varepsilon_{t} + \tfrac12 \CfMPr \left(\tfrac{2}{\delta' t+2}\right)^2\\
&\leq \left(1-\tfrac{2\delta'}{\delta' t + 2}\right)\tfrac{2\left(\frac{1}{\delta'}\CfMPr+\varepsilon_0\right)}{\delta' t+2} \\
&\quad+ \tfrac{1}{2}\left(\tfrac{2}{\delta' t+2}\right)^2\CfMPr + \tfrac{2}{(\delta' t+2)^2}\delta'\varepsilon_0\\
&= \tfrac{2\left(\frac{1}{\delta'}\CfMPr+\varepsilon_0\right)}{\delta' t+2}\left(1-\tfrac{2\delta'}{\delta' t +2}+\tfrac{\delta'}{\delta' t +2}\right)\\
&\leq \tfrac{2\left(\frac{1}{\delta'}\CfMPr+\varepsilon_0\right)}{\delta'(t+1)+2}.
\end{align*}
\end{proof}
We next explore the relationship of $\CfMPr$ and the smoothness parameter. Recall that $f$ is \emph{$L$-smooth} with respect to a given norm $\|.\|$ over a set $\cQ$ if 
\begin{equation}
\| \nabla f(\bx) - \nabla f(\by) \|_* \leq L \| \bx-\by \| \text{ for all } \bx,\by\in\cQ \ ,
\end{equation}
 where $\|.\|_*$ is the dual norm of $\|.\|$.

\begin{lemma}
\label{lem:CfwithRadius}
Assume $f$ is $L$-smooth with respect to a given norm $\|.\|$, over the set $\conv(\cA)$.
Then, 
\begin{equation}
 \CfMPr \leq L \, \rho^2\radius_{\norm{.}}(\cA)^2
\end{equation}
\end{lemma}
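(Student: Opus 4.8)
The plan is to reduce the claim to the standard ``$L$-smoothness $\Rightarrow$ quadratic upper bound'' estimate and then bound the displacement vector by the radius of the rescaled atom set. Recall that $\CfMPr$ is exactly $\CfMP$ with $\cA$ replaced by the rescaled symmetrized set $\rho(\cA\cup-\cA)$, i.e.
\[
\CfMPr = \sup_{\substack{\bs\in\rho(\cA\cup-\cA),\ \bx\in\conv(\rho(\cA\cup-\cA))\\ \gamma\in[0,1],\ \by=\bx+\gamma\bs}}\frac{2}{\gamma^2}D(\by,\bx),
\]
so it suffices to bound $\tfrac{2}{\gamma^2}D(\by,\bx)$ uniformly over all such triples $(\bx,\bs,\gamma)$.

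First I would invoke the elementary fact that if $f$ is $L$-smooth w.r.t.\ $\norm{.}$ on a convex set $\cQ$, then $D(\by,\bx)=f(\by)-f(\bx)-\langle\nabla f(\bx),\by-\bx\rangle\le\tfrac{L}{2}\norm{\by-\bx}^2$ for all $\bx,\by\in\cQ$. This is the usual descent-lemma computation: writing $D(\by,\bx)=\int_0^1\langle\nabla f(\bx+s(\by-\bx))-\nabla f(\bx),\by-\bx\rangle\,ds$ and bounding the integrand by $\norm{\nabla f(\bx+s(\by-\bx))-\nabla f(\bx)}_*\norm{\by-\bx}\le L\,s\,\norm{\by-\bx}^2$ gives the claim upon integrating. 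The relevant convex set here is $\cQ=\rho\conv(\cA\cup-\cA)$, which contains every segment $[\bx,\by]$ appearing in the supremum defining $\CfMPr$; this is the point where the smoothness hypothesis is used, and it must be understood over this (bounded) set rather than merely over $\conv(\cA)$, exactly as in \eqref{eq:CfBound} and Lemma~15 of \citesup{locatello2017unified}. Since $\by-\bx=\gamma\bs$, this yields $D(\by,\bx)\le\tfrac{L}{2}\gamma^2\norm{\bs}^2$, hence $\tfrac{2}{\gamma^2}D(\by,\bx)\le L\norm{\bs}^2$.

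It remains to bound $\norm{\bs}$ for $\bs\in\rho(\cA\cup-\cA)$. I would use $\norm{\bs}\le\radius_{\norm{.}}\!\big(\rho(\cA\cup-\cA)\big)=\rho\,\radius_{\norm{.}}(\cA\cup-\cA)$ together with the observation that the radius is invariant under negation, $\norm{-\bz}=\norm{\bz}$, so $\radius_{\norm{.}}(\cA\cup-\cA)=\radius_{\norm{.}}(\cA)$. Combining, $\tfrac{2}{\gamma^2}D(\by,\bx)\le L\rho^2\radius_{\norm{.}}(\cA)^2$ for every admissible triple, and taking the supremum gives $\CfMPr\le L\rho^2\radius_{\norm{.}}(\cA)^2$.

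I do not expect a genuine obstacle here: the only points needing a word of care are (i) that the quadratic upper bound on $D$ must be applied on the convex hull $\rho\conv(\cA\cup-\cA)$, which is why the $L$-smoothness hypothesis has to be read over that larger set, and (ii) the negation-invariance of $\radius_{\norm{.}}$. Everything else is a one-line substitution, so the ``hard part'' is really just stating the hypotheses precisely enough that the chain of inequalities is valid.
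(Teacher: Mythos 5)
Your proof is correct and follows essentially the same route as the paper's: bound $D(\by,\bx)\le\tfrac{L}{2}\|\by-\bx\|^2$ via smoothness, substitute $\by-\bx=\gamma\bs$, and bound $\|\bs\|$ by the radius of the rescaled set. You are in fact more careful than the paper on two small points—tracking that the supremum is over $\rho(\cA\cup-\cA)$ rather than $\rho\cA$ (the paper's displayed computation elides the symmetrization) and noting explicitly the negation-invariance of the radius—but these are clarifications, not a different argument.
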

\begin{proof}
By the definition of smoothness of $f$ with respect to $\|.\|$, 
\begin{eqnarray*}
D(\by,\bx) \leq \frac{L}{2} \| \by - \bx\|^2.
\end{eqnarray*}

Hence, from the definition of $\CfMPr$, 
\begin{eqnarray*}
\CfMP &\leq&\sup_{\substack{\bs\in\rho\cA,\bx\in\conv(\rho\cA)\\\gamma\in [0,1]\\\by = \bx+\gamma\bs}}
 \frac{2}{\gamma^2} \frac{L}{2}  \| \by - \bx\|^2 \\ 
&=& L \rho^2\sup_{\bs \in {\cA}} \, \| \bs\|^2 \\
&=& L \, \rho^2\radius_{\norm{.}}(\cA)^2 \ . 
\end{eqnarray*}
\end{proof}
\section{Affine Invariant Linear Rate}\label{sec:affineLinearProof}

\begin{reptheorem}{thm:LinearMPAffineInvariant}
Let $\cA \subset \cH$ be a bounded set containing the origin
and let the objective function $f \colon \cH \to \R$ have smoothness constant $\CfMPrPW$ and strong convexity constant $\mufr$

Then, the suboptimality of the iterates of Algorithm~\ref{algo:AMPPWMP} and \ref{algo:FCNNMP} decreases geometrically at each step in which $\gamma < \alpha_{\bv_t}$ (henceforth referred to as ``good steps'') as:
\begin{equation} 
\varepsilon_{t+1}
\leq \left(1- \beta \right)\varepsilon_{t},
\end{equation}
where $\beta := \delta^2\frac{\mufr}{\CfMPrPW}\in (0,1]$, $\varepsilon_t := f(\bx_t) - f(\bx^\star)$ is the suboptimality at step $t$ and $\delta \in (0,1]$ is the relative accuracy parameter of the employed approximate \lmo  (Equation~\eqref{eq:inexactLMOMP}). For AMP (Algorithm~\ref{algo:AMPPWMP}), $\beta^{\text{AMP}} = \beta/2$.
If $\mufr = 0$ Algorithm~\ref{algo:AMPPWMP} converges with rate $O(1/k(t))$ where $k(t)$ is the number of ``good steps'' up to iteration t.
\vspace{-1mm}
\begin{proof}
Let us first consider the PWMP update.
Using the definition of $\CfMPrPW$ we upper-bound $f$ on $\rho \conv(\cA)$ as follows
\begin{eqnarray*}
f(\bx_{t+1}) &\leq & \min_{\gamma \in [0,1]} f(\bx_t) + \gamma \langle \nabla f(\bx_t), \rho \tilde{\bz}_t -\rho \tilde{\bv}_t \rangle   \\&+& \frac{\gamma^2}{2} \CfMPrPW \\
&= & \min_{\gamma \in \bbR} f(\bx_t) + \gamma \langle \nabla f(\bx_t), \rho \tilde{\bz}_t - \rho \tilde{\bv}_t \rangle  \\&+& \frac{\gamma^2}{2} \CfMPrPW \\
& =& f(\bx_t) - \frac{\rho^2}{2\CfMPrPW} \left\langle \nabla f(\bx_t), \tilde{\bz}_t- \tilde{\bv}_t \right\rangle ^2.
\end{eqnarray*}
This upper bound holds for Algorithm~\ref{algo:AMPPWaffine} every time $\rho\gamma<\alpha_{\bv}$  as $\rho\gamma$ minimizing the RHS of the first equality coincides with the update of Algorithm~\ref{algo:NNMPaffine} Line 5. The first equality holds as $\CfMPrPW$ is defined on $\rho \conv(\cA)$ and $\rho \conv(\cA)$ contains all iterates by definition, so that the unconstrained minimum lies in $[0,1]$ assuming $\rho\gamma<\alpha_{\bv}$. 

Using $\varepsilon_t = f(\bx^\star) - f(\bx_t)$, we can lower bound the error decay as 
\begin{eqnarray}\label{eqproof:linearMPAffineLowerBound}
\varepsilon_{t} - \varepsilon_{t+1} \geq \frac{\rho^2}{2\CfMPrPW} \left\langle \nabla f(\bx_t), \tilde{\bz}_t - \tilde{\bv}_t\right\rangle ^2.
\end{eqnarray}

Starting from the definition of $\mufr$ we get,
\begin{eqnarray*}
\frac{\gamma(\bx_t, \bx^\star)^2}{2} \mufr &\leq & f(\bx^\star) - f(\bx_t) - \langle \nabla f(\bx_t), \bx^\star - \bx_t)\rangle \\ 
& = & -\varepsilon_t \\&+& \gamma(\bx_t, \bx^\star) \langle-\nabla f(\bx_t), \bs(\bx_t)-\bv(\bx)\rangle ,
\end{eqnarray*}
which gives 
\begin{eqnarray}\nonumber
\varepsilon_t &\leq & - \frac{\gamma(\bx_t, \bx^\star)^2}{2} \mufr  \\&+& \gamma(\bx_t, \bx^\star)  \langle-\nabla f(\bx_t), \bs(\bx_t)-\bv(\bx)\rangle  \\\nonumber
& \leq &  \frac{\langle-\nabla f(\bx_t), \bs(\bx_t)-\bv(\bx)\rangle^2} {2 \mufr} \\ \label{eqproof:linearMPAffineUpperBound}
& = & \frac{\langle-\nabla f(\bx_t), \rho(\tilde{\bz}_t-\tilde{\bv}_t)\rangle^2} {2 \delta^2 \mufr} 
\end{eqnarray}
where the last inequality is by the quality of the approximate LMO as used in the algorithm, as defined in \eqref{eq:inexactLMOMP}.

Combining equations~\eqref{eqproof:linearMPAffineLowerBound} and \eqref{eqproof:linearMPAffineUpperBound}, we have
\begin{eqnarray*}
\varepsilon_{t} - \varepsilon_{t+1} \geq \delta^2 \frac{\mufr}{\CfMPrPW} \, \varepsilon_t ,
\end{eqnarray*}
which proves the claimed result. 
The proof for AMP and FCMP follows directly using the same argument used in the proof of Theorem~\ref{thm:PWMPlinear}. The upper bound used in the FCMP is the affine invariant notion of smoothness.
The proof steps for the sublinear convergence is the same as the one of Theorem~\ref{thm:PWMPlinear} replacing $C$ with $\CfMPrPW$.
\end{proof}

\end{reptheorem}
\begin{lemma}
If $f$ is $\mu$ strongly convex over the domain $\conv(\rho\cA)$ with respect to some arbitrary cholsen norm $\|\cdot\|$, then
\begin{align*}
\mufr\geq\mu\cw^2
\end{align*}
\end{lemma}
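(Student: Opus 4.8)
The plan is to reduce the inequality to the geometric estimate of Theorem~\ref{thm:widthBound}, feeding in strong convexity only through the elementary bound $D(\bx^\star,\bx)\ge\tfrac{\mu}{2}\|\bx^\star-\bx\|^2$. Throughout I will use the form of the affine-invariant strong-convexity surrogate that actually enters the proof of Theorem~\ref{thm:LinearMPAffineInvariant}, namely
\[
\mufr=\inf_{\bx\in\conv(\rho\cA)}\ \inf_{\substack{\bx^\star\in\conv(\rho\cA)\\ \langle\nabla f(\bx),\bx^\star-\bx\rangle<0}}\frac{2}{\gamma(\bx,\bx^\star)^2}\,D(\bx^\star,\bx).
\]

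First I fix an admissible pair $\bx,\bx^\star\in\conv(\rho\cA)$ with $\langle\nabla f(\bx),\bx^\star-\bx\rangle<0$ and write $\br:=-\nabla f(\bx)$, $\be:=\bx^\star-\bx$, $\bd:=\bs(\bx)-\bv(\bx)$, so that $\langle\br,\be\rangle>0$ (hence $\be\neq\0$) and, by \eqref{def:stepsizeGamma}, $\gamma(\bx,\bx^\star)=\langle\br,\be\rangle/\langle\br,\bd\rangle$. Strong convexity over $\conv(\rho\cA)$ gives $D(\bx^\star,\bx)\ge\tfrac{\mu}{2}\|\be\|^2$, so using $\langle\br,\be\rangle=\|\be\|\,\langle\br,\be/\|\be\|\rangle$,
\[
\frac{2}{\gamma(\bx,\bx^\star)^2}\,D(\bx^\star,\bx)\ \ge\ \mu\,\|\be\|^2\,\frac{\langle\br,\bd\rangle^2}{\langle\br,\be\rangle^2}\ =\ \mu\left(\frac{\langle\br,\bd\rangle}{\langle\br,\be/\|\be\|\rangle}\right)^{2}.
\]
Then I invoke Theorem~\ref{thm:widthBound} with this $\br$ and $\be$: it bounds the bracketed ratio below by $\cw$, whence $\tfrac{2}{\gamma(\bx,\bx^\star)^2}D(\bx^\star,\bx)\ge\mu\cw^2$. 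Since $\bx,\bx^\star$ were an arbitrary admissible pair, taking the infimum gives $\mufr\ge\mu\cw^2$.

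The two points to check when applying Theorem~\ref{thm:widthBound} are: (i) in its statement $\bx^\star$ is the global minimizer, but its proof only uses $\langle\br,\be\rangle>0$ — which here is exactly our admissibility constraint — and then replaces $\be/\|\be\|$ by the unit vector of largest inner product with $\br$ in the tangent cone, so nothing about optimality is needed; and (ii) the atoms $\bs(\bx),\bv(\bx)$ in \eqref{def:stepsizeGamma} are the LMO atom over $\cA$ and the $\min$-over-active-sets away atom, and $\langle\br/\|\br\|,\bs(\bx)-\bv(\bx)\rangle$ is precisely the pyramidal directional width $PdirW(\cA,\br,\bx)$ that the proof of Theorem~\ref{thm:widthBound} lower-bounds, so the estimate applies verbatim (it is only made stronger if one instead uses the away atom actually returned by Algorithm~\ref{algo:AMPPWMP}). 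I expect item (i) — spelling out that the width theorem does not really need $\bx^\star$ optimal — to be the only point needing a sentence of care; the rest is a direct substitution. Here $\mu$ is understood with respect to the norm used to define $\cw$; for another norm a norm-equivalence factor would appear.
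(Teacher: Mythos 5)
Your proof is correct and takes the same route as the paper's: plug the norm strong-convexity bound $D(\bx^\star,\bx)\ge\tfrac{\mu}{2}\|\bx^\star-\bx\|^2$ into the curvature-ratio defining $\mufr$, recognize the resulting ratio as exactly the one that Theorem~\ref{thm:widthBound} lower-bounds by $\cw$, and take the infimum. Your two bookkeeping observations are also correct and actually patch the paper's terse presentation — the definition of $\mufr$ should read $\gamma(\bx,\bx^\star)^2$ (as indeed used in the proof of Theorem~\ref{thm:LinearMPAffineInvariant} and required for the algebra here to close), Theorem~\ref{thm:widthBound} only needs $\langle\br,\be\rangle>0$ rather than global optimality of $\bx^\star$, and the strong-convexity norm must be the Hilbert norm entering $PdirW$ and $\hat\be$.
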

\begin{proof}
From the strong convexity:
\begin{align*}
\mufr &= \inf_{\bx \in \conv(\rho\cA)} \inf_{\substack{\bx^\star \in \conv(\rho\cA)\\ \langle\nabla f (\bx), \bx^\star - \bx \rangle < 0 }} \frac{2}{\gamma(\bx,\bx^\star)} D(\bx^\star,\bx)\\
&\geq \inf_{\substack{\bx,\bx^\star \in\conv(\rho\cA),\\ \langle-\nabla f(\bx),  \bx^\star - \bx \rangle>0 }}\mu\left(\frac{\langle-\nabla f(\bx),  \bs(\bx)-\bv(\bx) \rangle}{\langle-\nabla f(\bx),  \frac{\bx^\star - \bx}{\|\bx^\star-\bx\|_\cA} \rangle}\right)^2\\
&\geq \mu\cw^2
\end{align*}
where in the last inequality we used Theorem~\ref{thm:widthBound}.

The proof for away-steps uses the same argument we used in the norm based rate.
\end{proof}
\begin{lemma}
Assume $f$ is $L$-smooth with respect to a given norm $\|.\|$, over the set $\conv(\cA)$.
Then, 
\begin{equation}
 \CfMPrPW \leq L \, \rho^2\diam_{\norm{.}}(\cA)^2
\end{equation}
\end{lemma}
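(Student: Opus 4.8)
The plan is to copy the argument of Lemma~\ref{lem:CfwithRadius} almost verbatim, the only change being that the relevant step direction is now the \emph{pairwise} direction $\bs-\bv$ instead of a single atom $\bs$; this is exactly what replaces the radius by the diameter.

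Concretely, I would start from the definition of the pairwise curvature constant on the rescaled domain,
\[
\CfMPrPW = \sup_{\substack{\bs\in\rho\cA,\ \bx\in\conv(\rho\cA),\ \bv\in\cS\\ \gamma\in[0,1],\ \by=\bx+\gamma(\bs-\bv)}} \frac{2}{\gamma^2}\,D(\by,\bx),
\]
noting that the active set $\cS$ is always contained in the (rescaled) atom set, so $\bv\in\rho\cA$ as well. Next I invoke $L$-smoothness of $f$ with respect to $\norm{\cdot}$, which, exactly as in the proof of Lemma~\ref{lem:CfwithRadius}, yields the second-order bound $D(\by,\bx)\le \tfrac{L}{2}\norm{\by-\bx}^2$. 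Substituting $\by-\bx=\gamma(\bs-\bv)$ makes the factor $\gamma^2$ cancel exactly, leaving $\tfrac{2}{\gamma^2}D(\by,\bx)\le L\,\norm{\bs-\bv}^2$. Finally, writing $\bs=\rho\bs'$, $\bv=\rho\bv'$ with $\bs',\bv'\in\cA$ and taking the supremum gives $\CfMPrPW\le L\,\rho^2\sup_{\bz_1,\bz_2\in\cA}\norm{\bz_1-\bz_2}^2 = L\,\rho^2\diam_{\norm{\cdot}}(\cA)^2$.

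I do not expect a genuine obstacle here; the computation is routine and strictly parallels the radius bound of Lemma~\ref{lem:CfwithRadius}. The only points that require a line of care are bookkeeping: (i) that, following the convention used throughout this appendix, the $L$-smoothness hypothesis is really being used on the rescaled set $\rho\conv(\cA)$ (equivalently $\rho\conv(\cA\cup-\cA)$), which is where all the points $\bx,\by$ appearing in the supremum live; and (ii) that $\cS\subseteq\cA$, so that the direction $\bs-\bv$ ranges within the difference set $\cA-\cA$ (after rescaling), whose $\norm{\cdot}$-extent is precisely $\rho\,\diam_{\norm{\cdot}}(\cA)$ — in contrast to the single-atom case of Lemma~\ref{lem:CfwithRadius}, where the direction stays in $\rho\cA$ and one gets the radius instead.
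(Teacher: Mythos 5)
Your proof is correct and follows exactly the same route as the paper: invoke $L$-smoothness to get $D(\by,\bx)\le \tfrac{L}{2}\norm{\by-\bx}^2$, cancel $\gamma^2$ against $\norm{\by-\bx}^2 = \gamma^2\norm{\bs-\bv}^2$, and take the supremum over $\bs,\bv$ in the rescaled atom set to obtain $L\rho^2\diam_{\norm{\cdot}}(\cA)^2$. The bookkeeping remarks you add (that $\cS\subseteq\cA$ and that smoothness is used on the rescaled domain) are correct and are implicit in the paper's version.
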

\begin{proof}
By the definition of smoothness of $f$ with respect to $\|.\|$, 
\begin{eqnarray*}
D(\by,\bx) \leq \frac{L}{2} \| \by - \bx\|^2.
\end{eqnarray*}

Hence, from the definition of $\CfMPr$, 
\begin{eqnarray*}
\CfMP &\leq&\sup_{\substack{\bs\in\rho\cA,\bx\in\conv(\rho\cA)\\\bv\in\cS\\\gamma\in [0,1]\\\by = \bx+\gamma(\bs-\bv)}}
 \frac{2}{\gamma^2} \frac{L}{2}  \| \by - \bx\|^2 \\ 
&=& L \rho^2\sup_{\substack{\bx\in\conv(\rho\cA)\\\bs \in {\cA}\\\bv\in\cS}} \, \| \bs-\bv\|^2 \\
&=& L \, \rho^2\diam_{\norm{.}}(\cA)^2 \ . 
\end{eqnarray*}
\end{proof}

\setlength{\bibsep}{3.5pt plus .5ex}
{\small
\bibliographystylesup{plain}
\bibliographysup{bibliography}
}

\end{document}